\documentclass{article}

\usepackage[nonatbib,final]{neurips_2024}

\usepackage[utf8]{inputenc} %
\usepackage[T1]{fontenc}    %
\usepackage{amsmath}
\usepackage{amssymb}
\usepackage{mathtools}
\usepackage{amsthm}
\usepackage{enumitem}
\usepackage{algorithm}
\usepackage{algpseudocode}
\usepackage{microtype}
\usepackage{graphicx}
\usepackage{subfigure}
\usepackage{booktabs}
\usepackage[colorlinks,linkcolor={red!80!black},citecolor={blue},allbordercolors={1 1 1},urlcolor={blue!80!black},hypertexnames=false]{hyperref}
\usepackage[capitalize,noabbrev]{cleveref}
\usepackage{setspace}
\usepackage[dvipsnames]{xcolor}
\usepackage{tikz}
\usetikzlibrary{positioning}
\usepackage{natbib}
\bibliographystyle{abbrvnat}
\usepackage{svg}

\theoremstyle{plain}
\newtheorem{theorem}{Theorem}[section]
\newtheorem{proposition}[theorem]{Proposition}
\newtheorem{lemma}[theorem]{Lemma}

\theoremstyle{definition}
\newtheorem{definition}[theorem]{Definition}

\theoremstyle{remark}
\newtheorem{remark}[theorem]{Remark}

\newcommand{\inOBJ}{L_{in}}
\newcommand{\outOBJ}{L_{out}}
\newcommand{\pointWinOBJ}{\ell_{in}}
\newcommand{\pointWoutOBJ}{\ell_{out}}
\newcommand{\dualOBJ}{L_{adj}}
\newcommand{\innerpred}{h}
\newcommand{\adjoint}{a}
\newcommand{\inNNparam}{\theta}
\newcommand{\adjNNparam}{\xi}
\newcommand{\outvar}{\omega}
\newcommand{\outgrad}{\nabla\mathcal{F}(\outvar)}
\newcommand{\outgradn}{\nabla\mathcal{F}({\outvar}_n)}
\newcommand{\outgradhat}{\hat{g}({\outvar})}
\newcommand{\outgradmap}{\hat{g}}
\newcommand{\outstep}{\eta}

\newcommand{\adjointMap}{\nu}
\newcommand{\innerMap}{\tau}
\newcommand{\empinOBJ}{\hat{L}_{in}}
\newcommand{\empoutOBJ}{\hat{L}_{out}}
\newcommand{\empadjOBJ}{\hat{L}_{adj}}
\newcommand{\tildadjOBJ}{\tilde{L}_{adj}}

\newcommand{\Jacobian}{\partial_{\outvar}\innerpred^{\star}_{\outvar}}
\newcommand{\Hessian}{C}
\newcommand{\crossDeriv}{B}
\newcommand{\uppergrad}{d}

\newcommand{\nbItersInnerSol}{M}
\newcommand{\nbItersOuterSol}{N}
\newcommand{\nbItersAdjSol}{K}
\newcommand{\outDataset}{\mathcal{D}_{out}}
\newcommand{\inDataset}{\mathcal{D}_{in}}
\newcommand{\outBatch}{\mathcal{B}_{out}}
\newcommand{\inBatch}{\mathcal{B}_{in}}

\newcommand*\diff{\mathop{}\!\mathrm{d}}

\DeclareMathOperator*{\argmin}{arg\,min}
\newcommand{\Real}{\mathbb{R}}
\newcommand{\Hcal}{\mathcal{H}}
\newcommand{\Xcal}{\mathcal{X}}
\newcommand{\Ycal}{\mathcal{Y}}
\newcommand{\Vcal}{\mathcal{V}}
\newcommand{\Lcal}{\mathcal{L}}

\newcommand{\parens}[1]{\left( #1 \right)}
\newcommand{\brackets}[1]{\left[ #1 \right]}
\newcommand{\verts}[1]{\left\lvert #1 \right\rvert}
\newcommand{\Verts}[1]{\left\lVert #1 \right\rVert}
\newcommand{\braces}[1]{\left\{ #1 \right\}}

\newcommand\vsp{\vspace*{-0.1cm}}

\algnewcommand{\IIf}[1]{\State\algorithmicif\ #1\ \algorithmicthen}
\algnewcommand{\EndIIf}{\unskip\ \algorithmicend\ \algorithmicif}

\newcommand\smallminus{\text{--}}

\newlist{assumplist}{enumerate}{1}
\setlist[assumplist]{label=(\textbf{\Alph*})}
\Crefname{assumplisti}{Assumption}{Assumptions}

\newlist{assumplist2}{enumerate}{1}
\setlist[assumplist2]{label=(\textbf{\alph*})}
\Crefname{assumplist2i}{Assumption}{Assumptions}

\definecolor{darkred}{rgb}{0.55, 0.0, 0.0}

\title{Functional Bilevel Optimization for Machine Learning}

\author{
  Ieva Petrulionyte, Julien Mairal, Michael Arbel \\
  Univ. Grenoble Alpes, Inria, CNRS, Grenoble INP, LJK, 38000 Grenoble, France \\
  \texttt{firstname.lastname@inria.fr} \\
}

\begin{document}

\maketitle

\begin{abstract}
  In this paper, we introduce a new functional point of view on bilevel optimization problems for machine learning, where the inner objective is minimized over a function space. These types of problems are most often solved by using methods developed in the parametric setting, where the inner objective is strongly convex with respect to the parameters of the prediction function. The functional point of view does not rely on this assumption and notably allows using over-parameterized neural networks as the inner prediction function. We propose scalable and efficient algorithms for the functional bilevel optimization problem and illustrate the benefits of our approach on instrumental regression and reinforcement learning tasks.
\end{abstract}

\section{Introduction}
\label{introduction}

\par

Bilevel optimization methods solve problems with hierarchical structures, optimizing two interdependent objectives: an \emph{inner-level} objective and an \emph{outer-level} one. Initially used in machine learning for model selection \citep{bennett2006model} and sparse feature learning \citep{mairal2011task}, these methods gained popularity as efficient alternatives to grid search for hyper-parameter tuning \citep{Feurer:2019,Lorraine2019OptimizingMO,Franceschi2017ForwardAR}. Applications of bilevel optimization include meta-learning
 \citep{DBLP:conf/iclr/BertinettoHTV19}, 
auxiliary task learning \citep{navon2021auxiliary}, 
reinforcement learning \citep{Hong:2020a,Liu:2021f,Nikishin:2022}, inverse problems \citep{Holler_2018} and invariant risk minimization~\citep{arjovsky2019invariant,ahuja2020invariant}.

Bilevel problems are challenging to solve, even in the \emph{well-defined bilevel} setting with a unique inner-level solution. This difficulty stems from approximating both the inner-level solution and its sensitivity to the \emph{outer-level} variable during gradient-based optimization. Methods like Iterative Differentiation (ITD, \citealp{iter_bilevel}) and Approximate Implicit Differentiation (AID, \citealp{Ghadimi2018ApproximationMF}) were designed to address these challenges in the well-defined setting, resulting in scalable algorithms with strong convergence guarantees~\citep{Domke:2012, Gould:2016, Ablin:2020, Arbel:2021a, Blondel:2021a, Liao:2018, Liu:2021, Shaban:2019}. These guarantees usually require the \emph{inner-level} objective to be strongly convex. However, when the inner-level variables are neural network parameters, the lower-level problem becomes non-convex and may have multiple solutions due to over-parameterization. While non-convexity is considered "benign" in this setting~\citep{allen2019convergence,liu2022loss}, 
multiplicity of inner-level solutions makes their dependence on the outer-level variable ambiguous~\citep{Liu:2021b}, posing a major challenge in bilevel optimization for modern machine learning applications.
We identify a common \emph{functional structure} in bilevel machine learning problems to address the ambiguity challenge that arises with flexible models like neural networks. Specifically, we consider a \emph{prediction function} $\innerpred$ optimized by the inner-level problem over a Hilbert space~$\Hcal$. 
This space consists of functions defined over an input space $\Xcal$ and taking values in a finite dimensional vector space $\Vcal$. The optimal \emph{prediction function} is then evaluated in the outer-level to optimize an outer-level parameter $\outvar$ in a finite dimensional space $\Omega =\Real^{d}$, resulting in a \emph{functional} bilevel problem:
\begin{align}\tag{FBO}\label{def:funcBO}
	\min_{\outvar \in \Omega}\ \mathcal{F}(\outvar):&= \outOBJ \parens{\outvar, \innerpred^\star_\outvar}
    \quad \text{s.t.}\quad  \innerpred^\star_{\outvar} =  \argmin_{\innerpred\in\Hcal}\ \inOBJ\parens{\outvar, \innerpred}.
\end{align}
In contrast to classical bilevel formulations involving neural networks, where the inner objective is non-convex with respect to the network parameters, the inner objective in~(\ref{def:funcBO}) defines an optimization problem over a prediction function~$h$ in a functional vector space~$\Hcal$. 

A crucial consequence of adopting this new viewpoint is that it renders the strong convexity of the inner objective with respect to $h$ a mild assumption, which ensures the uniqueness of the solution $\innerpred_\outvar^\star$ for any outer parameter value $\outvar$. Strong convexity with respect to the \emph{prediction function} is indeed much weaker than strong convexity with respect to model parameters and often holds in practice. For instance, a supervised prediction task with pairs of features/labels $(x,y)$ drawn from some training data distribution is formulated as a regularized empirical minimization problem:
  \begin{equation}
    \min_{\innerpred \in \Hcal} \inOBJ(\outvar,\innerpred) :=  {\mathbb E}_{x,y} \left[ \| y - \innerpred(x)\|_2^2\right] + \frac{\outvar}{2} \|\innerpred\|_\Hcal^2, \label{eq:supervised}
\end{equation}
where $\Hcal$ is the $L_2$ space of square integrable functions w.r.t.~the distribution of $x$, and $\outvar > 0$ controls the amount of regularization. 
The inner objective is strongly convex in $h$, even though the optimal prediction function $\innerpred_\outvar^\star$ can be highly nonlinear in $x$. The function $\innerpred_\outvar^\star$ may then be approximated, \emph{e.g.}, by an overparameterized deep neural network, used here as a function approximation tool.

Although appealing, the (\ref{def:funcBO}) formulation necessitates the development of corresponding theory and algorithms, which is the aim of our paper. To the best of our knowledge, this is the first work to propose a functional bilevel point of view that can leverage deep networks for function approximation. The closest works are either restricted to kernel methods \citep{rosset2008bi,kunapuli2008bilevel} and thus cannot be used for deep learning models, or propose abstract algorithms that can only be implemented for finite Hilbert spaces \citep{suonpera2024linearly}. 

We introduce in Section~\ref{sec:funcBO} a theoretical framework for functional implicit differentiation in an abstract Hilbert space $\Hcal$ that allows computing the \emph{total gradient} $\outgrad$ using a functional version of the \emph{implicit function theorem} \citep{ioffe1979theory} and the \emph{adjoint sensitivity method} \citep{pontryagin2018mathematical}.  
This involves solving a well-conditioned  functional linear system, equivalently formulated as a regression problem in $\Hcal$, to find an adjoint function $\adjoint_{\outvar}^\star$ used for computing the \emph{total gradient} $\outgrad$. 
We then specialize this framework to the common scenario where $\Hcal$ is an $L_2$ space and objectives are expectations of point-wise losses. This setting covers many machine learning problems (see Sections \ref{sec:IVapplication}, \ref{RLapplication}, and \cref{appendixFBOexamples}). 
In Section~\ref{sec:method}, we propose an efficient algorithm where the prediction and adjoint functions can be approximated using parametric models, like neural networks, learned with standard stochastic optimization tools. We further study its convergence using analysis for biased stochastic gradient descent \citep{demidovich2024guide}.

The proposed method, called \emph{functional implicit differentiation} (\emph{FuncID}), adopts a novel "differentiate implicitly, then parameterize" approach (left \cref{fig:parametric_vs_funcBO}): functional strong convexity is first exploited to derive an {\bf unambiguous} implicit gradient in function space using a well-defined adjoint function. Then, both the lower-level solution and adjoint function are approximated using neural networks. 
This contrasts with traditional AID/ITD approaches (right \cref{fig:parametric_vs_funcBO}), which parameterize the inner-level solution as a neural network, leading to a non-convex ‘parametric’ bilevel problem in the network's parameters. An {\bf ambiguous} implicit gradient is then computed by approximately solving an unstable or ill-posed linear system~\citep{arbel2022non}.
Consequently, \emph{FuncID} addresses the ambiguity challenge by exploiting the functional perspective and results in a stable algorithm with reduced time and memory costs.
In \cref{sec:applications}, we demonstrate the benefits of our approach in instrumental regression and reinforcement learning tasks, which admit a natural functional bilevel structure.
\begin{figure}
    \centering
    \includegraphics[width=1.\linewidth]{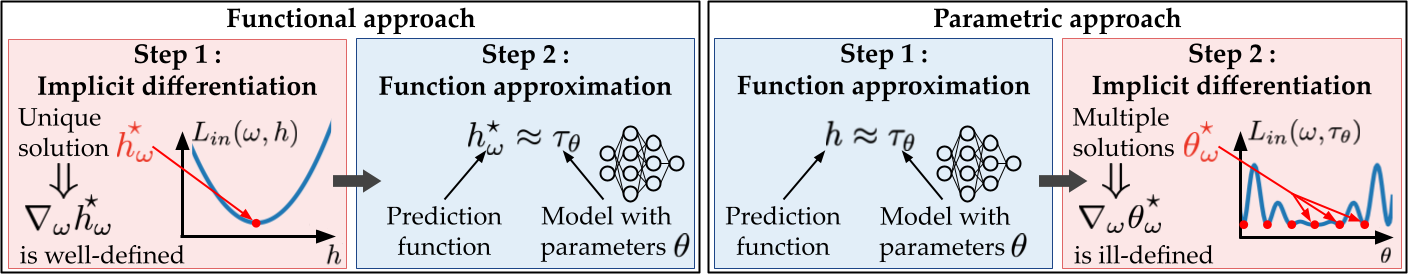}
    \caption{Parametric vs functional approaches for solving \ref{def:funcBO} by implicit differentiation.}
    \label{fig:parametric_vs_funcBO}
\end{figure}

\paragraph{Related work on bilevel optimization with non-convex inner objectives.}
In principle, considering amended versions of the bilevel problem can resolve the ambiguity arising from non-convex inner objectives. This is the case of optimistic/pessimistic versions of the problem, often considered in the literature on mathematical optimization, where the outer-level objective is optimized over both outer and inner variables, under the optimality constraint of the inner-level variable \citep{Dempe:2007,Ye:1997,Ye:1995,Ye:1997a}. 
While tractable methods were recently proposed to solve them \citep{Liu:2021f,Liu:2021b,liu2023averaged,kwon2024on,shen2023penalty}, it is unclear how well the resulting solutions behave on unseen data in the context of machine learning. For instance, when using over-parameterized models for the inner-level problem, their parameters must be further optimized for the outer-level objective, possibly resulting in over-fitting \citep{Vicol:2021}.
More recently, \citet{arbel2022non} proposed a game formulation involving a \emph{selection map} to deal with multiple inner-level solutions. Such a formulation justifies the use of ITD/AID outside the {well-defined bilevel} setting, by viewing those methods as approximating the Jacobian of the selection map. However, the justification only hold under rather strong geometric assumptions.
Additional related work is discussed in Appendix~\ref{appendixRW} on bilevel optimization with strongly-convex inner objectives, the adjoint sensitivity method that is often used in the context of ordinary differential equations, and amortization techniques \citep{amortization_tuto} that have been also exploited for approximately solving bilevel optimization problems \citep{MacKay:2019,Bae:2020}.

\section{A Theoretical Framework for Functional Bilevel Optimization}
\label{sec:funcBO}
The functional bilevel problem (\ref{def:funcBO}) involves an optimal prediction function $\innerpred^\star_{\outvar}$ for each value of the outer-level parameter $\outvar$. Solving (\ref{def:funcBO}) by using a first-order method then requires characterizing the implicit dependence of $\innerpred^\star_{\outvar}$ on the outer-level parameter~$\outvar$ to evaluate the total gradient $\outgrad$ in~$\Real^d$. Indeed, assuming that $\innerpred^\star_{\outvar}$ and $\outOBJ$ are Fr\'echet differentiable (this assumption will be discussed later), the gradient $\outgrad$ may be obtained by an application of the chain rule:
\begin{align}\label{eq:total_grad}
	\outgrad = g_{\outvar} + \partial_{\outvar}\innerpred^\star_{\outvar} d_{\outvar},\quad \text{with} \quad g_{\outvar}:= \partial_\outvar \outOBJ(\outvar, \innerpred^\star_{\outvar}) \quad \text{and} \quad
	d_{\outvar}:= \partial_\innerpred \outOBJ\parens{\outvar,\innerpred_{\outvar}^\star}.
\end{align}
The Fr\'echet derivative $\partial_{\outvar}\innerpred^\star_{\outvar}: \Hcal \to \Real^d$ is a linear operator acting on functions in $\Hcal$ and measures the sensitivity of the optimal solution on the outer variable. We will refer to this quantity as the ``Jacobian'' in the rest of the paper. While the expression of the gradient in \cref{eq:total_grad} might seem intractable in general, we will see in \cref{sec:method} a class of practical algorithms to estimate it. 

\subsection{Functional implicit differentiation}\label{sec:abstract_implicit_diff}

Our starting point is to characterize the dependence of $\innerpred^\star_{\outvar}$ on the outer variable. To this end, we rely on the following implicit differentiation theorem (proven in \cref{appendixProof_functional}) which can be seen as a functional version of the one used in AID \citep{Domke:2012,Pedregosa:2016}, albeit, under a much weaker \emph{strong convexity assumption} that holds in most practical cases of interest. 
\begin{theorem}[\bf{Functional implicit differentiation}]\label{thm:implicit_function}
Consider problem~(\ref{def:funcBO}) and assume that: 
\begin{itemize}[topsep=0pt,itemsep=0pt,itemindent=0pt,leftmargin=10pt]
    \item For any $\outvar\in\Omega$, there exists $\mu\!>\!0$ such that $h \mapsto \inOBJ(\outvar',\innerpred)$ is $\mu$-strongly convex for any $\outvar'$ near~$\outvar$.
    \item $h \mapsto \inOBJ(\outvar,\innerpred)$ has finite values and is Fr\'{e}chet differentiable on $\Hcal$ for all $\outvar\in\Omega$.
    \item $\partial_\innerpred \inOBJ$ is Hadamard differentiable on $\Omega \times \Hcal$ (in the sense of \cref{def:parametric_diff} in \cref{appendixProof_functional}).
\end{itemize}
Then, $\outvar \mapsto \innerpred^\star_\outvar$ is uniquely defined and is Fr\'{e}chet differentiable with a Jacobian $\partial_{\outvar}\innerpred^\star_{\outvar}$ given by:
\begin{gather}\label{eq:Jacobian_equation}
   \crossDeriv_{\outvar} + \partial_{\outvar}\innerpred^\star_{\outvar}\Hessian_{\outvar} = 0,\qquad \text{with }\quad 
   \crossDeriv_{\outvar}:=\partial_{\outvar,\innerpred}\inOBJ(\outvar,\innerpred^{\star}_{\outvar}),
   \quad \text{and} \quad
   \Hessian_{\outvar}:= \partial^2_{\innerpred} \inOBJ(\outvar,\innerpred^{\star}_{\outvar}).
\end{gather}
\end{theorem}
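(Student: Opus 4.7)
The plan is to reduce the statement to a Hilbert-space implicit function theorem applied to the first-order optimality condition of the inner problem. Define $F: \Omega \times \Hcal \to \Hcal$ by $F(\outvar, \innerpred) := \partial_{\innerpred}\inOBJ(\outvar, \innerpred)$, where the Riesz identification converts the Fréchet gradient of $\inOBJ(\outvar, \cdot)$ into an element of $\Hcal$ itself. Under the stated hypotheses, $\innerpred^\star_\outvar$ will be the unique zero of $F(\outvar, \cdot)$, and once the implicit map $\outvar \mapsto \innerpred^\star_\outvar$ is shown to be differentiable, differentiating $F(\outvar, \innerpred^\star_\outvar) = 0$ by the chain rule and then passing to adjoints (consistent with the convention $\partial_\outvar \innerpred^\star_\outvar : \Hcal \to \Real^d$ introduced after \cref{eq:total_grad}) directly delivers the Jacobian equation $\crossDeriv_\outvar + \partial_\outvar \innerpred^\star_\outvar\, \Hessian_\outvar = 0$.

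First, I would verify existence and uniqueness of $\innerpred^\star_\outvar$: $\mu$-strong convexity combined with finiteness and Fréchet differentiability of $\inOBJ(\outvar, \cdot)$ yields weak lower semi-continuity and coercivity, so a unique minimizer exists in $\Hcal$ and is characterized by $F(\outvar, \innerpred^\star_\outvar) = 0$. To set up the implicit function theorem at $(\outvar, \innerpred^\star_\outvar)$, the crucial point is invertibility of $\Hessian_\outvar = \partial_\innerpred F(\outvar, \innerpred^\star_\outvar)$. Strong convexity makes $\Hessian_\outvar$ self-adjoint and positive definite with $\langle \Hessian_\outvar u, u\rangle_\Hcal \geq \mu \|u\|_\Hcal^2$ for every $u \in \Hcal$, so by the Lax–Milgram lemma it is a bounded linear isomorphism on $\Hcal$ with $\|\Hessian_\outvar^{-1}\|\leq 1/\mu$. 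Uniformity of strong convexity for $\outvar'$ near $\outvar$ propagates this invertibility to a whole neighborhood.

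The technical heart of the argument is invoking an implicit function theorem that only needs Hadamard (rather than continuous Fréchet) differentiability of $F$, matching the third hypothesis of the theorem. I would use the Hilbert/Banach-space version from \citet{ioffe1979theory}: since $F$ is continuous, vanishes at $(\outvar, \innerpred^\star_\outvar)$, is Hadamard differentiable there in the sense of \cref{def:parametric_diff}, and has $\partial_\innerpred F = \Hessian_\outvar$ as a linear isomorphism, the equation $F(\outvar', \innerpred) = 0$ locally defines a Hadamard differentiable implicit map $\outvar' \mapsto \innerpred^\star_{\outvar'}$. Because $\Omega = \Real^d$ is finite-dimensional, Hadamard differentiability of this implicit map coincides with Fréchet differentiability, upgrading the regularity to the statement of the theorem. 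Global uniqueness of the inner minimizer then forces this local implicit function to agree with $\outvar' \mapsto \innerpred^\star_{\outvar'}$ on the whole of $\Omega$.

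The hardest part will be carefully aligning the joint Hadamard differentiability hypothesis placed on $\partial_\innerpred \inOBJ$ (as encoded by \cref{def:parametric_diff}) with the precise regularity required by the chosen implicit function theorem, and then propagating Hadamard differentiability through the composition $\outvar \mapsto (\outvar, \innerpred^\star_\outvar) \mapsto F(\outvar, \innerpred^\star_\outvar)$ so that the chain rule is applicable at the end. Once that bookkeeping is done, differentiating the identity $F(\outvar, \innerpred^\star_\outvar) = 0$ along an arbitrary direction in $\Real^d$, taking adjoints and using self-adjointness of $\Hessian_\outvar$ yields the stated Jacobian equation and completes the proof.
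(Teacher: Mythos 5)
Your setup (existence and uniqueness of $\innerpred^\star_\outvar$ from strong convexity plus Fr\'echet differentiability, invertibility of $\Hessian_{\outvar}$ via Lax--Milgram with $\Verts{\Hessian_{\outvar}^{-1}}\leq 1/\mu$, and reading off the Jacobian from the differentiated optimality condition) matches the paper. But the central step of your argument has a genuine gap: you propose to invoke the Banach-space implicit function theorem of \citet{ioffe1979theory} under mere Hadamard differentiability of $\partial_\innerpred \inOBJ$. That theorem requires $\partial_\innerpred \inOBJ$ to be continuously Fr\'echet differentiable (strictly differentiable) near the solution, and the paper explicitly points out that this hypothesis fails in the intended setting: for integral functionals on $L_2$, $h\mapsto \partial_\innerpred\inOBJ(\outvar,h)$ cannot have a uniformly continuous Fr\'echet differential on bounded sets unless the point-wise loss is a polynomial of degree at most two. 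There is no off-the-shelf implicit function theorem that accepts Hadamard differentiability of $F$ as a substitute; producing one is precisely the content of the theorem you are asked to prove, so your plan assumes the hard part.

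The paper closes this gap with a direct quantitative argument that bypasses any abstract IFT. After constructing the candidate Jacobian $J=-\Hessian_{\outvar}^{-1}(\crossDeriv_{\outvar})^{\star}$, it applies the $\mu$-strong monotonicity of $h\mapsto\partial_{\innerpred}\inOBJ(\outvar+\epsilon,h)$ at the two points $h^{\star}_{\outvar}+J\epsilon$ and $h^{\star}_{\outvar+\epsilon}$, which together with the optimality condition $\partial_{\innerpred}\inOBJ(\outvar+\epsilon,h^{\star}_{\outvar+\epsilon})=0$ and Cauchy--Schwarz gives
\begin{align*}
\Verts{h_{\outvar+\epsilon}^{\star}-h_{\outvar}^{\star}-J\epsilon}_{\Hcal}\leq \tfrac{1}{\mu}\Verts{\partial_{\innerpred}\inOBJ\parens{\outvar+\epsilon,\, h_{\outvar}^{\star} + J\epsilon}}_{\Hcal}.
\end{align*}
It then shows (via a separate lemma, \cref{prop:hadamard_diff}) that Hadamard differentiability upgrades to a genuine $o(\Verts{\epsilon})$ first-order expansion along the image of a bounded linear map from the finite-dimensional space $\Omega$, so the right-hand side is $o(\Verts{\epsilon})$ by the definition of $J$. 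This is where the finite-dimensionality of $\Omega$ is actually used --- not to upgrade the regularity of an implicit map delivered by an IFT, but to make Hadamard differentiability strong enough to control the residual. If you replace your appeal to Ioffe--Tikhomirov with this strong-monotonicity estimate and the accompanying Hadamard-to-Fr\'echet expansion lemma, your outline becomes the paper's proof.
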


The strong convexity assumption on the inner-level objective ensures the existence and uniqueness of the solution~$\innerpred_\outvar^\star$, while differentiability assumptions on $\inOBJ$ and $\partial_{h}\inOBJ$ ensure Fr\'{e}chet differentiability of the map $\outvar\mapsto \innerpred_\outvar^{\star}$. Though the implicit function theorem for Banach spaces~\citep[][]{ioffe1979theory} could yield similar conclusions, it demands the stronger assumption that $\partial_\innerpred \inOBJ$ is continuously Fr\'{e}chet differentiable, which is quite restrictive in our setting of interest:  
for instance, when $\mathcal{H}$ is an $L_2$-space and $\inOBJ$ is an integral functional of the form $\inOBJ(\outvar,h) = \int \ell_{in}(w,h(x))\diff x$, with $\ell_{in}$ defined on $\Omega\times \mathcal{V}$ and satisfying mild smoothness and growth assumptions on $\ell_{in}$, then $h\mapsto \partial_\innerpred \inOBJ(\outvar,h)$ cannot be Fr\'{e}chet differentiable with uniformly continuous differential on bounded sets, unless $v\mapsto \ell_{in}(\outvar,v)$ is a polynomial of degree at most $2$ (see \citep[Corollary 2, p 276]{nemirovskiui1973polynomial} and discussions in  \citep{Noll:1993,Goodman:1971}).  
Instead, \cref{thm:implicit_function} employs the weaker notion of Hadamard differentiability for $\partial_\innerpred \inOBJ$, widely used in statistics, particularly for deriving the \emph{delta-method}~\citep[Chapter 3.9]{Vaart:1996}. Consequently, \cref{thm:implicit_function} allows us to cover a broader class of functional bilevel problems, as we see in \cref{sec:L_2_implicit_diff}.

Similarly to AID, only a Jacobian-vector product is needed when computing the total gradient $\outgrad$.  
The result in \cref{prop:implicit_diff} below, relies on the \emph{adjoint sensitivity method} \citep{pontryagin2018mathematical} to provide a more convenient expression for $\outgrad$ and is proven in \cref{proof_func_implicit}. 
\begin{proposition}[\bf{Functional adjoint sensitivity}]
\label{prop:implicit_diff}
Under the same assumption on $\inOBJ$ as in \cref{thm:implicit_function} and further assuming that $\outOBJ$ is jointly differentiable in $\outvar$ and $\innerpred$, the total objective $\mathcal{F}$ is differentiable with $\outgrad$ given by:
\begin{align}\label{eq:gradient_adjoint}
	\outgrad = g_{\outvar} + \crossDeriv_{\outvar} \adjoint_{\outvar}^\star,
\end{align}
where the adjoint function $\adjoint^{\star}_{\outvar} := - \Hessian_{\outvar}^{-1} d_{\outvar} $ is an element of $\Hcal$ that minimizes the quadratic objective: 
\begin{align}\label{eq:adjoint_objective}
\adjoint_{\outvar}^\star =\arg\min_{\adjoint\in \Hcal} \dualOBJ(\outvar, \adjoint) := \tfrac{1}{2}\ \langle \adjoint, \Hessian_{\outvar} \adjoint \rangle_{\Hcal} + \langle \adjoint, d_{\outvar}\rangle_{\Hcal}.
\end{align}
\end{proposition}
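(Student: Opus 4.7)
The plan is to combine the chain rule for $\mathcal{F}$ with the characterization of the Jacobian $\partial_{\outvar}\innerpred^\star_{\outvar}$ from \cref{thm:implicit_function}, then recognize the resulting expression as the first-order condition of the quadratic in \cref{eq:adjoint_objective}. Under the joint differentiability of $\outOBJ$ and the Fréchet differentiability of $\outvar \mapsto \innerpred^\star_\outvar$ (already granted by \cref{thm:implicit_function}), the composition $\mathcal{F}(\outvar) = \outOBJ(\outvar,\innerpred^\star_\outvar)$ is differentiable and its gradient is given by \cref{eq:total_grad}: $\outgrad = g_\outvar + \partial_{\outvar}\innerpred^\star_{\outvar}\, d_{\outvar}$.

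The key step is to transform the term $\partial_{\outvar}\innerpred^\star_{\outvar}\, d_{\outvar}$ into something implementable. I would use \cref{eq:Jacobian_equation}, which reads $\crossDeriv_{\outvar} + \partial_{\outvar}\innerpred^\star_{\outvar}\,\Hessian_{\outvar} = 0$. The $\mu$-strong convexity of $\inOBJ$ in $\innerpred$ ensures that the Hessian operator $\Hessian_{\outvar}\colon \Hcal \to \Hcal$ satisfies $\Hessian_{\outvar} \succeq \mu I$, so it is a bounded, self-adjoint, positive definite, hence invertible, operator with $\Verts{\Hessian_{\outvar}^{-1}} \leq 1/\mu$. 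Right-multiplying \cref{eq:Jacobian_equation} by $\Hessian_{\outvar}^{-1} d_{\outvar}$ yields $\partial_{\outvar}\innerpred^\star_{\outvar}\, d_{\outvar} = -\crossDeriv_{\outvar}\, \Hessian_{\outvar}^{-1} d_{\outvar}$. Defining $\adjoint_{\outvar}^{\star} := -\Hessian_{\outvar}^{-1} d_{\outvar} \in \Hcal$ then gives the claimed formula $\outgrad = g_\outvar + \crossDeriv_{\outvar} \adjoint_{\outvar}^{\star}$.

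It remains to identify $\adjoint_{\outvar}^{\star}$ with the minimizer of $\dualOBJ(\outvar,\cdot)$. Since $\Hessian_{\outvar}$ is self-adjoint and positive definite, the quadratic $\adjoint \mapsto \tfrac{1}{2}\langle \adjoint, \Hessian_{\outvar}\adjoint\rangle_\Hcal + \langle \adjoint, d_{\outvar}\rangle_\Hcal$ is $\mu$-strongly convex on $\Hcal$, and hence has a unique minimizer characterized by the first-order optimality condition $\Hessian_{\outvar}\adjoint + d_{\outvar} = 0$, whose unique solution is precisely $\adjoint = -\Hessian_{\outvar}^{-1}d_{\outvar} = \adjoint_{\outvar}^{\star}$.

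The main obstacle is conceptual rather than computational: being careful with the operator conventions and adjoints. The derivative $\partial_{\outvar}\innerpred^\star_{\outvar}$ is stated to act from $\Hcal$ to $\Real^d$, so \cref{eq:Jacobian_equation} should be read as an identity of operators $\Hcal \to \Real^d$, and the manipulation $\partial_{\outvar}\innerpred^\star_{\outvar}\, d_{\outvar} = -\crossDeriv_{\outvar}\adjoint_{\outvar}^{\star}$ must respect that $\crossDeriv_{\outvar}\colon \Hcal \to \Real^d$ maps the adjoint function into the gradient space. Once this bookkeeping is set up and the invertibility of $\Hessian_{\outvar}$ is secured by strong convexity, the remaining identifications are immediate, and the differentiability of $\mathcal{F}$ is inherited directly from the chain rule applied to the differentiable maps $\outvar \mapsto (\outvar, \innerpred_{\outvar}^{\star})$ and $\outOBJ$.
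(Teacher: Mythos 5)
Your proposal is correct and follows essentially the same route as the paper's proof: apply the chain rule, substitute the Jacobian identity $\partial_{\outvar}\innerpred^\star_{\outvar} = -\crossDeriv_{\outvar}\Hessian_{\outvar}^{-1}$ from \cref{thm:implicit_function}, and read off the adjoint function as the solution of the first-order condition of the quadratic objective. The only (welcome) difference is that you spell out the invertibility of $\Hessian_{\outvar}$ and the identification of $\adjoint_{\outvar}^\star$ as the unique minimizer of $\dualOBJ$, points the paper leaves implicit or defers to the proof of \cref{thm:implicit_function}.
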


\cref{eq:gradient_adjoint} indicates that computing the total gradient requires optimizing the quadratic objective (\ref{eq:adjoint_objective}) to find the  adjoint function~$\adjoint_{\outvar}^\star$. 
The strong convexity of the adjoint objective $\dualOBJ$ ensures the existence of a unique minimizer, and stems from the positive definiteness of its Hessian operator $\Hessian_{\outvar}$ due to the inner-objective's strong convexity. 
Both adjoint and inner-level problems occur in the same function space~$\Hcal$ and are equivalent in terms of conditioning, as the adjoint Hessian operator equals the inner-level Hessian at optimum.
The \emph{strong convexity in $\mathcal{H}$} of the adjoint objective guarantees well-defined solutions and holds in many practical cases, as opposed to classical parametric bilevel formulations which require the more restrictive \emph{strong convexity condition in the model's parameters}, and without which instabilities may arise due to ill-conditioned linear systems (see Appendix~\ref{sec:parametric_general}).

\subsection{Functional bilevel optimization in $L_2$ spaces}\label{sec:L_2_implicit_diff}

Specializing the abstract results from \cref{sec:abstract_implicit_diff} to a common scenario in machine learning, we consider both inner and outer level objectives of (\ref{def:funcBO}) as expectations of point-wise functions over observed data. Specifically, we have two data distributions $\mathbb{P}$ and $\mathbb{Q}$ defined over a product space $\Xcal\times \Ycal \subset \mathbb{R}^{d_x}\times \mathbb{R}^{d_y}$, and denote by $\Hcal$ the Hilbert space of functions $h:\mathcal{X}\rightarrow \Vcal$, where $\Vcal = \mathbb{R}^{d_v}$. Given an outer parameter space $\Omega$, we address the following functional bilevel problem:
\begin{align}\label{eq:population_losses}
\begin{split}
	\min_{\outvar \in \Omega}\ 
	\outOBJ\left(\outvar,\innerpred^\star_\outvar\right) &:= \mathbb{E}_{\mathbb{Q}}\brackets{ \pointWoutOBJ\left(\outvar,\innerpred^\star_\outvar(x),x,y\right)}\\
    \text{s.t. }  \innerpred^\star_{\outvar} & =  \argmin_{\innerpred\in \Hcal}\ 
	\inOBJ\left(\outvar,\innerpred\right) :=\mathbb{E}_{ \mathbb{P}}\brackets{\pointWinOBJ\left(\outvar,\innerpred(x),x,y\right)},
\end{split}
\end{align}
where $\pointWoutOBJ$, $\pointWinOBJ$ are point-wise loss functions defined on $\Omega\times\Vcal\times\Xcal\times\Ycal$.
This setting encompasses various deep learning problems discussed in Sections \ref{sec:IVapplication} and \ref{RLapplication}, and in \cref{appendixFBOexamples}, representing a specific instance of \ref{def:funcBO}. The Hilbert space $\Hcal$ of square-integrable functions not only models a broad range of prediction functions but also facilitates obtaining concrete expressions for the total gradient $\outgrad$, enabling the derivation of practical algorithms in \cref{sec:method}.

The following proposition, proved in \cref{appendix_proof_l2}, makes mild technical assumptions on probability distributions $\mathbb{P}$, $\mathbb{Q}$ and the point-wise losses $\ell_{in}$, $\ell_{out}$. It gives an expression for the total gradient in the form of expectations under $\mathbb{P}$ and $\mathbb{Q}$. 
\begin{proposition}[\bf Functional Adjoint sensitivity in $L_2$ spaces.]\label{prop:L2_proposition}
Under the technical \cref{assump:integrability_l_in,assump:integrability_diff_l_in,assump:smoothness,assump:Lip_cross_deriv_l_in,assump:strong_convexity,assump:second_continous_diff_in,assump:continous_diff_out,assump:diff_l_out,assump:local_Lip_grad_l_out,assump:integrability_l_out,assump:second_moment,assump:radon_nikodym} stated in \cref{ass:L2_proof_assump}, the conditions on $\inOBJ$ and $\outOBJ$ in \cref{prop:implicit_diff} hold and the total gradient $\outgrad$ of $\mathcal{F}$ is expressed as $\outgrad = g_{\outvar} + \crossDeriv_{\outvar} \adjoint_{\outvar}^\star$, with $\adjoint^\star_{\outvar}\in \Hcal$ being the minimizer of the objective $\dualOBJ$ in \cref{eq:adjoint_objective}. Moreover, $\dualOBJ$, $g_{\outvar}$ and ${\crossDeriv}_{\outvar}a_{\outvar}^{\star}$ are given by: 
\begin{align}\label{eq:loss_adj_exp}
\begin{split}
    \dualOBJ(\outvar, \adjoint)= \tfrac{1}{2}&\ \mathbb{E}_{\mathbb{P}}\brackets{ \adjoint(x)^{\top} \partial_v^2\pointWinOBJ\left(\outvar,\innerpred_{\outvar}^{\star}(x),x,y\right)   \adjoint(x)} \\
    &+ \mathbb{E}_{\mathbb{Q}}\brackets{ \adjoint(x)^{\top}\partial_{v} \pointWoutOBJ\left(\outvar,\innerpred_{\outvar}^{\star}(x),x,y\right)},
\end{split}
    \end{align}
\begin{align}\label{eq:expected_grad}
    g_{\outvar} =  \mathbb{E}_{\mathbb{Q}}\brackets{ \partial_{\outvar}\pointWoutOBJ\parens{\outvar,\innerpred_{\outvar}^{\star}(x),x,y} },\quad
    {\crossDeriv}_{\outvar}a_{\outvar}^{\star} = \mathbb{E}_{\mathbb{P}}\brackets{ \partial_{\outvar,v}\pointWinOBJ\left(\outvar,\innerpred_{\outvar}^{\star}(x),x,y\right)a_{\outvar}^{\star}(x)}, 
\end{align}
where $\partial_{\outvar}\ell_{out}$, $\partial_{v}\ell_{out}$, $\partial_{\outvar,v}\ell_{in}$, and $\partial_v^{2}\ell_{in}$ are partial first and second order derivatives of $\ell_{out}$ and $\ell_{in}$ with respect to their first and second arguments $\outvar$ and $v$.
\end{proposition}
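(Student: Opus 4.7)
The plan is to verify the hypotheses of \cref{prop:implicit_diff} in the $L_2$ setting and then identify the concrete expressions by differentiating under the integral sign.

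First, I would fix $\Hcal$ explicitly as an $L_2$ space of $\Vcal$-valued functions with respect to a base measure on $\Xcal$ (naturally the marginal of $\mathbb{P}$), so that $\inOBJ$ is well-defined and finite by the integrability assumptions on $\pointWinOBJ$. Strong convexity of $\innerpred\mapsto \inOBJ(\outvar,\innerpred)$ in the $L_2$ norm would follow directly from the pointwise strong convexity of $v\mapsto \pointWinOBJ(\outvar,v,x,y)$, since $\Verts{\innerpred}_{\Hcal}^2=\mathbb{E}_{\mathbb{P}}\brackets{\Verts{\innerpred(x)}^2}$. The Fréchet differentiability of $\innerpred\mapsto \inOBJ(\outvar,\innerpred)$ then follows from a standard dominated convergence argument using the smoothness and growth bounds on $\partial_v \pointWinOBJ$, identifying the gradient as the element of $\Hcal$ represented by $x\mapsto \mathbb{E}_{\mathbb{P}}\brackets{\partial_v\pointWinOBJ(\outvar,\innerpred(x),x,y)\mid x}$, and giving the linear functional $k\mapsto \mathbb{E}_{\mathbb{P}}\brackets{\partial_v\pointWinOBJ(\outvar,\innerpred(x),x,y)^{\top}k(x)}$.

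Next, the delicate step is establishing Hadamard differentiability of the map $(\outvar,\innerpred)\mapsto \partial_\innerpred \inOBJ(\outvar,\innerpred)$ into $\Hcal$, which is crucially weaker than Fréchet differentiability (a point emphasized after \cref{thm:implicit_function}). I would exploit the Lipschitz-type control on $\partial_{\outvar,v}\pointWinOBJ$ and $\partial^2_v \pointWinOBJ$ provided by the technical assumptions to verify the Hadamard limit along any convergent path in $\Omega\times\Hcal$, via a pointwise second-order Taylor expansion of $\pointWinOBJ$ followed by dominated convergence. The main obstacle here is pushing a pointwise-in-$x$ expansion through the $L_2$ expectation to obtain the uniform control along compact paths that Hadamard differentiability requires, without invoking the stronger Fréchet property that is known to fail outside the purely quadratic case.

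Finally, for the outer objective, I would use the Radon–Nikodym assumption linking $\mathbb{Q}$ and $\mathbb{P}$ (needed to make the linear functional $\uppergrad_{\outvar}=\partial_{\innerpred}\outOBJ(\outvar,\innerpred^{\star}_{\outvar})$ a bona fide element of $\Hcal$ via Riesz representation) together with the smoothness and integrability assumptions on $\pointWoutOBJ$ and analogous dominated convergence arguments to obtain joint Fréchet differentiability of $\outOBJ$ on $\Omega\times \Hcal$. Having verified all hypotheses of \cref{prop:implicit_diff}, the explicit formulas read off for $\partial_\outvar \outOBJ$, $\partial_{\outvar,\innerpred}\inOBJ$, and $\partial_\innerpred^2 \inOBJ$ identify $g_{\outvar}$, the action $\crossDeriv_{\outvar}\adjoint^{\star}_{\outvar}$, and the Hessian operator $\Hessian_{\outvar}$ at $\innerpred^{\star}_{\outvar}$; substituting these into \cref{eq:gradient_adjoint,eq:adjoint_objective} yields \cref{eq:loss_adj_exp,eq:expected_grad}. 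I expect verifying the Hadamard differentiability of $\partial_\innerpred \inOBJ$ — precisely the regularity condition that was deliberately weakened from Fréchet — to be the principal technical hurdle.
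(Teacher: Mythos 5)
Your proposal follows essentially the same route as the paper's proof: Appendix C establishes strong convexity of $\inOBJ$ by integrating the pointwise strong convexity, proves Fr\'echet differentiability of $h\mapsto\inOBJ(\outvar,h)$ and joint differentiability of $\outOBJ$ (using the Radon--Nikodym bound exactly as you describe to represent $\partial_\innerpred\outOBJ$ in $\Hcal$), and devotes its longest lemma to the Hadamard differentiability of $\partial_\innerpred\inOBJ$ via the Lipschitz/growth bounds and dominated convergence along convergent sequences $(\epsilon_k,g_k)$ — precisely the step you flag as the principal hurdle. The explicit formulas are then read off by substituting into \cref{prop:implicit_diff}, as you propose.
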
 

\cref{assump:second_moment,assump:radon_nikodym} on $\mathbb{P}$ and $\mathbb{Q}$ ensure finite second moments and bounded Radon-Nikodym derivatives, maintaining square integrability under both distributions  in \cref{eq:population_losses}. \cref{assump:integrability_l_in,assump:integrability_diff_l_in,assump:smoothness,assump:Lip_cross_deriv_l_in,assump:strong_convexity,assump:second_continous_diff_in,assump:continous_diff_out,assump:diff_l_out,assump:local_Lip_grad_l_out,assump:integrability_l_out} on $\ell_{in}$ and $\ell_{out}$ primarily involve integrability, differentiability, Lipschitz continuity, and strong convexity of $\ell_{in}$ in its second argument, typically satisfied by objectives like mean squared error or cross entropy (see \cref{prop:examples} in \cref{ass:L2_proof_assump}). Next, by leveraging \cref{prop:L2_proposition}, we derive practical algorithms for solving \ref{def:funcBO}  using function approximation tools like neural networks.

\section{Methods for Functional Bilevel Optimization in $L_2$ Spaces}\label{sec:method}

\par We propose \emph{Functional Implicit Differentiation (FuncID)}, a flexible class of algorithms for solving the functional bilevel problem in $L_2$ spaces described in \cref{sec:L_2_implicit_diff} when samples from distributions $\mathbb{P}$ and $\mathbb{Q}$ are available.

\emph{FuncID} relies on three main components detailed in the next subsections:
\begin{enumerate}[itemsep=0pt,parsep=0pt,leftmargin=15pt,topsep=1pt]
    \item {\bf Empirical objectives.} These approximate the objectives $\outOBJ$, $\inOBJ$ and $\dualOBJ$ as empirical expectations over samples from inner and outer datasets $\inDataset$ and $\outDataset$. %
    \item {\bf Function approximation.} The search space for both the prediction and adjoint functions is restricted to parametric spaces with finite-dimensional parameters $\inNNparam$ and $\adjNNparam$. Approximate solutions $\hat{\innerpred}_{\outvar}$ and $\hat{\adjoint}_{\outvar}$  to the optimal functions $\innerpred_{\outvar}^{\star}$ and $\adjoint_{\outvar}^{\star}$ are obtained by minimizing the empirical objectives.
    \item {\bf Total gradient approximation.} \emph{FuncID} estimates the total gradient $\outgrad$ using the empirical objectives, and the approximations $\hat{\innerpred}_{\outvar}$ and $\hat{\adjoint}_{\outvar}$ of the prediction and adjoint functions.
\end{enumerate}
\cref{alg:func_diff_alg} provides an outlines of \emph{FuncID} which has a nested structure similar to AID: (1) inner-level optimizations (\verb+InnerOpt+ and \verb+AdjointOpt+) to  update the prediction and adjoint models using scalable algorithms such as stochastic gradient descent \citep{robbins1951stochastic}, and (2) an outer-level optimization to update the parameter $\outvar$ using a total gradient approximation  \verb+TotalGrad+.
An optional \emph{warm-start} allows initializing the parameters of both the prediction and adjoint models for the current outer-level iteration with those obtained from the previous one.

\begin{algorithm}[htb]
   \caption{{\bf{\emph{FuncID}}}}
   \label{alg:func_diff_alg}
\begin{algorithmic}
    \State {\bfseries Input:} initial outer, inner, and adjoint parameter $\outvar_0$, $\inNNparam_0$, $\adjNNparam_0$; warm-start option \verb+WS+.
    \For{$n=0,\ldots,\nbItersOuterSol-1$}
        \State \textcolor{darkred}{ \#\ \textit{Optional warm-start}}
        \IIf{WS=True} $(\theta_0,\xi_0)\leftarrow (\theta_n,\xi_n)$ \EndIIf
        
        \State \textcolor{darkred}{ \#\ \textit{Inner-level optimization}}
        \State $\hat{\innerpred}_{\outvar_{n}}, \inNNparam_{n+1} \leftarrow $ {\bf\texttt{InnerOpt}($\outvar_n,\inNNparam_0,\inDataset$)}
        
        \State \textcolor{darkred}{ \#\ \textit{Adjoint optimization}}
        
        \State $\hat{\adjoint}_{\outvar_{n}}, \adjNNparam_{n+1} \leftarrow $ {\bf \texttt{AdjointOpt}($\outvar_n,\adjNNparam_0,\hat{\innerpred}_{\outvar_n},\mathcal{D}$)}
        
        \State \textcolor{darkred}{ \#\ \textit{Outer gradient estimation}}
        \State Sample a mini-batch  $\mathcal{B}=(\outBatch,\inBatch)$ from $\mathcal{D}=(\outDataset,\inDataset)$
        \State $g_{out} \leftarrow$ {\bf\texttt{TotalGrad}($\outvar_n,\hat{\innerpred}_{\outvar_{n}},\hat{\adjoint}_{\outvar_{n}},\mathcal{B}$)}
        \State $\outvar_{n+1}\leftarrow$ update $\outvar_n$ using $g_{out}$;
    \EndFor
\end{algorithmic}
\end{algorithm}

\subsection{From population losses to empirical objectives}\label{sec:emp_objectives}

 We assume access to two datasets $\inDataset$ and $\outDataset$, comprising i.i.d. samples from $\mathbb{P}$ and $\mathbb{Q}$, respectively. This assumption can be relaxed, such as when using samples from a Markov chain or a Markov Decision Process to approximate population objectives. For scalability, we operate in a mini-batch setting, where batches $\mathcal{B}=\parens{\outBatch,\inBatch}$ are sub-sampled from datasets $\mathcal{D}:=\parens{\outDataset,\inDataset}$. Approximating both inner and outer level objectives in (\ref{eq:population_losses}) can be  achieved using empirical versions:
\begin{align*}
	\empoutOBJ\parens{\outvar,\innerpred,\outBatch} &:= \tfrac{1}{\verts{\outBatch}} \sum_{(\tilde{x},\tilde{y})\in\outBatch} \pointWoutOBJ\left(\outvar,\innerpred(\tilde{x}),\tilde{x},\tilde{y}\right),\\
	\empinOBJ\parens{\outvar,\innerpred,\inBatch} &:= \tfrac{1}{\verts{\inBatch}} \sum_{(x,y)\in \inBatch} \pointWinOBJ\left(\outvar,\innerpred(x),x,y\right).
\end{align*}

\vsp
{\bf Adjoint objective.}
Using the expression of $L_{adj}$ from \cref{prop:L2_proposition}, we derive a finite-sample approximation of the adjoint loss by replacing the population expectations by their empirical counterparts.
More precisely, assuming we have access to an approximation $\hat{\innerpred}_\outvar$ to the inner-level prediction function, %
we consider the following empirical version of the adjoint objective:
\begin{multline}\label{eq:empirical_adjoint}
    \empadjOBJ\parens{\outvar,\adjoint,\hat{\innerpred}_{\outvar},\mathcal{B}} := \tfrac{1}{2} \tfrac{1}{\verts{\inBatch}} \sum_{(x,y)\in\inBatch} \adjoint(x)^{\top} \partial_v^2\pointWinOBJ(\outvar,\hat{\innerpred}_\outvar(x),x,y)\ \adjoint(x)\\
    + \tfrac{1}{\verts{\outBatch}}\sum_{(x,y)\in \outBatch} \adjoint(x)^{\top}\partial_{v} \pointWoutOBJ\left(\outvar,\hat{\innerpred}_\outvar(x),x,y\right).
\end{multline}
The adjoint objective in \cref{eq:empirical_adjoint} requires computing a Hessian-vector product with respect to the output $v$ in $\Real^{d_v}$ of the prediction function $\hat{\innerpred}_{\outvar}$, which is typically of reasonably small dimension, unlike traditional AID methods that necessitate a Hessian-vector product with respect to some model parameters. Importantly, compared to AID, \emph{FuncID} does not requires differentiating twice w.r.t the model's  parameters $\tau(\theta)$ which results in memory and time savings as discussed in \cref{sec:computational_cost}. 

\subsection{Approximate prediction and adjoint functions}\label{sec:approximate_solutions}
To find approximate solutions to the prediction and adjoint functions we rely on three steps: 1) specifying parametric search spaces for both functions, 2) introducing optional regularization to prevent overfitting and, 3) defining a gradient-based optimization procedure on the empirical objectives. 

\vsp
\paragraph{Parametric search spaces.}

We approximate both prediction and adjoint functions using parametric search spaces. A parametric family of functions defined by a map $\innerMap: \Theta \rightarrow \Hcal$ over parameters $\Theta\subseteq\Real^{p_{in}}$ constrains the prediction function $\innerpred$ as $\innerpred(x) = \innerMap (\theta)(x)$. We only require $\innerMap$ to be continuous and differentiable almost everywhere such that back-propagation can be applied \citep{bolte2021nonsmooth}. Notably, unlike AID, we do not need $\innerMap$ to be twice differentiable, as functional implicit differentiation computes the Hessian w.r.t. the output of $\innerMap$, not w.r.t. its parameters $\theta$. For flexibility, we can consider a different parameterized model $\adjointMap : \Xi \rightarrow \Hcal$ for approximating the adjoint function, defined over parameters $\Xi\subseteq \Real^{p_{adj}}$, constraining the adjoint similarly to $\tau$. In practice, we often use the same parameterization, typically a neural network, for both the inner-level and the adjoint models.

\vsp
\paragraph{Regularization.}
With empirical objectives and parametric search spaces, we can directly optimize parameters of both the inner-level model $\innerMap$ and the adjoint model $\adjointMap$. However, to address finite sample issues, regularization may be introduced to these empirical objectives for better generalization. The method allows flexibility in regularization choice, accommodating functions $\inNNparam \mapsto R_{in}(\inNNparam)$ and $\adjNNparam \mapsto R_{adj}(\adjNNparam)$, such as ridge penalty or other commonly used regularization techniques

\vsp
\paragraph{Optimization.}
The function \textbf{\texttt{InnerOpt}} (defined in \cref{alg:inner_level}) optimizes inner model parameters for a given $\outvar$, initialization $\inNNparam_0$, and data $\inDataset$, using $\nbItersInnerSol$ gradient updates. It returns optimized parameters $\inNNparam_\nbItersInnerSol$ and the corresponding inner model $\hat{\innerpred}_{\outvar}=\innerMap(\inNNparam_\nbItersInnerSol)$, approximating the inner-level solution. Similarly, \textbf{\texttt{AdjointOpt}} (defined in \cref{alg:adj_level}) optimizes adjoint model parameters with $\nbItersAdjSol$ gradient updates, producing the approximate adjoint function 
$\hat{\adjoint}_{\outvar}=\adjointMap(\adjNNparam_\nbItersAdjSol )$. 
Other optimization procedures may also be used, especially when closed-form solutions are available, as exploited in some experiments in \cref{sec:applications}. 
Operations requiring differentiation can be implemented using standard optimization procedures with automatic differentiation packages like PyTorch~\citep{PyTorch2019} or Jax~\citep{jax2018github}.

\vspace*{-0.3cm}
\begin{center}
\begin{minipage}{0.49\linewidth}
\begin{algorithm}[H]
  \caption{{ ~\bf\texttt{InnerOpt}($\outvar,\inNNparam_0,\inDataset$)} \phantom{$\hat{\innerpred}_{\outvar}$} }
  \label{alg:inner_level}
\begin{algorithmic}
        \For{$m=0,\ldots,\nbItersInnerSol-1$}
            \State Sample batch ${\mathcal B}_{in}$ from $\inDataset$
            \State $g_{in} \!\leftarrow\! \nabla_{\inNNparam}[ \empinOBJ\parens{\outvar,\innerMap(\inNNparam_{m}),\inBatch} \!+\!  R_{in}(\inNNparam_{m})]$
            \State $\theta_{m+1}\leftarrow$ Update $\inNNparam_{m}$ using $g_{in}$
        \EndFor
        \State {\bf Return} $\innerMap(\inNNparam_\nbItersInnerSol),\inNNparam_\nbItersInnerSol$
\end{algorithmic}
\end{algorithm}
\end{minipage}
\hfill
\begin{minipage}{0.49\linewidth}
\begin{algorithm}[H]
  \caption{{~\bf \texttt{AdjointOpt}($\outvar,\adjNNparam_0,\hat{\innerpred}_{\outvar},\mathcal{D}$)}}
  \label{alg:adj_level}
\begin{algorithmic}
        \For{$k=0,\ldots,\nbItersAdjSol-1$}
            \State \!\!\! \!\!\!\! Sample batch ${\mathcal B}$ from $\mathcal D$
            \State \!\!\!$g_{adj} \!\leftarrow\! \nabla_{\adjNNparam} [ \empadjOBJ(\outvar, \adjointMap(\adjNNparam_{t}),\hat{\innerpred}_{\outvar},\mathcal{B})\!+\! R_{adj}(\adjNNparam_{k})]$
            \State \!\!\!$\adjNNparam_{k+1}\leftarrow$ Update $\adjNNparam_{k}$ using $g_{adj}$
        \EndFor
        \State {\bf Return} $\adjointMap(\adjNNparam_\nbItersAdjSol),\adjNNparam_\nbItersAdjSol$
\end{algorithmic}
\end{algorithm}
\end{minipage}
\end{center}

\vspace*{-0.05cm}
\subsection{Total gradient estimation}\label{sec:approximate_outgrad}
We exploit \cref{prop:L2_proposition} to derive \cref{alg:outgrad}, which allows us to approximate the total gradient~$\outgrad$ after computing the approximate solutions $\hat{\innerpred}_{\outvar}$ and $\hat{\adjoint}_{\outvar}$.
There, we decompose the gradient into two terms: $g_{Exp}$, an empirical approximation of $g_{\outvar}$ in \cref{eq:expected_grad} representing the explicit dependence of $\outOBJ$ on the outer variable $\outvar$, and $g_{Imp}$, an approximation to the implicit gradient term $\crossDeriv_{\outvar}\adjoint_{\outvar}^{\star}$ in \cref{eq:expected_grad}. Both terms are obtained by replacing the expectations by empirical averages batches $\inBatch$ and $\outBatch$, and using the approximations $\hat{\innerpred}_{\outvar}$ and $\hat{\adjoint}_{\outvar}$ instead of the exact solutions.

\begin{algorithm}[H]
  \caption{{~\bf \texttt{TotalGrad}($\outvar,\hat{\innerpred}_{\outvar},\hat{\adjoint}_{\outvar},\mathcal{B}$)}}
  \label{alg:outgrad}
\begin{algorithmic}
            \State $g_{Exp}\leftarrow  \partial_{\outvar}\empoutOBJ(\outvar,\hat{\innerpred}_{\outvar},\outBatch)$
            \State $g_{Imp}\leftarrow \frac{1}{\verts{\inBatch}} \sum_{(x,y)\in \inBatch}\partial_{\outvar,v} \pointWinOBJ(\outvar, \hat{\innerpred}_{\outvar}(x),x,y)\ \hat{\adjoint}_{\outvar}(x)$
        \State {\bf Return} $g_{Exp} + g_{Imp}$
\end{algorithmic}
\end{algorithm}

\subsection{Convergence Guarantees}
Convergence of \cref{alg:func_diff_alg} to stationary points of $\mathcal{F}$ depends on approximation errors, which result from sub-optimal inner and adjoint solutions, as shown by the convergence result below.
\begin{theorem}\label{thm:convergence}
Assume that $\mathcal{F}$ is $\mathcal{L}$-smooth and admits a finite lower bound $\mathcal{F}^{\star}$. Use an update rule $\outvar_{n+1}=\outvar_{n}-\eta g_{out}$ with step size $0 < \outstep \leq \frac{1}{4 \Lcal_{\outvar}}$ in \cref{alg:func_diff_alg}. Under  \cref{assum:fitting_NNs} on sub-optimality of $\hat{h}_{\outvar}$ and $\hat{a}_{\outvar}$, \cref{assum:big_bounds,assum:Loutsmooth_h,assum:variance,assum:smooth_cross,assum:strong_cvx,assump:lip_hessian,assum:lip_outer} on the smoothness of $\ell_{in}$ and $\ell_{out}$, all stated in \cref{sec:convergence_assumptions}, and the assumptions in \cref{prop:L2_proposition}, the iterates $\left\{{\outvar}_n\right\}_{n \geq 0}$ of \cref{alg:func_diff_alg} satisfy:
    \begin{align*}
        \min_{0 \leq n \leq N-1} \mathbb{E}\left[\left\| \outgradn\right\|^2\right] \leq \frac{4\parens{ \mathcal{F}(\outvar_0) - \mathcal{F}^{\star}}}{\outstep N} + 2\eta\mathcal{L}\sigma_{eff}^2 + (c_1\epsilon_{in}+c_2\epsilon_{adj}), 
    \end{align*}
where $c_1$, $c_2, \sigma_{eff}^2$ are positive constants, and $\epsilon_{in}, \epsilon_{adj}$ are sub-optimality errors that result from the inner and adjoint optimization procedures.
\end{theorem}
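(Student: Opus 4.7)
The plan is to follow the standard biased stochastic gradient descent analysis for $\mathcal{L}$-smooth nonconvex objectives, in the spirit of \citet{demidovich2024guide}. I would start from the descent lemma applied to the update $\outvar_{n+1}=\outvar_n-\outstep g_{out}$:
\begin{align*}
\mathcal{F}(\outvar_{n+1}) \leq \mathcal{F}(\outvar_n) - \outstep \langle \outgradn, g_{out}\rangle + \tfrac{\outstep^2 \mathcal{L}}{2}\|g_{out}\|^2,
\end{align*}
and decompose the gradient estimator as $g_{out}=\outgradn + b_n + s_n$, where $b_n := \mathbb{E}[g_{out}\mid \outvar_n,\hat{\innerpred}_{\outvar_n},\hat{\adjoint}_{\outvar_n}]-\outgradn$ is the conditional bias arising from replacing $(\innerpred^{\star}_{\outvar_n},\adjoint^{\star}_{\outvar_n})$ by their parametric approximations, and $s_n$ is the mean-zero mini-batch sampling noise. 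Combining a Young's inequality on the inner product $-\outstep\langle \outgradn, b_n+s_n\rangle$ with the smoothness term and the step-size restriction $\outstep\leq \frac{1}{4\mathcal{L}}$ will produce a $-\frac{\outstep}{4}\mathbb{E}\|\outgradn\|^2$ descent on the one hand, and residuals of the form $\frac{3\outstep}{4}\mathbb{E}\|b_n\|^2 + \frac{\outstep^2\mathcal{L}}{2}\sigma_{eff}^2$ on the other, after using $\mathbb{E}\langle b_n,s_n\rangle=0$ (by conditioning) and \cref{assum:variance} to control $\mathbb{E}\|s_n\|^2\leq\sigma_{eff}^2$.

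The main obstacle is to bound $\mathbb{E}\|b_n\|^2$ by a linear combination of $\epsilon_{in}$ and $\epsilon_{adj}$. Using the exact expression for $\outgradn$ from \cref{prop:L2_proposition} and the plug-in expression in \cref{alg:outgrad}, the bias splits into differences of the form $\partial_{\outvar}\pointWoutOBJ(\outvar_n,\hat{\innerpred}(x),\cdot) - \partial_{\outvar}\pointWoutOBJ(\outvar_n,\innerpred^{\star}(x),\cdot)$ and $\partial_{\outvar,v}\pointWinOBJ(\outvar_n,\hat{\innerpred}(x),\cdot)\hat{\adjoint}(x)-\partial_{\outvar,v}\pointWinOBJ(\outvar_n,\innerpred^{\star}(x),\cdot)\adjoint^{\star}(x)$. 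The Lipschitz assumptions \cref{assum:lip_outer,assum:smooth_cross,assum:Loutsmooth_h}, together with the uniform boundedness assumption \cref{assum:big_bounds}, yield $\|b_n\|^2\lesssim \|\hat{\innerpred}_{\outvar_n}-\innerpred^{\star}_{\outvar_n}\|_{L_2(\mathbb{P})}^2 + \|\hat{\adjoint}_{\outvar_n}-\adjoint^{\star}_{\outvar_n}\|_{L_2(\mathbb{P})}^2$. The first functional error is converted into the inner sub-optimality gap $\epsilon_{in}$ via strong convexity of $\inOBJ$ in $\innerpred$ (\cref{assum:strong_cvx}), giving $\|\hat{\innerpred}_{\outvar_n}-\innerpred^{\star}_{\outvar_n}\|_{L_2(\mathbb{P})}^2\leq \frac{2}{\mu}\epsilon_{in}$. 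The adjoint error needs more care because $\hat{\adjoint}_{\outvar_n}$ targets the adjoint problem defined with $\hat{\innerpred}$ in place of $\innerpred^{\star}$: I would split $\|\hat{\adjoint}_{\outvar_n}-\adjoint^{\star}_{\outvar_n}\|_{L_2}\leq \|\hat{\adjoint}_{\outvar_n}-\adjoint^{\star}_{\outvar_n,\hat{\innerpred}}\|_{L_2}+\|\adjoint^{\star}_{\outvar_n,\hat{\innerpred}}-\adjoint^{\star}_{\outvar_n}\|_{L_2}$, control the first piece by $\frac{2}{\mu}\epsilon_{adj}$ using strong convexity of $\dualOBJ$, and the second piece by $\mathcal{O}(\epsilon_{in})$ using Hessian stability (\cref{assump:lip_hessian}) together with strong convexity. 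This propagation-of-errors step is the delicate part, since it shows that the adjoint's dependence on $\hat{\innerpred}$ contributes an $\epsilon_{in}$ term rather than a higher-order cross term.

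Assembling these pieces yields the single-step inequality
\begin{align*}
\mathbb{E}[\mathcal{F}(\outvar_{n+1})] \leq \mathbb{E}[\mathcal{F}(\outvar_n)] - \tfrac{\outstep}{4}\mathbb{E}\|\outgradn\|^2 + \tfrac{\outstep^2\mathcal{L}}{2}\sigma_{eff}^2 + \outstep(c_1\epsilon_{in}+c_2\epsilon_{adj}),
\end{align*}
where the constants $c_1,c_2$ collect the Lipschitz and strong convexity constants from the above bound on $\mathbb{E}\|b_n\|^2$. Telescoping from $n=0$ to $N-1$, using $\mathcal{F}(\outvar_N)\geq \mathcal{F}^{\star}$, dividing by $\outstep N/4$, and replacing the average by the minimum over $n\in\{0,\dots,N-1\}$ delivers the claimed bound. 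Assumption \cref{assum:fitting_NNs} on the sub-optimalities $\epsilon_{in},\epsilon_{adj}$ of \texttt{InnerOpt} and \texttt{AdjointOpt} ensures these error terms are meaningful quantities that can, in principle, be driven to zero by running enough inner and adjoint iterations at each outer step.
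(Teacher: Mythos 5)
Your proposal is correct and follows essentially the same route as the paper: the paper also reduces the theorem to a bias bound $\mathbb{E}\Verts{G(\outvar)-\outgrad}^2\leq c_1\epsilon_{in}+c_2\epsilon_{adj}$ and a variance bound, obtained exactly by your decomposition of the adjoint error through the intermediate minimizer $\tilde{a}_{\outvar}$ of the adjoint problem defined with $\hat{h}_{\outvar}$ (strong convexity for the $\epsilon_{adj}$ piece, Hessian stability for the $\epsilon_{in}$ piece), before invoking the biased-SGD result of \citet{demidovich2024guide} in place of your hand-rolled descent-lemma telescoping. The only small imprecision is that the mini-batch variance is not controlled by \cref{assum:variance} alone: in the paper $\sigma_{eff}^2=\sigma_0^2+2(c_1\epsilon_{in}+c_2\epsilon_{adj})$, and $\sigma_0^2$ itself absorbs $\epsilon_{in}$, $\epsilon_{adj}$ contributions via the Lipschitz and boundedness assumptions, but this only affects bookkeeping of constants, not the validity of the argument.
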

\cref{thm:convergence} is proven in \cref{sec:convergence_appendix} and relies on the general convergence result in \citet[Theorem 3]{demidovich2024guide} for stochastic biased gradient methods. The key idea is to control both bias and variance of the gradient estimator in terms of generalization errors $\epsilon_{in}$ and $\epsilon_{adj}$ when approximating the inner and adjoint solutions. These generalization errors can, in turn, be made smaller in the case of over-parameterized networks, by increasing network capacity, number of steps and sample size \citep{allen2019convergence,du2019gradient,zou2020gradient}.

\section{Applications}
\label{sec:applications}

We consider two applications of the functional bilevel optimization problem: Two-stage least squares regression (2SLS) and Model-based reinforcement learning. To illustrate its effectiveness we compare it with other bilevel optimization approaches like AID or ITD, as well as state-of-the-art methods for each application. We provide a versatile implementation of \emph{FuncID} (\href{https://github.com/inria-thoth/funcBO}{https://github.com/inria-thoth/funcBO}) in PyTorch~\citep{PyTorch2019}, compatible with standard optimizers (\emph{e.g.}, Adam~\citep{adam}), and supports common regularization techniques. For the reinforcement learning application, we extend an existing JAX~\citep{jax2018github} implementation of model-based RL from \citet{Nikishin:2022} to apply \emph{FuncID}. To ensure fairness, experiments are conducted with comparable computational budgets for hyperparameter tuning using the MLXP experiment management tool~\citep{arbel2024mlxp}. Additionally, we maintain consistency by employing identical neural network architectures across all methods and repeating experiments multiple times with different random seeds.

\subsection{Two-stage least squares regression (2SLS)}
\label{sec:IVapplication}

Two-stage least squares regression (2SLS) is commonly used in causal representation learning, including instrumental regression or proxy causal learning~\citep{stock2003retrospectives}. Recent studies have applied bilevel optimization approaches to address 2SLS, yielding promising results~\citep{xu2021deep,DFIV,hu2023contextual}. We particularly focus on 2SLS for Instrumental Variable (IV) regression, a widely-used statistical framework for mitigating endogeneity in econometrics~\citep{Blundell:2007, Blundell:2012}, medical economics~\citep{Cawley:2012}, sociology~\citep{Bollen:2012}, and more recently, for handling confounders in off-line reinforcement learning~\citep{Fu:2022}.

\vsp
\paragraph{Problem formulation.}
In an IV problem, the objective is to model $f_{\outvar}: t\mapsto o$ that approximates the structural function $f_{struct}$ using independent samples $(o,t,x)$ from a data distribution $\mathbb{P}$, where $x$ is an instrumental variable. The structural function $f_{struct}$ delineates the true effect of a treatment~$t$ on an outcome $o$. A significant challenge in IV is the presence of an unobserved confounder $\epsilon$, which influences both $t$ and $o$ additively, rendering standard regression ineffective for recovering~$f_{\outvar}$. However, if the instrumental variable $x$ solely impacts the outcome $o$ through the treatment $t$ and is independent from the confounder $\epsilon$, it can be employed to elucidate the direct relationship between the treatment $t$ and the outcome $o$ using the 2SLS framework, under mild assumptions on the confounder \citep{singh2019kernel}. This adaptation replaces the regression problem with a variant that averages the effect of the treatment $t$ conditionally on $x$
\begin{align}\label{eq:ID_outer}
    \min_{\outvar\in \Omega} \mathbb{E}_{\mathbb{P}}\brackets{\Verts{o- \mathbb{E}_{\mathbb{P}}\brackets{f_{\outvar}(t)|x}}^2}.
\end{align}
Directly estimating the conditional expectation $\mathbb{E}_{\mathbb{P}}\brackets{f_{\outvar}(t)|x}$ is hard in general. Instead, it is easier to express it, equivalently, as the solution of another regression problem predicting $f_{\outvar}(t)$ from $x$:
\begin{align}\label{eq:ID_inner}
    \innerpred_{\outvar}^{\star} := \mathbb{E}_{\mathbb{P}}\brackets{f_{\outvar}(t)|x} = \arg\min_{\innerpred\in \Hcal} \mathbb{E}_{\mathbb{P}}\brackets{\Verts{f_{\outvar}(t)-\innerpred(x)}^2}.
\end{align}
Both equations result in the bilevel formulation in \cref{eq:population_losses} with $y=(t,o)$, $\mathbb{Q}=\mathbb{P}$ and the point-wise losses $\pointWinOBJ$ and $\pointWoutOBJ$ given by $\pointWinOBJ(\outvar,v,x,y) = \pointWinOBJ(\outvar,v,x,(t,o)) = \Verts{f_{\outvar}(t) - v}^2$ and $\pointWoutOBJ(\outvar,v,x,y) = \pointWoutOBJ(\outvar,v,x,(t,o)) = \Verts{o - v}^2$. It is, therefore, possible to directly apply \cref{alg:func_diff_alg} to learn $f_{\outvar}$ as we illustrate below.

\vsp
\paragraph{Experimental setup.}
We study the IV problem using the \textit{dsprites} dataset \citep{dsprites}, comprising synthetic images representing single objects generated from five latent parameters: \textit{shape, scale, rotation}, and \textit{posX, posY} positions on image coordinates. Here, the treatment variable $t$ is the images, the hidden confounder $\epsilon$ is the \textit{posY} coordinate, and the other four latent variables form the instrumental variable $x$. The outcome $o$ is an unknown structural function $f_{struct}$ of $t$, contaminated by confounder $\epsilon$ as detailed in \cref{appendixDSprites_data}. We follow the setup of the Deep Feature Instrumental Variable Regression (DFIV) \textit{dsprites} experiment by \citet[Section 4.2]{DFIV}, which achieves state-of-the-art performance. In this setup, neural networks serve as the prediction function and structural model, optimized to solve the bilevel problem in \cref{eq:ID_outer,eq:ID_inner}. We explore two versions of our method: \emph{FuncID}, which optimizes all adjoint network parameters, and \textit{FuncID linear}, which learns only the last layer in closed-form while inheriting hidden layer parameters from the inner prediction function. We compare our method with DFIV, AID, ITD, and Penalty-based methods: gradient penalty (GD penal.) and value function penalty (Val penal.) \citep{shen2023penalty}, using identical network architectures and computational budgets for hyperparameter selection. Full details on network architectures, hyperparameters, and training settings are provided in \cref{appendixDSprites_exp}.

\begin{figure}
    \centering
    \includegraphics[width=0.329\linewidth]{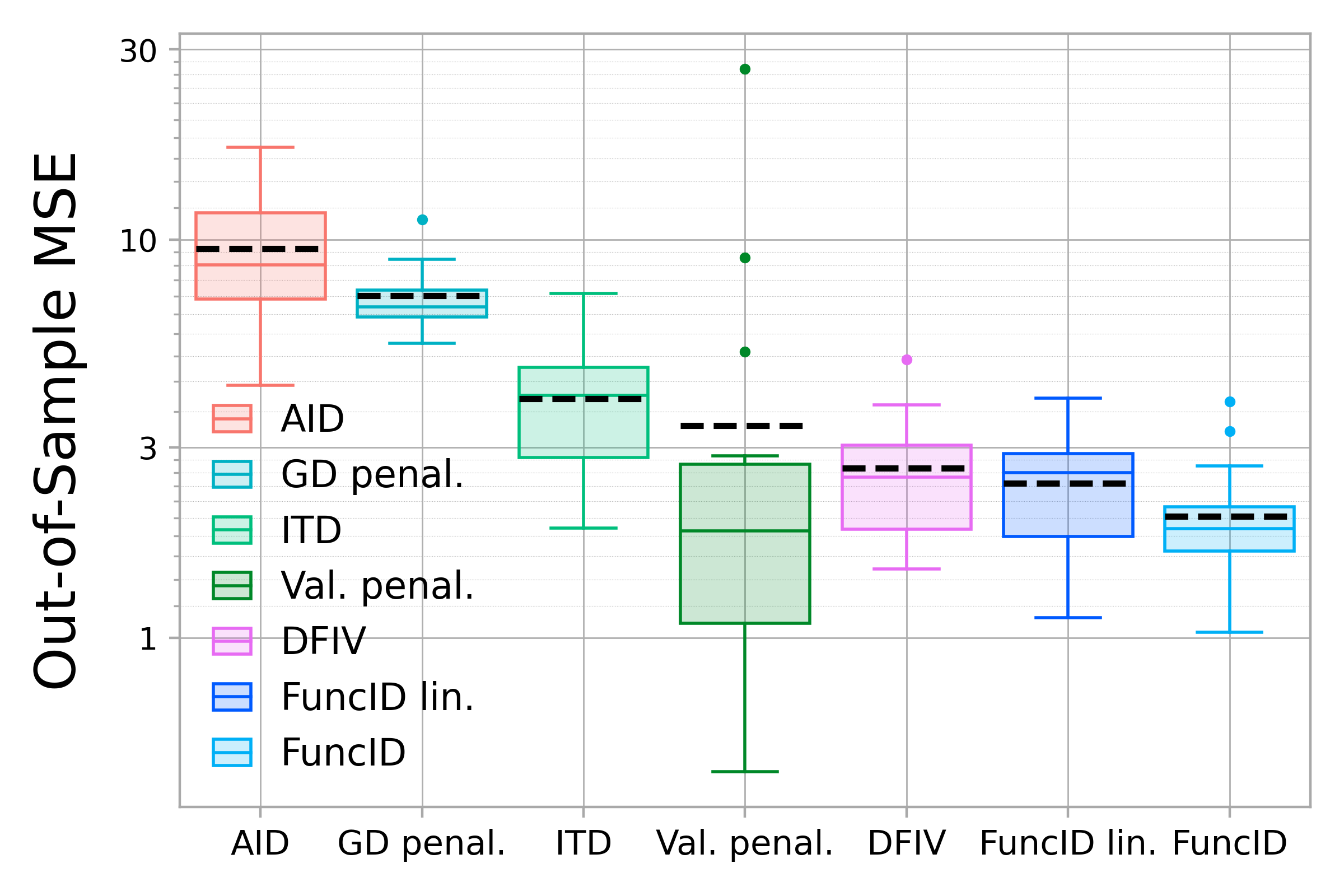}
    \includegraphics[width=0.329\linewidth]{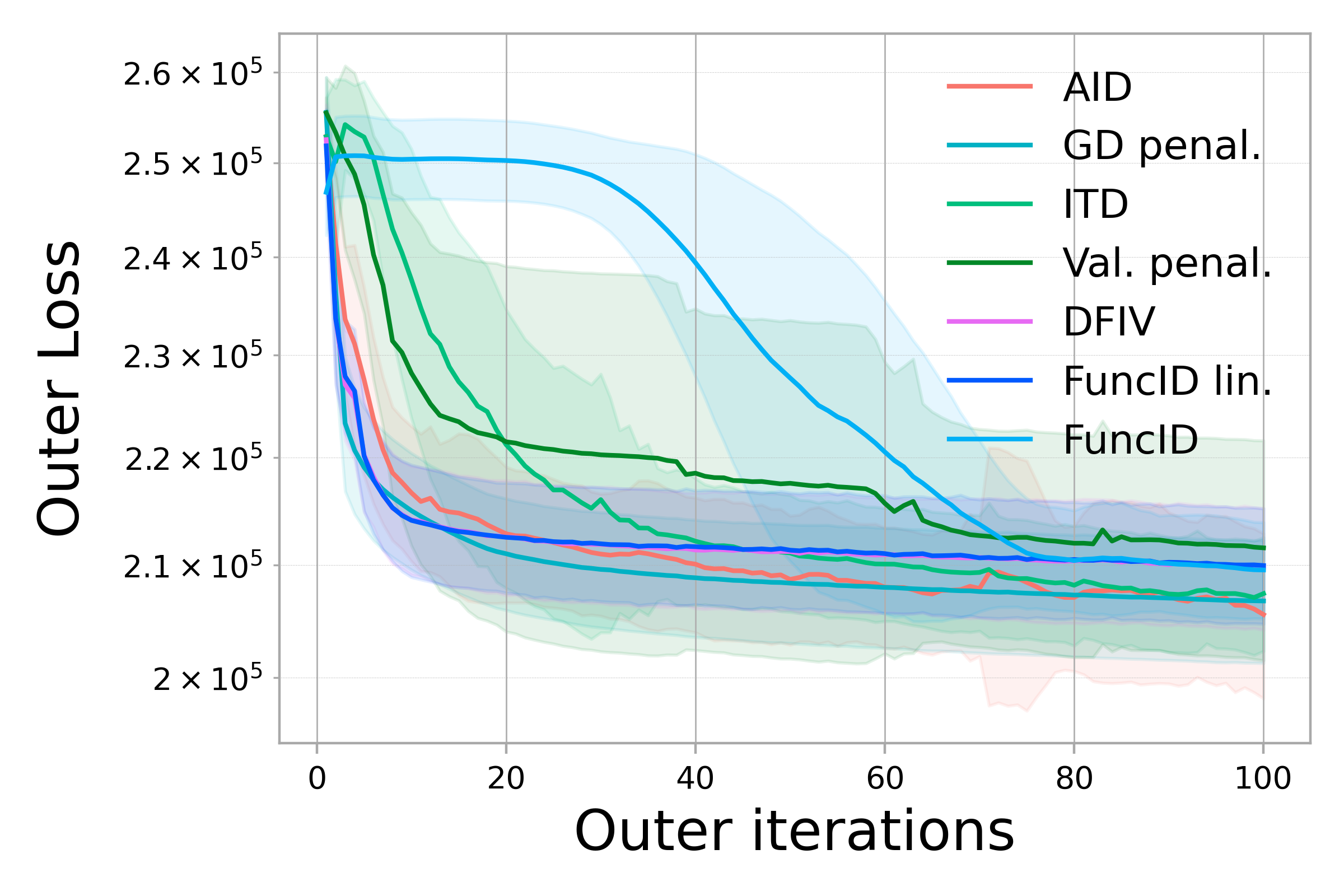}
    \includegraphics[width=0.329\linewidth]{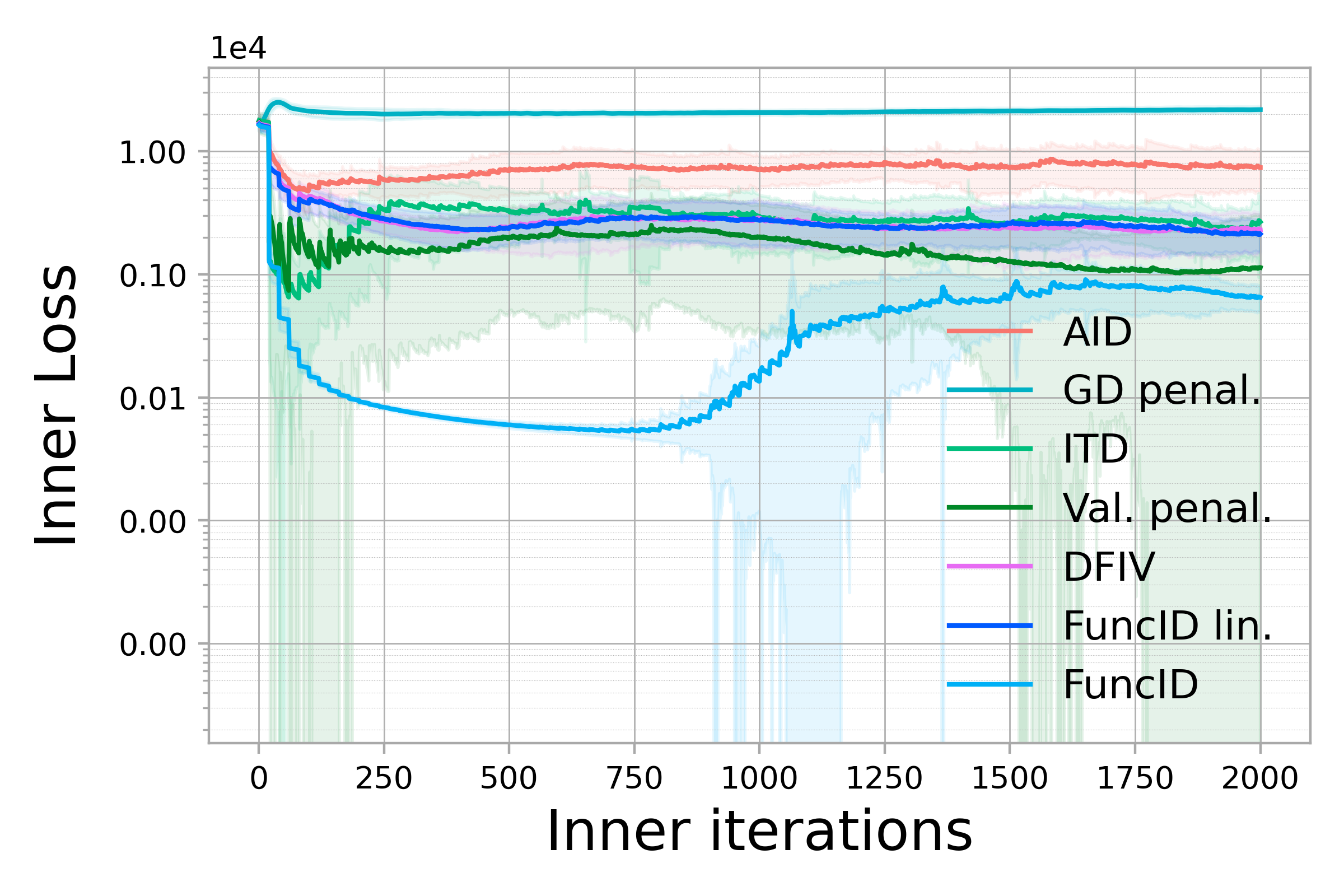}
    \caption{Performance metrics for Instrumental Variable (IV) regression. All results are averaged over 20 runs with 5000 training samples and 588 test samples. {(\bf Left)} box plot of the test loss, with the dashed black line indicating the mean test error. {(\bf Middle)} outer loss vs training iterations, {\bf (Right)} inner loss vs training iterations. The bold lines in the middle and right plots indicate the mean loss, the shaded area corresponds to standard deviation.}
    \label{fig:IV_results}
\end{figure}

\vsp
\paragraph{Results.}
\cref{fig:IV_results} compares structural models learned by different methods using 5K training samples (refer to \cref{fig:IV_results10k} in \cref{appendixDSprites_res} for 10K sample results). The left subplot illustrates out-of-sample mean squared error of learned structural models compared to ground truth outcomes (uncontaminated by noise $\epsilon$), while the middle and right subplots show the evolution of outer and inner objectives over iterations. For the 5K dataset, \emph{FuncID} outperforms DFIV (\textit{p-value}=0.003, one-sided paired t-test), while showing comparable performance on the 10K dataset (\cref{fig:IV_results10k}). AID and ITD perform notably worse, indicating their parametric approach fails to fully leverage the functional structure. 
\emph{FuncID} outperforms the gradient penalty method and performs either better or comparably to the value function penalty method, though the latter shows higher variance with some particularly bad outliers. 
While all methods achieve similar outer losses, this criterion alone is only reliable as an indicator of convergence when evaluated near the ‘exact’ inner-level solution corresponding to the lowest inner-loss values. Interestingly, FuncID obtains the lowest inner-loss values, suggesting its outer-loss is a more reliable indicator of convergence. 

\subsection{Model-based reinforcement learning}
\label{RLapplication}

Model-based reinforcement learning (RL) naturally yields bilevel optimization formulations, since several components of an RL agent need to be learned using different objectives. Recently, \citet{Nikishin:2022} showed that casting model-based RL as a bilevel problem can result in better tolerance to model-misspecification. Our experiments show that the functional bilevel framework yields improved results even when the model is well-specified, suggesting a broader use of the bilevel formulation.

\vsp
\paragraph{Problem formulation.} In model-based RL, the Markov Decision Process (MDP) is approximated by a probabilistic model $q_{\outvar}$ with parameters $\outvar$ that can predict the next state $s_{\outvar}(x)$ and reward $r_{\outvar}(x)$, given a pair $x:=(s,a)$ where $s$ is the current environment state and $a$ is the action of an agent. 
A second model $h$ can be used to approximate the action-value function $\innerpred(x)$ that computes the expected cumulative reward given the current state-action pair. Traditionally, the action-value function is learned using the current MDP model, while the latter is learned independently from the action-value function using Maximum Likelihood Estimation (MLE) \citep{sutton1991dyna}.

In the bilevel formulation of model-based RL by \citet{Nikishin:2022}, the inner-level problem involves learning the optimal action-value function $\innerpred_{\outvar}^{\star}$ with the current MDP model $q_{\outvar}$ and minimizing the Bellman error. The inner-level objective can be expressed as an expectation of a point-wise loss $f$ with samples $(x,r',s')\sim \mathbb{P}$, derived from the agent-environment interaction:
\begin{align}\label{eq:RL_bielvel_inner}
    \innerpred_{\outvar}^{\star} = \arg\min_{\innerpred \in \Hcal} \mathbb{E}_{\mathbb{P}}\brackets{f(\innerpred(x),r_{\outvar}(x),s_{\outvar}(x))}.
\end{align}
Here, the future state and reward $(r',s')$ are replaced by the MDP model predictions $r_{\outvar}(x)$ and $s_{\outvar}(x)$. In practice, samples from  $\mathbb{P}$ are obtained using a replay buffer. The buffer accumulates data over several episodes of interactions with the environment, and can therefore be considered independent of the agent's policy. 
The point-wise loss function $f$ represents the error between the action-value function prediction and the expected cumulative reward given the current state-action pair: 
\begin{align*}
f(v, r',s') := \frac{1}{2}\Verts{ \textstyle v - r' - \gamma \log\sum_{a'}e^{\bar{h}(s', a')}}^2,
\end{align*}
with $\bar{h}$ a lagged version of $\innerpred$ (exponentially averaged network) and $\gamma$ a discount factor. The MDP model is learned implicitly using the optimal function $\innerpred_{\outvar}^{\star}$, by minimizing the Bellman error w.r.t.~$\outvar$: 
\begin{align}\label{eq:RL_bielvel_outer}
     \min_{\outvar\in\Omega}\ &\mathbb{E}_{\mathbb{P}}\brackets{f(\innerpred_{\outvar}^{\star}(x), r',s')}.
\end{align}

\cref{eq:RL_bielvel_outer,eq:RL_bielvel_inner} define a bilevel problem as in  \cref{eq:population_losses}, where $\mathbb{Q}=\mathbb{P}$, $y=(r',s')$, and the point-wise losses $\pointWinOBJ$ and $\pointWoutOBJ$ are given by: $\pointWinOBJ\parens{\outvar,v,x,y}= f\parens{v, r_{\outvar}(x),s_{\outvar}(x)}$ and $\pointWoutOBJ\parens{\outvar, v,x,y}=f\parens{v,r',s'}$. 
Therefore, we can directly apply \cref{alg:func_diff_alg} to learn both the MDP model $q_{\outvar}$ and the optimal action-value function $\innerpred_{\outvar}^{\star}$.

\vsp
\paragraph{Experimental setup.}
We apply \emph{FuncID} to the \textit{CartPole} control problem, a classic benchmark in reinforcement learning~\citep{brockman2016openai,nagendra2017comparison}. The goal is to balance a pole attached to a cart by moving the cart horizontally. Following \citet{Nikishin:2022}, we use a model-based approach and consider two scenarios: one with a well-specified network accurately representing the MDP, and another with a misspecified model having fewer hidden layer units, limiting its capacity. Using the bilevel formulation in \cref{eq:RL_bielvel_inner,eq:RL_bielvel_outer}, we compare \emph{FuncID} with the Optimal Model Design (OMD) algorithm \citep{Nikishin:2022}, a variant of AID. Additionally, we compare against a standard single-level model-based RL formulation using Maximum Likelihood Estimation (MLE) \citep{sutton1991dyna}. For the adjoint function in \emph{FuncID}, we derive a simple closed-form expression based on the structure of the adjoint objective (see \cref{appendixRL_adj}). We follow the experimental setup of \citet{Nikishin:2022}, providing full details and hyperparameters in \cref{appendixRL_exp}.

\vsp
\paragraph{Results.}
\begin{figure}
    \centering
    \includegraphics[width=0.41\linewidth]{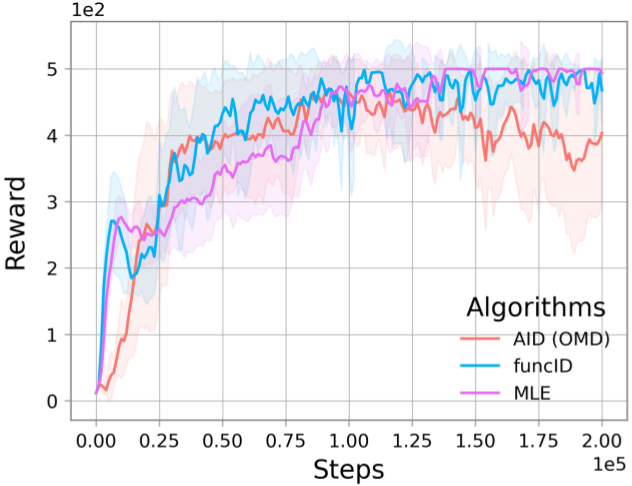}
    \includegraphics[width=0.41\linewidth]{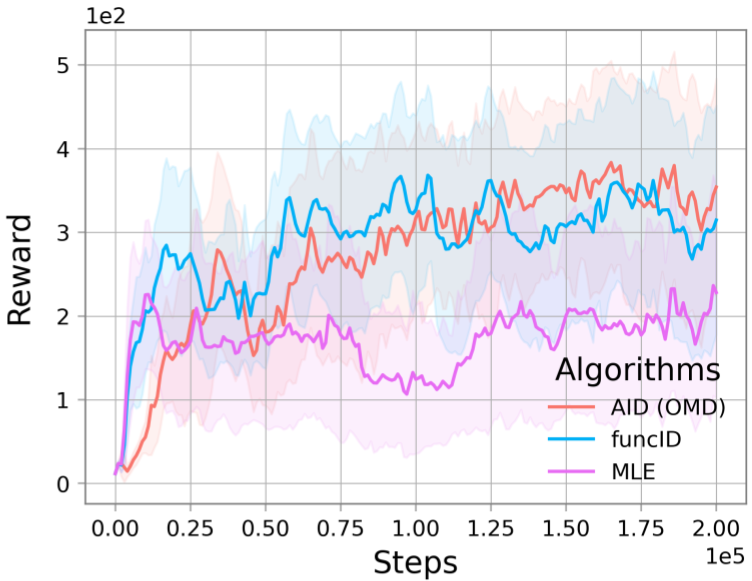}
    \caption{Average reward on an evaluation environment vs. training iterations on the \textit{CartPole} task. {(\bf Left)} Well-specified model with 32 hidden units. {(\bf Right)} Misspecified model with 3 hidden units. Both plots show mean reward over 10 runs where the shaded region is the 95\% confidence interval.}
    \label{fig:RL_reward}
\end{figure}
Figure~\ref{fig:RL_reward} illustrates the training reward evolution for \emph{FuncID}, OMD, and MLE in both well-specified and misspecified scenarios. \emph{FuncID} consistently performs well across settings. In the well-specified case, where OMD achieves a reward of $4$, \emph{FuncID} reaches the maximum reward of $5$, matching MLE (left \cref{fig:RL_reward}). In the misspecified scenario, \emph{FuncID} performs comparably to OMD and significantly outperforms MLE (right \cref{fig:RL_reward}). Moreover, \emph{FuncID} tends to converge faster than MLE (see \cref{RL_time} in \cref{appendixRL_results}) and yields consistently better prediction error than OMD (see \cref{RL_pred_err} in \cref{appendixRL_results}). These findings align with \citet{Nikishin:2022}, suggesting that MLE may prioritize minimizing prediction errors, potentially leading to overfitting irrelevant features. In contrast, OMD and \emph{FuncID} focus on maximizing expected returns, especially in the presence of model misspecification. Our results highlight the effectiveness of~(\ref{def:funcBO}) even in well-specified settings, suggesting, for future work, further investigations for more general RL tasks.
\vsp
\section{Discussion and concluding remarks}

This paper introduced a functional paradigm for bilevel optimization in machine learning, shifting focus from parameter space to function space. 
The proposed approach specifically addresses the ambiguity challenge arising from using deep networks in bilevel optimization.
The paper establishes the validity of the functional framework by developing 
a theory of functional implicit differentiation, proving convergence for the proposed \emph{FuncID} method, and numerically comparing it with other bilevel optimization methods.

The theoretical foundations of our work rely on several key assumptions worth examining. While our convergence guarantees assume both inner and adjoint optimization problems are solved to some optimality, this assumption is supported by recent results on global convergence in over-parameterized networks \citep{allen2019convergence,liu2022loss}. However, quantifying these optimality errors more precisely and understanding their relationship to optimization procedures remains an open challenge. Additionally, like other bilevel methods, our approach requires careful hyperparameter selection, which can impact practical implementation.

Several promising directions emerge for future research. While we focus on $L_2$ spaces, exploring alternative function spaces (such as Reproducing Kernel Hilbert Spaces or Sobolev spaces) could reveal additional advantages for specific applications. Furthermore, extending our framework to non-smooth objectives or constrained optimization problems, potentially building on existing work in non-smooth implicit differentiation \citep{bolte2022automatic}, would broaden its applicability. 

\newpage
\section*{Acknowledgments}
This work was supported by the ERC grant number 101087696 (APHELAIA project) and by ANR 3IA MIAI@Grenoble Alpes (ANR-19-P3IA-0003) and the ANR project BONSAI (grant ANR-23-CE23-0012-01). We thank Edouard Pauwels and Samuel Vaiter for their insightful discussions.
\bibliography{funcBOpaper.bib}

\newpage
\appendix
\onecolumn

\section{Examples of~\ref{def:funcBO} formulations}\label{appendixFBOexamples}

The functional bilevel setting applies to various practical bilevel problems where objectives depend solely on model predictions, not their parameterization. Below, we discuss a few examples.

\paragraph{Auxiliary task learning.} 
As in \cref{eq:supervised}, consider a \emph{main} prediction task with features $x$ and labels $y$, equipped with a loss function $f(y,\innerpred(x))$. 
The goal of auxiliary task learning is to learn how a set of auxiliary tasks represented by a vector $f_{aux}(y,\innerpred(x))$ could help solve the main task. This problem is formulated by \citet{navon2021auxiliary} as a bilevel problem, which can be written as~(\ref{def:funcBO}) with
\begin{displaymath}
    \outOBJ(\outvar,\innerpred)  = {\mathbb E}_{(y,x)\sim {\mathcal{D}}_{val}}\left[ f(y,\innerpred(x)) \right],
    \end{displaymath}
where the loss is evaluated over a validation dataset $\mathcal{D}_{val}$, and
    \begin{displaymath}
    \inOBJ(\outvar,\innerpred)  = {\mathbb E}_{(y,x)\sim {\mathcal{D}}_{train}}\left[ f(y,\innerpred(x)) + g_\outvar(f_{aux}(y,\innerpred(x)))\right],
\end{displaymath}
where an independent training dataset $\mathcal{D}_{train}$ is used, and $g_\outvar$ is a function that combines the auxiliary losses into a scalar value. 

\paragraph{Task-driven metric learning.} 
Considering now a regression problem with features $x$ and labels $y$, the goal of task-driven metric learning formulated by~\citet{Bansal:2023a} is to learn a metric parameterized by $\outvar$ for the regression task such that the corresponding predictor $\innerpred_\outvar^\star$ performs well on a downstream task $L_{task}$. This can be formulated as~(\ref{def:funcBO}) with $\outOBJ(\outvar,\innerpred) = L_{task}(\innerpred)$ and
\begin{displaymath}
 \inOBJ(\outvar,\innerpred)  = {\mathbb E}_{(y,x)}\left[ \| y - \innerpred(x)\|_{A_{\outvar}(x)}^2 \right],
 \end{displaymath}
where $\Vert \cdot \Vert^2_{\outvar}$ is the squared Mahalanobis norm with parameters $\outvar$ and $A_{\outvar}(x)$ is a data-dependent metric that allows emphasizing features that are more important for the downstream task.

\section{Additional Related Work}\label{appendixRW}

\paragraph{Bilevel optimization in machine learning.}

Two families of bilevel methods are prevalent in machine learning due to their scalability: iterative (or 'unrolled') differentiation (ITD, \citealp{iter_bilevel}) and Approximate Implicit Differentiation (AID, \citealp{Ghadimi2018ApproximationMF}). ITD approximates the optimal inner-level solution using an 'unrolled' function from a sequence of differentiable optimization steps, optimizing the outer variable via back-propagation \citep{Shaban:2019,bolte2024one}. The gradient approximation error decreases linearly with the number of steps when the inner-level is strongly convex, though at increased computational and memory costs \cite[Theorem 2.1]{grazzi2020iteration}. ITD is popular for its simplicity and availability in major deep learning libraries \citep{jax2018github}, but can be unstable, especially with non-convex inner objectives \citep{pascanu2013difficulty,bengio1994learning,arbel2022non}.
AID uses the Implicit Function Theorem (IFT) to derive the Jacobian of the inner-level solution with respect to the outer variable \citep{Lorraine2019OptimizingMO,Pedregosa:2016}, solving a finite-dimensional linear system for an adjoint vector representing optimality constraints. AID offers strong convergence guarantees for smooth, strongly convex inner objectives \citep{Ji:2021a,Arbel:2021a}. However, without strong convexity, the linear system can become ill-posed due to a degenerate Hessian, leading to instabilities, especially with overparameterized deep neural networks. Our proposed approach avoids this issue, even when using deep networks for function approximation.

\paragraph{Adjoint sensitivity method.}

The adjoint sensitivity method \citep{pontryagin2018mathematical} efficiently differentiates a controlled variable with respect to a control parameter. In bilevel optimization, AID applies a finite-dimensional version of this method~\citep[Section 2]{margossian2021efficient}. Infinite-dimensional versions have differentiated solutions of ordinary differential equations (ODEs) with respect to defining parameters~\citep[Section 3]{margossian2021efficient}, and have been used in machine learning to optimize parameters of a vector field describing an ODE \citep{chen2018neural}. Here, the ODE's vector field, parameterized by a neural network, is optimized to match observations. The adjoint sensitivity method offers an efficient alternative to the unstable process of back-propagation through ODE solvers, requiring only the solution of an adjoint ODE to compute gradient updates, improving performance \citep{jia2019neural, yaofeng2019symplectic, li2020scalable}. This method has been adapted to meta-learning \citep{li2023meta}, viewing the inner optimization as an ODE evolution with gradients obtained via the adjoint ODE. 
Recently, \cite{marion2024implicit} employ the adjoint sensitivity method for optimizing diffusion models, where an adjoint SDE is solved to compute the total gradient. Unlike these works, which use the adjoint method for ODE/SDE solutions as functions of time, our work applies an infinite-dimensional version of the adjoint sensitivity method to general learning problems, where solutions are functions of input data.

\paragraph{Amortization.} 

Recently, several methods have used amortization to approximately solve bilevel problems \citep{MacKay:2019,Bae:2020}. These methods employ a parametric model called a \emph{hypernetwork} \citep{ha2017hypernetworks,brock2017smash,zhang2018graph}, optimized to directly predict the inner-level solution given the outer-level parameter $\outvar$. Amortized methods treat the two levels as independent optimization problems: (1) learning the hypernetwork for a range of $\outvar$, and (2) performing first-order descent on $\outvar$ using the hypernetwork as a proxy for the inner-level solution. Unlike ITD, AID, or our functional implicit differentiation method, amortized methods do not fully exploit the implicit dependence between the two levels. They are similar to amortized variational inference \citep{Kingma2014,rezende2014stochastic}, where a parametric model produces approximate samples from a posterior distribution. Amortization methods perform well when the inner solution's dependence on $\outvar$ is simple but may fail otherwise \citep[pages 71-72]{amortization_tuto}. In contrast, our functional implicit differentiation framework adapts to complex implicit dependencies between the inner solution and $\outvar$.

\section{Theory for Functional Implicit Differentiation}\label{appendixProof_functional}

\subsection{Preliminary results}\label{sec:preliminary}
We recall the definition of Hadamard differentiability and provide in \cref{prop:hadamard_diff} a general property for Hadamard differentiable maps that we will exploit to prove \cref{thm:implicit_function} in \cref{proof_func_implicit}. 
\begin{definition}{\bf Hadamard differentiability.}\label{def:parametric_diff}
Let $A$ and $B$ be two separable Banach spaces.  
A function $L: A\rightarrow B$ is said to be \emph{Hadamard differentiable} \citep{efficiencyHadamard1991aad,inference2018fang} if for any $a\in A$, there exist a continuous linear map $d_a L(a): A\rightarrow B$ so that for any sequence $(u_n)_{n\geq 1}$ in $A$ converging to an element $u\in A$, and any real valued and non-vanishing sequence $(t_n)_{n\geq 1}$ converging to $0$, it holds that:
\begin{align}
	\Verts{\frac{1}{t_n}\parens{L(a+t_nu_n)-L(a)}- d_aL(a)u}\xrightarrow[n\rightarrow +\infty]{}  0. 
\end{align}
\end{definition}

\begin{proposition}\label{prop:hadamard_diff}
Let $A$ and $B$ be two separable Banach spaces. 
Let $L: A\rightarrow B$ be a Hadamard differentiable map with differential $d_{a}L$ at point $a$. Consider a bounded linear map defined over a euclidean space $\mathbb{R}^n$ of finite dimension~$n$ and taking values in $A$, i.e $J:\mathbb{R}^n\rightarrow A$. Then, the following holds:
\begin{align*}
	L(a + Ju)= L(a) + d_aL(a)Ju +o(\Verts{u}).  
\end{align*}
\end{proposition}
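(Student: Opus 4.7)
The plan is to reduce the claimed $o(\|u\|)$ estimate to the sequential condition in \cref{def:parametric_diff} by exploiting the compactness of the unit sphere in $\mathbb{R}^n$. The key observation is that Hadamard differentiability is strictly stronger than Gâteaux differentiability precisely because it applies along varying sequences, and this strength is exactly what lets us pass from the sequential definition to a uniform remainder estimate when the input is pushed in through a bounded linear map from a finite-dimensional space.

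I would argue by contradiction. Suppose the stated expansion fails: there exist $\epsilon>0$ and a sequence $(u_n)_{n\geq 1}$ in $\mathbb{R}^n$ with $u_n\neq 0$ and $u_n\to 0$ such that
\begin{equation*}
  \bigl\Vert L(a+Ju_n)-L(a)-d_aL(a)Ju_n\bigr\Vert_B \;\geq\; \epsilon\,\Vert u_n\Vert.
\end{equation*}
Set $t_n:=\Vert u_n\Vert$ and $v_n:=u_n/t_n$, so $t_n\to 0^+$ and $(v_n)$ lies on the unit sphere of $\mathbb{R}^n$. By compactness of the unit sphere in finite dimensions, up to extracting a subsequence we may assume $v_n\to v$ for some $v$ with $\Vert v\Vert=1$.

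Now I would push everything through $J$ and apply the Hadamard definition. Since $J$ is bounded linear, $w_n := Jv_n \to Jv =: w$ in $A$, and moreover $Ju_n = t_n w_n$. Applying \cref{def:parametric_diff} to the sequence $w_n\to w$ in $A$ together with the non-vanishing real sequence $t_n\to 0$ yields
\begin{equation*}
  \Bigl\Vert \tfrac{1}{t_n}\bigl(L(a+t_n w_n)-L(a)\bigr) - d_aL(a)\,w\Bigr\Vert_B \;\xrightarrow[n\to\infty]{}\; 0.
\end{equation*}
On the other hand, by linearity $\tfrac{1}{t_n}d_aL(a)Ju_n = d_aL(a)Jv_n$, and since both $J$ and $d_aL(a)$ are bounded linear, the composition $d_aL(a)\circ J:\mathbb{R}^n\to B$ is continuous, hence $d_aL(a)Jv_n\to d_aL(a)Jv = d_aL(a)w$ in $B$.

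Subtracting the two convergences shows that
\begin{equation*}
  \tfrac{1}{t_n}\bigl\Vert L(a+Ju_n)-L(a)-d_aL(a)Ju_n\bigr\Vert_B \;\longrightarrow\; 0,
\end{equation*}
which contradicts the lower bound $\geq \epsilon$ from the contradiction hypothesis and completes the proof. The only delicate point is making sure the Hadamard condition is invoked with the correct varying sequence $w_n\to w$ (rather than a constant sequence, which would only give Gâteaux-type information); the compactness of the sphere in $\mathbb{R}^n$ is what supplies this convergent direction, so the argument genuinely uses finiteness of the dimension of the domain of $J$.
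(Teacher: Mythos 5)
Your proof is correct and uses essentially the same mechanism as the paper's: compactness of the unit sphere in $\mathbb{R}^n$ to extract a convergent direction $v_n\to v$, then invoking Hadamard differentiability along the varying sequence $Jv_n\to Jv$. The only difference is organizational — you argue by direct contradiction with the negation of the $o(\Verts{u})$ claim, whereas the paper first establishes boundedness of the normalized error and then shows every convergent subsequence tends to zero; your version is a slightly more streamlined route to the same conclusion.
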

\begin{proof}
	Consider a sequence $(u_k)_{k\geq 1}$ in $\mathbb{R}^n$ so that $u_k$ converges to $0$ with $\Verts{u_k}>0$ for all $k\geq 1$ and define the  first order error $E_k$ as follows:
	\begin{align*}
		E_k = \frac{1}{\Verts{u_k}}\Verts{L(a+Ju_k)-L(a) - d_aL(a)u_k}.
	\end{align*}
The goal is to show that $E_k$ converges to $0$. 
We can write $u_k$ as $u_k = t_k \tilde{u}_k$ with $t_k = \Verts{u_k}$ and $\Verts{\tilde{u}_k}=1$, so that: 
\begin{align*}
	E_k = \Verts{\frac{1}{\Verts{t_k}}\parens{L(a+t_kJ\tilde{u}_k)-L(a)} - d_aL(a)\tilde{u}_k}.
\end{align*}
If $E_k$ were unbounded, then, by contradiction, there must exist a subsequence $(E_{\phi(k)})_{k\geq 1}$  converging to $+\infty$, with $\phi(k)$ increasing and $\phi(k)\rightarrow +\infty$. Moreover, since $\tilde{u}_k$ is bounded, one can further choose the subsequence $E_{\phi(k)}$ so that $\tilde{u}_{\phi(k)}$ converges to some element $\tilde{u}$. 
We can use the following upper-bound:
\begin{align}\label{eq:upper-bound_hadamard}
	E_{k} \leq 
	\underbrace{\Verts{\frac{1}{\Verts{t_{k}}}\parens{L(a+t_kJ\tilde{u}_k)-L(a)} - d_aL(a)\tilde{u}}}_{\tilde{E}_k} + \Verts{d_aL(a)}\Verts{\tilde{u}_k-\tilde{u}},
\end{align}
where we used that $d_aL(a)$ is bounded. Since $L$ is Hadamard differentiable, $\tilde{E}_{\phi(k)}$ converges to $0$. Moreover, $\Verts{\tilde{u}_{\phi(k)}{-}\tilde{u}}$ also converges to $0$. Hence, $E_{\phi(k)}$ converges to $0$ which  contradicts $E_{\phi(k)}{\rightarrow}{+}\infty$. Therefore, $E_k$ is bounded.

Consider now any convergent subsequence of $(E_k)_{k\geq 1}$. Then, it can be written as $(E_{\phi(k)})_{k\geq 1}$ with $\phi(k)$ increasing and $\phi(k)\rightarrow +\infty$. We then have $E_{\phi(k)}\rightarrow e<+\infty$ by construction.  Since $\tilde{u}_k$ is bounded, one can further choose the subsequence $E_{\phi(k)}$ so that $\tilde{u}_{\phi(k)}$ converges to some element $\tilde{u}$. Using again \cref{eq:upper-bound_hadamard} and the fact that $L$ is Hadamard differentiable, we deduce that $\tilde{E}_{\phi(k)}$  must converge to $0$, and by definition of $\tilde{u}_{\phi(k)}$, that $\Verts{\tilde{u}_{\phi(k)}-\tilde{u}}$ converges to $0$. Therefore, it follows that $E_{\phi(k)}\rightarrow 0$, so that $e=0$. We then have shown that $(E_k)_{k\geq 1}$ is a bounded sequence and every subsequence of it converges to $0$. Therefore, $E_k$ must converge to $0$, which concludes the proof.	
\end{proof}

\subsection{Proof of the Functional implicit differentiation theorem}\label{proof_func_implicit}

\begin{proof}[Proof of \cref{thm:implicit_function}] The proof strategy consists in establishing the existence and uniqueness of the solution map $\outvar\mapsto h_{\outvar}^{\star}$, deriving a candidate Jacobian for it, then proving that $\outvar\mapsto h_{\outvar}^{\star}$ is differentiable. 

{\bf Existence and uniqueness of a solution map $\outvar\mapsto \innerpred_{\outvar}^{\star}$.} 
Let $\outvar $ in $\Omega$ be fixed. The map $h\mapsto \inOBJ(\outvar,\innerpred)$ is lower semi-continuous since it is Fr\'{e}chet differentiable by assumption. It is also strongly convex. Therefore, it admits a unique minimizer $h_{\outvar}^{\star}$ \citep[Corollary 11.17]{Bauschke:2011}. We then conclude that the map $\outvar\mapsto h_{\outvar}^{\star}$ is well-defined on $\Omega$.

{\bf Strong convexity inequalities.}
We provide two inequalities that will be used for proving differentiability of the map $\outvar\mapsto h_{\outvar}^{\star}$. The map $h\mapsto \inOBJ(\outvar,\innerpred)$ is Fr\'{e}chet differentiable on $\Hcal$ and $\mu$-strongly convex (with $\mu$ positive by assumption). Hence, for all $h_1, h_2$ in $ \Hcal$ the following quadratic lower-bound holds:
\begin{align}\label{eq:quad_lower_bound}
	\inOBJ(\outvar,\innerpred_2)\geq \inOBJ(\outvar,\innerpred_1) + \langle\partial_{\innerpred}\inOBJ(\outvar,\innerpred_1),(\innerpred_2-\innerpred_1)\rangle_{\mathcal{H}} + \frac{\mu}{2}\Verts{\innerpred_2-\innerpred_1}_{\mathcal{H}}^2.
\end{align} 
From the inequality above, we can also deduce that  $h\mapsto \partial_{\innerpred}\inOBJ(\outvar,h)$ is a $\mu$-strongly monotone operator:
\begin{align}\label{eq:monotone_op}
	\langle\partial_{\innerpred}\inOBJ(\outvar,\innerpred_1)-\partial_{\innerpred}\inOBJ(\outvar,\innerpred_2), \innerpred_1-\innerpred_2 \rangle_{\mathcal{H}}\geq \mu\Verts{\innerpred_1-\innerpred_2}_{\Hcal}^2.
\end{align} 
Finally, note that, since $h\mapsto\inOBJ(\outvar,h)$ is Fr\'{e}chet differentiable, its gradient must vanish at the optimum $h_{\outvar}^{\star}$, i.e : 
\begin{align}\label{eq:equilibrium}
\partial_{h}\inOBJ(\outvar,h_{\outvar}^{\star}) = 0.    
\end{align}

{\bf Candidate Jacobian for $\outvar\mapsto \innerpred_{\outvar}^{\star}$.}
Let $\outvar$ be in $\Omega$.  Using \cref{eq:monotone_op} with $h_1 = h + tv$ and $h_2=h$ for some $h,v\in \mathcal{H}$, and a non-zeros real number $t$ we get:
\begin{align}
	\frac{1}{t}\langle \partial_h\inOBJ(\outvar,h+tv)-\partial_h \inOBJ(\outvar,h) , v \rangle_{\mathcal{H}}\geq \mu \Verts{v}^2.
\end{align}
By assumption, $h\mapsto \partial_h \inOBJ(\outvar,h)$ is Hadamard differentiable and, a fortiori, directionally differentiable. Thus, by taking the limit when $t\rightarrow 0$, it follows that:
\begin{align}
	\langle\partial_h^2 \inOBJ(\outvar,h)v,v\rangle_{\mathcal{H}}\geq \mu \Verts{v}^2.
\end{align}
Hence, $\partial_h^2 \inOBJ(\outvar,h): \mathcal{H}\rightarrow \mathcal{H}$ defines a coercive quadratic form. By definition of Hadamard differentiability, it is also bounded.  Therefore, it follows from Lax-Milgram's theorem \citep[Theorem 4.3.16]{Debnath:2005}, that  $\partial_h^2 \inOBJ(\outvar,h)$ is invertible with a bounded inverse. Moreover, recalling that $B_{\outvar} = \partial_{\outvar,h}\inOBJ(\outvar,h_{\outvar}^{\star})$ is a bounded operator, its adjoint $(B_{\outvar})^{\star}$ is also a bounded operator from $\Omega$ to $\mathcal{H}$. Therefore, we can define $J = -C_{\outvar}^{-1}(B_{\outvar})^{\star}$ which is a bounded linear map from $\Omega$ to $\mathcal{H}$ and will be our candidate Jacobian. 
  
{\bf Differentiability of $\outvar\mapsto \innerpred_{\outvar}^{\star}$.}
By the strong convexity assumption (locally in $\outvar$), there exists an open ball $\mathcal{B}$ centered at the origin $0$ that is small enough so that we can ensure the existence of $\mu>0$ for which $h\mapsto \inOBJ(\outvar+\epsilon,h)$ is $\mu$-strongly convex for all $\epsilon \in \mathcal{B}$. For a given $\epsilon\in \mathcal{B}$, we use the $\mu$-strong monotonicity of $h\mapsto\partial_{\innerpred}\inOBJ(\outvar+\epsilon,h)$ (\ref{eq:monotone_op}) at points $h_{\outvar}^{\star} + J\epsilon $ and $h_{\outvar+\epsilon}^{\star}$ to get:
\begin{align*}
	\mu \! \Verts{h_{\outvar+\epsilon}^{\star}-h_{\outvar}^{\star}- \! J\epsilon}^2 
	&\leq  \langle \parens{\partial_{h}\inOBJ\parens{\outvar+\epsilon, h_{\outvar+\epsilon}^{\star}}- \partial_{h}\inOBJ\parens{\outvar+\epsilon, h_{\outvar}^{\star} + J\epsilon}},\parens{h_{\outvar+\epsilon}^{\star}-h_{\outvar}^{\star}-\! J\epsilon}\rangle_{\mathcal{H}}\\
	&= 
	\langle -\partial_{h}\inOBJ\parens{\outvar+\epsilon, h_{\outvar}^{\star} + J\epsilon},\parens{h_{\outvar+\epsilon}^{\star}-h_{\outvar}^{\star}-J\epsilon}\rangle_{\mathcal{H}}\\
	&\leq 
	\Verts{\partial_{h}\inOBJ\parens{\outvar+\epsilon, h_{\outvar}^{\star} + J\epsilon}}_{\mathcal{H}}\Verts{h_{\outvar+\epsilon}^{\star}-h_{\outvar}^{\star}-J\epsilon}_{\mathcal{H}}, 
\end{align*}
where the second line follows from optimality of $h^\star_{\outvar+\epsilon}$ (\cref{eq:equilibrium}), and the last line uses Cauchy-Schwarz's inequality. The above inequality allows us to deduce that: 
\begin{align}\label{eq:control_h}
\Verts{h_{\outvar+\epsilon}^{\star}-h_{\outvar}^{\star}-J\epsilon}\leq \frac{1}{\mu}\Verts{\partial_{h}\inOBJ\parens{\outvar+\epsilon, h_{\outvar}^{\star} + J\epsilon}}_{\mathcal{H}}.
\end{align}
Moreover, since $\partial_h\inOBJ$ is Hadamard differentiable, by \cref{prop:hadamard_diff} it follows that:
\begin{align}\label{eq:first_order_expansion}
	\partial_{h}\inOBJ\parens{\outvar+\epsilon, h_{\outvar}^{\star} + J\epsilon} =& \underbrace{\partial_{h}\inOBJ\parens{\outvar, h_{\outvar}^{\star}}}_{=0} + d_{(\outvar,\innerpred)}\partial_{\innerpred}\inOBJ(\outvar,h_{\outvar}^{\star})(\epsilon,J\epsilon) +o(\Verts{\epsilon}),
\end{align}
where the first term vanishes as a consequence of \cref{eq:equilibrium}, since  $h_{\outvar}^{\star}$ is a minimizer of $h\mapsto \inOBJ(\outvar,h)$. 
Additionally, note that the differential $d_{(\outvar,\innerpred)}\partial_{\innerpred}\inOBJ(\outvar,h): \Omega\times \mathcal{H}\rightarrow \mathcal{H}$ acts on elements $(\epsilon,g) \in \Omega\times \mathcal{H}$ as follows:
\begin{align}\label{eq:diff_obj}
	d_{(\outvar,\innerpred)}\partial_{\innerpred}\inOBJ(\outvar,h)(\epsilon,g) = \partial_{h}^2\inOBJ(\outvar,h)g + \parens{\partial_{\outvar,h}\inOBJ(\outvar,h)}^{\star}\epsilon,
\end{align}
where $\partial_{h}^2\inOBJ(\outvar,h):\mathcal{H}\rightarrow \mathcal{H}$ and $\partial_{\outvar,\innerpred}\inOBJ(\outvar,h):\mathcal{H}\rightarrow \Omega$ are bounded operators and $\parens{\partial_{\outvar,h}\inOBJ(\outvar,h)}^{\star}$ denotes the adjoint of $\partial_{\outvar,h}\inOBJ(\outvar,h)$. 
By definition of $J$, and using \cref{eq:diff_obj}, it follows that:
\begin{align*}
	d_{(\outvar,\innerpred)}\partial_{\innerpred}\inOBJ(\outvar,h)(\epsilon,J\epsilon) &= C_{\outvar}J\epsilon+ B_{\outvar}\epsilon = 0.
\end{align*}
Therefore, combining \cref{eq:first_order_expansion} with the above equality yields:
\begin{align}
    \partial_{h}\inOBJ\parens{\outvar+\epsilon, h_{\outvar}^{\star} + J\epsilon} = o(\Verts{\epsilon}).
\end{align}
Finally, combining \cref{eq:control_h} with the above equality directly shows that $\Verts{h_{\outvar+\epsilon}^{\star}-h_{\outvar}^{\star}-J\epsilon} \leq \frac{1}{\mu} o(\Verts{\epsilon})$. We have shown that $\outvar\mapsto h_{\outvar}^{\star}$ is differentiable with a Jacobian map $\partial_{\outvar}h_{\outvar}^{\star}$ given by $J^{\star} = - B_{\outvar}C_{\outvar}^{-1}$. 
\end{proof}

\subsection{Proof of the functional adjoint  sensitivity in \cref{prop:implicit_diff}}

\begin{proof}[Proof of \cref{prop:implicit_diff}]\label{proof:imp_diff}
We use the assumptions and definitions from \cref{prop:implicit_diff} and express the gradient $\outgrad$ using the chain rule:
\begin{align*}
    \outgrad = \partial_{\outvar} \outOBJ(\outvar,\innerpred^{\star}_{\outvar}) +  \left[\partial_{\outvar} \innerpred^{\star}_{\outvar}\right] \partial_\innerpred \outOBJ(\outvar,\innerpred^{\star}_{\outvar}).
\end{align*}
The Jacobian $\partial_{\outvar} \innerpred^{\star}_{\outvar}$ is the solution of a linear system obtained by applying \cref{thm:implicit_function} :
\begin{align*}
 \Jacobian=-\crossDeriv_{\outvar} \Hessian_{\outvar}^{-1}.
\end{align*}
We note $g_{\outvar}=\partial_{\outvar} \outOBJ(\outvar,\innerpred^{\star}_{\outvar})$ and $\uppergrad_{\outvar}=\partial_\innerpred \outOBJ(\outvar,\innerpred^{\star}_{\outvar})$. It follows that the gradient $\outgrad$ can be expressed as:
\begin{align*}
    &\outgrad = g_{\outvar} + \left[\partial_{\outvar} \innerpred^{\star}_{\outvar}\right]\uppergrad_{\outvar} =
    g_{\outvar} + {\crossDeriv}_{\outvar}\adjoint_{\outvar}^\star\\
    &\adjoint_{\outvar}^\star := -\Hessian_{\outvar}^{-1} \uppergrad_{\outvar}.
\end{align*}
In other words, the implicit gradient $\outgrad$ can be expressed using the adjoint function $\adjoint_{\outvar}^\star$, which is an element of $\Hcal$ and can be defined as the solution of the following functional regression problem:
\begin{align*}
\adjoint_{\outvar}^\star =\arg\min_{\adjoint\in \Hcal} \dualOBJ(\outvar, \adjoint) := \tfrac{1}{2}\ \langle \adjoint, \Hessian_{\outvar} \adjoint \rangle_{\Hcal} + \langle \adjoint, d_{\outvar}\rangle_{\Hcal}.
\end{align*}
\end{proof}

\section{Functional Adjoint Sensitivity Results in $L_2$ Spaces}\label{appendix_proof_l2}
In this section we provides full proofs of \cref{prop:L2_proposition}. We start by stating the assumptions needed on the data distributions and the point-wise losses in \cref{ass:L2_proof_assump}, then provide some differentiation results in \cref{sec:diff_results} and conclude with the main proofs in \cref{mainproof}.

\subsection{Assumptions}\label{ass:L2_proof_assump} 

{\bf Assumption on $\mathbb{P}$ and $\mathbb{Q}$.}
\begin{assumplist}
	\item \label{assump:second_moment} $\mathbb{P}$ and $\mathbb{Q}$ admit finite second moments. 
	\item \label{assump:radon_nikodym} The marginal of $X$ w.r.t.~$\mathbb{Q}$ admits a Radon-Nikodym derivative $r(x)$ w.r.t.~the marginal of $X$ w.r.t.~$\mathbb{P}$, i.e. $\diff\mathbb{Q}(x,\mathcal{Y}) = r(x)\diff\mathbb{P}(x,\mathcal{Y})$. Additionally, $r(x)$ is upper-bounded by a positive constant $M$.
\end{assumplist}

{\bf Assumptions on $\ell_{in}$.}

\begin{assumplist}[resume]
\item \label{assump:strong_convexity} 
For any $\outvar\in \Omega$, there exists a positive constant $\mu$ and a  neighborhood $B$ of $\outvar$ for which $\pointWinOBJ$ is $\mu$-strongly convex in its second argument for all $(\outvar',x,y)\in B\times\Xcal\times\Ycal$.
\item \label{assump:integrability_l_in} 
For any $\outvar\in \Omega$, 
$\mathbb{E}_{\mathbb{P}}\brackets{ \verts{\ell_{in}(\outvar,0,x,y)} + \Verts{\partial_{v}\ell_{in}\parens{\outvar,0,x,y}}^2} <+\infty$.
\item\label{assump:continous_diff_out} $v\mapsto \pointWinOBJ(\outvar,v,x,y)$ is continuously differentiable for all $(\outvar,x,y)\in \Omega\times\Xcal\times\Ycal$.
\item \label{assump:smoothness} For any fixed $\outvar\in \Omega$, there exists a constant $L$ and a neighborhood $B$ of $\outvar$ s.t. $v\mapsto \pointWinOBJ(\outvar',v,x,y)$ is $L$-smooth for all $\outvar',x,y\in B\times\Xcal\times\Ycal$. 
\item \label{assump:second_continous_diff_in} $(\outvar,v)\mapsto \partial_{v}\pointWinOBJ(\outvar,v,x,y)$ is continuously differentiable on $\Omega\times \Vcal$ for all $x,y\in\Xcal\times\Ycal$, 
\item \label{assump:integrability_diff_l_in}
For any $\outvar\in \Omega$, there exists a positive constant $C$ and a neighborhood $B$ of $\outvar$ s.t.~for all $(\outvar',x,y)\in B\times \mathcal{X}\times \mathcal{Y}$:
\begin{align}
	\Verts{\partial_{\outvar,v}\ell_{in}\parens{\outvar',0,x,y}}\leq C\parens{1+\Verts{x} + \Verts{y}}.  
\end{align}        
\item\label{assump:Lip_cross_deriv_l_in} For any $\outvar\in \Omega$, there exists a positive constant $C$ and a neighborhood $B$ of $\outvar$ s.t.~for all $(\outvar',v_1,v_2,x,y)\in B\times \mathcal{V}\times \mathcal{V}\times \mathcal{X}\times \mathcal{Y}$ we have:
\begin{align}
	\Verts{\partial_{\outvar,v}\ell_{in}\parens{\outvar',v_1,x,y}- \partial_{\outvar,v}\ell_{in}\parens{\outvar',v_2,x,y}}\leq C \Verts{v_1-v_2}. 
\end{align}

\end{assumplist}

{\bf Assumptions on $\ell_{out}$.}

\begin{assumplist}[resume]
\item \label{assump:integrability_l_out} For any $\outvar\in \Omega$, $\mathbb{E}_{\mathbb{Q}}\brackets{ \verts{\ell_{out}(\outvar,0,x,y)}}<+\infty$. 
\item \label{assump:diff_l_out} $(\outvar,v)\mapsto\ell_{out}(\outvar,v,x,y)$ is jointly continuously differentiable on $\Omega\times \mathcal{V}$ for all $(x,y)\in \mathcal{X}\times\mathcal{Y}$.  
\item \label{assump:local_Lip_grad_l_out}
For any $\outvar\in \Omega$, there exits a neighborhood $B$ of $\outvar$ and a positive constant $C$ s.t.~for all $(\outvar',v,v',x,y)\in B\times \mathcal{V}\times \mathcal{V}\times \mathcal{X}\times \mathcal{Y}$ we have:
\begin{align*}
	\Verts{\partial_{\outvar}\ell_{out}\parens{\outvar',v,x,y}- \partial_{\outvar}\ell_{out}\parens{\outvar',v',x,y}}&\leq C\parens{1+ \Verts{v} + \Verts{v'}+ \Verts{x} + \Verts{y}}\Verts{v-v'}, \\
	\Verts{\partial_{v}\ell_{out}\parens{\outvar',v,x,y}- \partial_{v}\ell_{out}\parens{\outvar',v',x,y}}&\leq C\Verts{v-v'},\\
	 \Verts{\partial_{v} \ell_{out}\parens{\outvar',v,x,y} }&\leq C\parens{1+\Verts{v} + \Verts{x} + \Verts{y}},\\
	 \Verts{\partial_{\outvar} \ell_{out}\parens{\outvar',v,x,y} }
	 &\leq 
	 C\parens{1+\Verts{v}^2 + \Verts{x}^2 + \Verts{y}^2}.
\end{align*}
\end{assumplist}

{\bf Example.} Here we consider the squared error between two vectors $v,z$ in $\mathcal{V}$. Given a map $(\outvar,x,y)\mapsto f_{\outvar}(x,y)$ defined over $\Omega\times \mathcal{X}\times \mathcal{Y}$ and taking values in $\mathcal{V}$, we define the following point-wise objective:
\begin{align}\label{eq:example_loss}
    \ell(\outvar,v,x,y) := \frac{1}{2}\Verts{v-z}^2, \quad z=f_{\outvar}(x,y).
\end{align}
We assume that for any $\outvar\in \Omega$, there exists a constant $C > 0$ such that for all $\outvar'$ in a neighborhood of $\outvar$ and all $x,y\in \mathcal{X}\times \mathcal{Y}$, the following growth assumption holds:
\begin{align}\label{assump:NN_growth}
\Verts{f_{\outvar'}(x,y)} + \Verts{\partial_{\outvar} f_{\outvar'}(x,y)} \leq C (1+\Verts{x}+\Verts{y}).
\end{align}
This growth assumption is weak in the context of neural networks with smooth activations as discussed by \citet[Appendix C.4]{binkowski2018demystifying}.

\begin{proposition}\label{prop:examples}
Assume that the map $\outvar\mapsto f_{\outvar}(x,y)$ is continuously differentiable for any $x,y\in \mathcal{X}\times\mathcal{Y}$, and that \cref{assump:NN_growth} holds. Additionally, assume that $\mathbb{P}$ and $\mathbb{Q}$ admit finite second order moments. Then the point-wise objective $\ell$ in \cref{eq:example_loss} satisfies \cref{assump:integrability_l_in,assump:integrability_diff_l_in,assump:smoothness,assump:Lip_cross_deriv_l_in,assump:continous_diff_out,assump:second_continous_diff_in,assump:strong_convexity,assump:diff_l_out,assump:local_Lip_grad_l_out,assump:integrability_l_out}.
\end{proposition}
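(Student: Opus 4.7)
The plan is to compute the partial derivatives of $\ell$ explicitly and then verify each of the ten assumptions by straightforward bookkeeping, exploiting the fact that the Hessian with respect to $v$ is the identity. Writing $z_{\outvar}(x,y):=f_{\outvar}(x,y)$, the required derivatives are
\begin{align*}
    \partial_v \ell(\outvar,v,x,y) &= v - z_{\outvar}(x,y),\quad
    \partial_v^2 \ell(\outvar,v,x,y) = I,\\
    \partial_{\outvar}\ell(\outvar,v,x,y) &= -\parens{\partial_{\outvar} z_{\outvar}(x,y)}^{\top}\parens{v-z_{\outvar}(x,y)},\quad
    \partial_{\outvar,v}\ell(\outvar,v,x,y) = -\parens{\partial_{\outvar} z_{\outvar}(x,y)}^{\top}.
\end{align*}
Since $\outvar \mapsto z_{\outvar}(x,y)$ is continuously differentiable, these derivatives are continuous in $(\outvar,v)$, which immediately takes care of the differentiability requirements (\cref{assump:continous_diff_out,assump:second_continous_diff_in,assump:diff_l_out}). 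Moreover, $\partial_v^2 \ell = I$ yields uniform $1$-strong convexity and uniform $1$-smoothness of $v\mapsto \ell(\outvar,v,x,y)$, handling \cref{assump:strong_convexity,assump:smoothness} at once.

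The second step is to leverage the growth bound (\ref{assump:NN_growth}) on $f_{\outvar}$ and $\partial_{\outvar}f_{\outvar}$. For \cref{assump:integrability_l_in}, $\verts{\ell(\outvar,0,x,y)}=\tfrac{1}{2}\Verts{z_{\outvar}(x,y)}^2$ and $\Verts{\partial_v\ell(\outvar,0,x,y)}^2 = \Verts{z_{\outvar}(x,y)}^2$, both of which are dominated by $C^2(1+\Verts{x}+\Verts{y})^2$ and therefore integrable under $\mathbb{P}$ by \cref{assump:second_moment}. The same argument under $\mathbb{Q}$ (again using \cref{assump:second_moment}) gives \cref{assump:integrability_l_out}. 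For \cref{assump:integrability_diff_l_in}, the growth bound applied to $\partial_{\outvar}z_{\outvar}(x,y)$ yields $\Verts{\partial_{\outvar,v}\ell(\outvar,0,x,y)}\leq C(1+\Verts{x}+\Verts{y})$ directly. For \cref{assump:Lip_cross_deriv_l_in}, observe that $\partial_{\outvar,v}\ell$ does not depend on $v$, hence it is trivially $0$-Lipschitz in its second argument.

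The last step verifies the four inequalities of \cref{assump:local_Lip_grad_l_out}. From the explicit expression of $\partial_{\outvar}\ell$, one gets
\begin{align*}
    \partial_{\outvar}\ell(\outvar,v,x,y) - \partial_{\outvar}\ell(\outvar,v',x,y) = -\parens{\partial_{\outvar}z_{\outvar}(x,y)}^{\top}(v-v'),
\end{align*}
and the first bound follows by combining $\Verts{\partial_{\outvar}z_{\outvar}(x,y)}\leq C(1+\Verts{x}+\Verts{y})$ with the target form $(1+\Verts{v}+\Verts{v'}+\Verts{x}+\Verts{y})\Verts{v-v'}$. The second inequality is $\Verts{(v-v')}$ since $\partial_v \ell = v - z_{\outvar}$. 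The third and fourth follow by triangle inequality applied to $\Verts{v-z_{\outvar}(x,y)}$ and to $\Verts{\partial_{\outvar}z_{\outvar}(x,y)}\Verts{v-z_{\outvar}(x,y)}$ respectively, using once more the growth bound (the fourth is where the degree-two polynomial growth in $(v,x,y)$ naturally appears).

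The only mildly delicate point is making the local nature of the hypotheses match: each assumption requires the constant $C$ to be uniform on a neighborhood $B$ of a fixed $\outvar$. This is why \cref{assump:NN_growth} is phrased with a constant uniform in a neighborhood of $\outvar$, so the neighborhood supplied by the growth assumption can be reused as $B$ in all ten verifications. Otherwise the argument is a sequence of direct computations with no real obstacle.
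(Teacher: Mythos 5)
Your proof is correct and follows essentially the same route as the paper's: compute the explicit derivatives of the squared error (in particular $\partial_v^2\ell = I$ and $\partial_{\outvar,v}\ell = \pm\partial_{\outvar}f_{\outvar}(x,y)$), then verify each of the ten assumptions by direct computation using the growth bound (\ref{assump:NN_growth}) and the finite second moments. The observation that $\partial_{\outvar,v}\ell$ is independent of $v$ (hence trivially Lipschitz) and the handling of the quadratic growth in the last inequality of \cref{assump:local_Lip_grad_l_out} match the paper's argument.
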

\begin{proof}
We show that each of the assumptions are satisfied by the classical squared error objective.
\begin{itemize}
    \item \cref{assump:strong_convexity}: the squared error is $1$-strongly convex in $v$, since $\partial^2_v \ell \succeq I$. Hence, the strong convexity assumption holds with $\mu=1$.
    
    \item \cref{assump:integrability_l_in}: For any $\outvar\in \Omega$, we have
    $$
    \mathbb{E}_{\mathbb{P}}\brackets{\verts{\ell(\outvar,0,x,y)} + \Verts{\partial_{v}\ell\parens{\outvar,0,x,y}}^2}
    = \mathbb{E}_{\mathbb{P}}\brackets{\frac{1}{2}\Verts{f_{\outvar}(x,y)}^2 + \Verts{f_{\outvar}(x,y)}^2}
    < +\infty,
    $$
    which holds by the growth assumption on $f_{\outvar}(x,y)$, and $\mathbb{P}$ having finite second moments.
    
    \item \cref{assump:continous_diff_out}: With a perturbation $u\in\Vcal$ we have:
    $$
    \ell(\outvar,v + u,x,y) = \frac{1}{2}\|v-z\|^2 + \langle v - z, u\rangle + o(\|u\|^2), \quad z = f_{\outvar}(x,y),
    $$
    with $o(\|u\|^2) = \tfrac{1}{2}\|u\|^2$. The mapping $v \mapsto v - z$ is continuous, thus the assumption holds.
    \item \cref{assump:smoothness}: For any two points $v_1,v_2\in\Vcal$ using the expression of $\partial_v \ell(\outvar,v,x,y) = v - z$ with $z = f_{\outvar}(x,y)$ we have:
    \begin{multline*}
    \| \partial_v \ell(\outvar,v_1,x,y) - \partial_v \ell(\outvar,v_2,x,y) \| = \| (v_1 - z) - (v_2 - z) \| = \| v_1 - v_2 \|,
    \end{multline*}
    We see that $\ell$ is $L$-smooth with $L = 1$ and the assumption holds.
    
    \item \cref{assump:diff_l_out}: By the differentiation assumption on $f_{\outvar}(x,y)$, with a perturbation $\epsilon\in\Omega$ we can write:
    \begin{align*}
    f_{\outvar+\epsilon}(x,y) = f_{\outvar}(x,y) + \partial_{\outvar}f_{\outvar}(x,y) \epsilon + o(\epsilon).
    \end{align*}
    With a perturbation $\epsilon\times u\in\Omega\times\Vcal$ and substituting $f_{\outvar+\epsilon}(x,y)$ with the expression above we have:
    \begin{align*}
        \ell(\outvar + \epsilon,v + u,x,y)
        =& \frac{1}{2}\left\|\parens{v+u}-\parens{f_{\outvar}(x,y) + \partial_{\outvar}f_{\outvar}(x,y) \epsilon + o(\epsilon)}\right\|^2\\
        =& \frac{1}{2}\left\|v-f_{\outvar}(x,y)\right\|^2 + \langle \epsilon, {\partial_{\outvar}f_{\outvar}(x,y)}^\top \parens{f_{\outvar}(x,y)-v}\rangle\\
        &+ \langle u, v-f_{\outvar}(x,y)\rangle + o(\|\epsilon\|+\|u\|),
    \end{align*}
    which allows us to conclude that $(\outvar,v)\mapsto \ell(\outvar,v,x,y)$ is continuously differentiable on $\Omega\times \Vcal$ for all $x,y\in\Xcal\times\Ycal$ and the assumption holds.
    
    \item \cref{assump:second_continous_diff_in}: With a perturbation $\epsilon\times u\in\Omega\times\Vcal$ using the expression of $\partial_v \ell(\outvar,v,x,y)$ we can write:
    \begin{align*}
        \partial_v \ell(\outvar+\epsilon,v+u,x,y)
        &= (v+u) - f_{\outvar+\epsilon}(x,y)\\
        &= (v+u) - \parens{f_{\outvar}(x,y) + \partial_{\outvar}f_{\outvar}(x,y) \epsilon + o(\epsilon)}\\
        &= (v - f_{\outvar}(x,y)) + u - \partial_{\outvar}f_{\outvar}(x,y) \epsilon + o(\epsilon),
    \end{align*}
    by continuously differentiable $f_{\outvar}(x,y)$, we have that the assumption holds.
    
    \item \cref{assump:integrability_diff_l_in,assump:Lip_cross_deriv_l_in}: From the expression of $\partial_v \ell(\outvar,v,x,y)$:
    $$
        \partial_{\outvar,v}\ell\parens{\outvar,v,x,y} = \partial_{\outvar} \left(v - f_{\outvar}(x,y)\right) = \partial_{\outvar} f_{\outvar}(x,y),
    $$
    then using the expression above and the growth assumption on $f_{\outvar'}(x,y)$ we have that the two assumptions hold.
    
    \item \cref{assump:integrability_l_out}: For any $\outvar\in \Omega$ we have:
    $$
        \mathbb{E}_{\mathbb{Q}}\brackets{ \verts{\ell(\outvar,0,x,y)}} = \mathbb{E}_{\mathbb{Q}}\brackets{ \frac{1}{2}\Verts{f_{\outvar}(x,y)}^2 } < +\infty,
    $$
    by the growth assumption on $f_{\outvar}(x,y)$, and $\mathbb{P}$ having finite second moments, thus the assumption is verified.
    
    \item \cref{assump:local_Lip_grad_l_out}: Using the growth assumption on $f_{\outvar'}(x,y)$, we have the following inequalities:
    \begin{align*}
        \Verts{\partial_{\outvar}\ell\parens{\outvar',v,x,y}- \partial_{\outvar}\ell\parens{\outvar',v',x,y}}
        &= \Verts{{f_{\outvar'}(x,y)}^\top\parens{v'-v}}\\
        &\leq \Verts{{f_{\outvar'}(x,y)}^\top} + \Verts{v'-v}\\
        &\leq C\parens{1+ \Verts{v} + \Verts{v'}+ \Verts{x} + \Verts{y}}\Verts{v-v'}, \\
        \Verts{\partial_{v} \ell\parens{\outvar',v,x,y} }&= \Verts{v - f_{\outvar'}(x,y)}\\
    	 &\leq \Verts{v} + \Verts{f_{\outvar'}(x,y)}\\
    	 &\leq \Verts{v} + C\parens{1 + \Verts{x} + \Verts{y}}\\
    	 &\leq C\parens{1+\Verts{v} + \Verts{x} + \Verts{y}}\\
    	 \Verts{\partial_{\outvar} \ell\parens{\outvar',v,x,y}}
    	 &= \Verts{{\partial_{\outvar}f_{\outvar'}(x,y)}^\top \parens{f_{\outvar'}(x,y)-v}}\\
    	 &\leq \Verts{{\partial_{\outvar}f_{\outvar'}(x,y)}} \Verts{\parens{f_{\outvar'}(x,y)-v}}\\
    	 &\leq \Verts{{\partial_{\outvar}f_{\outvar'}(x,y)}} \parens{\Verts{f_{\outvar'}(x,y)}+\Verts{v}}\\
    	 &\leq
    	 C\parens{1+\Verts{v}^2 + \Verts{x}^2 + \Verts{y}^2},
    \end{align*}
    combining the above with $L$-smoothness of $\ell$ we can conclude that the assumption holds.

\end{itemize}

\end{proof}

\subsection{Differentiability results}\label{sec:diff_results}
The next lemmas show differentiability of $\outOBJ$, $\inOBJ$ and $\partial_{\innerpred}\inOBJ$ and will be used to prove \cref{prop:L2_proposition}. 
\begin{lemma}[Differentiability of $\inOBJ$ in its second argument]\label{lemm:d_outvar_expression}
Under \cref{assump:continous_diff_out,assump:smoothness,assump:integrability_l_in,assump:smoothness}, the function $\innerpred \mapsto \inOBJ(\outvar,\innerpred)$ is differentiable in $\mathcal{H}$ with partial derivative vector $\partial_{\innerpred}\inOBJ(\outvar,\innerpred)\in \mathcal{H}$ given by:
\begin{align*}
	 \partial_{\innerpred}\inOBJ(\outvar,\innerpred):  \mathcal{X}&\rightarrow\mathcal{V} \\
	  x&\mapsto \mathbb{E}_{\mathbb{P}}\brackets{\partial_v \ell_{in}\parens{\outvar,\innerpred(x),x,y} |x}.
\end{align*}   
\end{lemma}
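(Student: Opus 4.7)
The strategy is to apply $L$-smoothness pointwise, integrate against $\mathbb{P}$, and then rewrite the resulting linear term as an $L_2$-inner product via the tower property. We work in $\Hcal = L_2(\mathbb{P}_X;\Vcal)$ throughout, with $\mathbb{P}_X$ the marginal of $x$ under $\mathbb{P}$.

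First, I would verify that the candidate derivative
\[
    g_\outvar(x) := \mathbb{E}_{\mathbb{P}}\brackets{\partial_v \ell_{in}(\outvar, \innerpred(x), x, y)\,\big|\,x}
\]
actually defines an element of $\Hcal$. Using \cref{assump:smoothness} ($L$-smoothness of $v\mapsto\ell_{in}(\outvar,v,x,y)$), one has the pointwise Lipschitz bound
\[
    \Verts{\partial_v \ell_{in}(\outvar, \innerpred(x), x, y)} \le \Verts{\partial_v \ell_{in}(\outvar, 0, x, y)} + L\Verts{\innerpred(x)}.
\]
Conditional Jensen then gives $\Verts{g_\outvar(x)}^2 \le 2\,\mathbb{E}_{\mathbb{P}}[\Verts{\partial_v \ell_{in}(\outvar,0,x,y)}^2\mid x] + 2L^2\Verts{\innerpred(x)}^2$, whose integral under $\mathbb{P}_X$ is finite by \cref{assump:integrability_l_in} and $\innerpred\in\Hcal$. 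I would also check $\inOBJ(\outvar,\innerpred)$ is finite by writing $\ell_{in}(\outvar,\innerpred(x),x,y) = \ell_{in}(\outvar,0,x,y) + \int_0^1 \langle \partial_v \ell_{in}(\outvar,t\innerpred(x),x,y), \innerpred(x)\rangle\,dt$, bounding the integrand via $L$-smoothness, and taking expectations.

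Next, the core Fr\'echet-differentiability computation: for any perturbation $u\in\Hcal$, the $L$-smoothness descent lemma applied pointwise to $v\mapsto\ell_{in}(\outvar,v,x,y)$ at $v=\innerpred(x)$ with increment $u(x)$ yields
\[
    \verts{\ell_{in}(\outvar,\innerpred(x)+u(x),x,y) - \ell_{in}(\outvar,\innerpred(x),x,y) - \langle \partial_v\ell_{in}(\outvar,\innerpred(x),x,y), u(x)\rangle} \le \tfrac{L}{2}\Verts{u(x)}^2.
\]
Taking $\mathbb{E}_{\mathbb{P}}$ and using Fubini/Jensen to pass the absolute value inside gives
\[
    \verts{\inOBJ(\outvar,\innerpred+u) - \inOBJ(\outvar,\innerpred) - \mathbb{E}_{\mathbb{P}}\brackets{\langle \partial_v\ell_{in}(\outvar,\innerpred(x),x,y), u(x)\rangle}} \le \tfrac{L}{2}\Verts{u}_\Hcal^2.
\]
Finally, by the tower property (conditioning on $x$, noting $u(x)$ is $\sigma(x)$-measurable),
\[
    \mathbb{E}_{\mathbb{P}}\brackets{\langle \partial_v\ell_{in}(\outvar,\innerpred(x),x,y), u(x)\rangle} = \mathbb{E}_{\mathbb{P}_X}\brackets{\langle g_\outvar(x), u(x)\rangle} = \langle g_\outvar, u\rangle_\Hcal.
\]
Combined with the $O(\Verts{u}_\Hcal^2)=o(\Verts{u}_\Hcal)$ remainder bound, this is exactly the definition of Fr\'echet differentiability of $\innerpred\mapsto\inOBJ(\outvar,\innerpred)$ with gradient $g_\outvar$.

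The main technical obstacle is ensuring integrability at every step so the interchange of expectation and inner product is legitimate, in particular producing the $L_2$-integrability of $g_\outvar$ before the tower-property rewriting. The continuity-of-$\partial_v\ell_{in}$ hypothesis (\cref{assump:continous_diff_out}) is needed to justify the measurability of the conditional expectation and of the integrand, but the heavy lifting is done by the $L$-smoothness bound, which converts the problem into a routine uniform quadratic control.
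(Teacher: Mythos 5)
Your proposal is correct and follows essentially the same route as the paper's proof: establish well-definedness of $\inOBJ$ and square-integrability of the candidate representer via the $L$-smoothness bound combined with \cref{assump:integrability_l_in}, identify the linear form with an $\Hcal$-inner product through the tower property, and control the first-order remainder by the quadratic bound $\tfrac{L}{2}\Verts{g}_{\Hcal}^2$ (the paper derives this via the fundamental theorem of calculus plus Lipschitzness of $\partial_v\pointWinOBJ$, which is exactly your descent lemma). No gaps.
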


\begin{proof}
We decompose the proof into three parts: verifying that $\inOBJ$ is well-defined, identifying a bounded map as candidate for the differential and showing that it is the Fr\'{e}chet differential of  $\inOBJ$.

{\bf Well-defined objective.}
Consider $(\outvar,h)$ in $\Omega\times\mathcal{H}$. To show that $\inOBJ(\outvar,h)$ is well-defined, we need to prove that $\ell_{in}(\outvar,h(x),x,y)$ is integrable under $\mathbb{P}$. We use the following inequalities to control $\ell_{in}(\outvar,h(x),x,y)$: 
\begin{align*}
	\verts{\ell_{in}(\outvar,h(x),x,y)} 
	\leq &
	\verts{\ell_{in}(\outvar,h(x),x,y)- \ell_{in}(\outvar,0,x,y)} + \verts{\ell_{in}(\outvar,0,x,y)}\\
	= &
	 \verts{\int_0^1\diff t \parens{h(x)^{\top}\partial_{v}\ell_{in}(\outvar,th(x),x,y)}} +  \verts{\ell_{in}(\outvar,0,x,y)}\\
	\leq & 
	\Verts{h(x)}\int_0^1\diff t \Verts{\partial_{v}\ell_{in}(\outvar,th(x),x,y)-\partial_v\ell_{in}(\outvar,0,x,y)}\\
	&+ \Verts{h(x)}\Verts{\partial_v\ell_{in}(\outvar,0,x,y)} +  \verts{\ell_{in}(\outvar,0,x,y)}\\
	\leq &  \frac{L}{2}\Verts{h(x)}^2 + \frac{1}{2}\parens{\Verts{h(x)}^2 + \Verts{\partial_v\ell_{in}(\outvar,0,x,y)}^2} + \verts{\ell_{in}(\outvar,0,x,y)}, 
\end{align*}
where the first line follows by triangular inequality, the second follows by application of the fundamental theorem of calculus since $\ell_{in}$ is differentiable by \cref{assump:continous_diff_out}.  The third uses Cauchy-Schwarz inequality along with a triangular inequality. Finally, the last line follows using that $\ell_{in}$ is $L$-smooth in its second argument, locally in $\outvar$ and uniformly in $x$ and $y$ by \cref{assump:smoothness}. 
Taking the expectation under $\mathbb{P}$ yields:
\begin{align*}
	\verts{\inOBJ(\outvar,h)}
	\leq & 
	\mathbb{E}_{\mathbb{P}}\brackets{\verts{\ell_{in}\parens{\outvar,h(x),x,y}}} \\
	\leq & \frac{L+1}{2}\Verts{h}_{\mathcal{H}}^2 
	+ \mathbb{E}_{\mathbb{P}}\brackets{\Verts{\partial_v\ell_{in}(\outvar,0,x,y)}^2 
	+ \verts{\ell_{in}(\outvar,0,x,y)}} <+\infty, 
\end{align*}
where $\Verts{h}_{\mathcal{H}}$ is finite since $h\in \mathcal{H}$ and expectations under $\mathbb{P}$ of $\Verts{\partial_v\ell_{in}(\outvar,0,x,y)}^2$  and $\verts{\ell_{in}(\outvar,0,x,y)}$ are finite by \cref{assump:integrability_l_in}. This shows that $\inOBJ(\outvar,h)$ is well defined on $\Omega\times\mathcal{H}$. 

{\bf Candidate differential.}
Fix $(\outvar,h)$ in $\Omega\times \mathcal{H}$ and consider the following linear form ${d_{in}}$ in $\Hcal$:
\begin{align*}
	{d_{in}}g := \mathbb{E}_{\mathbb{P}}\brackets{g(x)^\top\partial_v \pointWinOBJ(\outvar,\innerpred(x),x,y)}, \qquad \forall g\in \mathcal{H}.
\end{align*}
We need to show that it is a bounded form. To this end, we will show that  $d_{in}$  is a scalar product with some vector $D_{in}$ in $\mathcal{H}$. The following equalities hold:
\begin{align*}
    {d_{in}}g &= \mathbb{E}_{\mathbb{P}}\brackets{g(x)^\top \partial_v \ell_{in}(\outvar, h(x),x, y)}\\
    &= \mathbb{E}_{\mathbb{P}}\brackets{ g(x)^\top\mathbb{E}_{\mathbb{P}}\brackets{ \partial_v \ell_{in}(\outvar, h(x),x,y)|x}}\\
    & = \mathbb{E}_{\mathbb{P}}\brackets{ g(x)^{\top} D_{in}(x)},
\end{align*}
where the second line follows by the ``tower'' property for conditional expectations and where we define $D_{in}(x):= \mathbb{E}_{\mathbb{P}}\brackets{\partial_v \ell_{in}(\outvar, h(x),x, y)|x}$ in the last line. $D_{in}$ is a the candidate representation of $d_{in}$ in  $\mathcal{H}$. 
We simply need to check that $D_{in}$ is an element of $\mathcal{H}$. 
To see this, we use the following upper-bounds: 
\begin{align*}
	\mathbb{E}_{\mathbb{P}}\brackets{\Verts{D_{in}(x)}^2} 
	&\leq 
	\mathbb{E}_{\mathbb{P}}
	\brackets{\mathbb{E}_{\mathbb{P}}\brackets{\Verts{\partial_v \ell_{in}(\outvar, h(x),x, y)}^2\middle| x} }\\
	&= \mathbb{E}_{\mathbb{P}}\brackets{\Verts{\partial_v \ell_{in}(\outvar, h(x),x, y)}^2}\\
	&\leq 2\mathbb{E}_{\mathbb{P}}\brackets{\Verts{\partial_v \ell_{in}(\outvar, h(x),x, y)-\partial_v \ell_{in}(\outvar, 0,x, y) }^2} + 2\mathbb{E}_{\mathbb{P}}\brackets{\Verts{\partial_v \ell_{in}(\outvar,0,x, y)}^2} \\
	&\leq 
	2L^2\mathbb{E}_{\mathbb{P}}\brackets{\Verts{h(x)}^2} + 2\mathbb{E}_{\mathbb{P}}\brackets{\Verts{\partial_v \ell_{in}(\outvar,0,x, y)}^2} <+\infty.\\
\end{align*}
The first inequality is an application of Jensen's  inequality by convexity of the squared norm. The second line follows by the ``tower'' property for conditional probability distributions while the third follows by triangular inequality and Jensen's inequality applied to the square function.  
The last line uses that $\ell_{in}$ is $L$-smooth in its second argument, locally in $\outvar$ and uniformly in $x,y$ by \cref{assump:smoothness}. 
Since $h$ is square integrable under $\mathbb{P}$ by construction and $\Verts{\partial_v \ell_{in}(\outvar,0,x, y)}$ is also square integrable by \cref{assump:integrability_l_in}, we deduce from the above upper-bounds that $D_{in}(x)$ must also be square integrable and thus an element of $\mathcal{H}$. 
Therefore, we have shown that $d_{in}$ is a continuous linear form admitting the following representation:
\begin{align}
	d_{in}g = \langle D_{in}, g \rangle_{\mathcal{H}}. 
\end{align}

{\bf Differentiability of $h\mapsto \inOBJ(\outvar,h)$.}
To prove differentiability, we simply control the first order error $E(g)$ defined as:
\begin{align}
	E(g) := \verts{\inOBJ(\outvar,\innerpred+g)-\inOBJ(\outvar,\innerpred) - {d_{in}}g}. 
\end{align}
For a given $g\in \mathcal{H}$, the following inequalities hold:
\begin{align*}
	E(g) 
	&= 	\verts{\mathbb{E}_{\mathbb{P}} \brackets{\int_0^1 \diff t \parens{g(x)^{\top} \left( \partial_{v}\pointWinOBJ(\outvar,\innerpred(x)+tg(x),x,y)-\partial_{v}\pointWinOBJ(\outvar,\innerpred(x),x,y)\right) } } }\\
	&\leq \mathbb{E}_{\mathbb{P}} \left[ \int_0^1  \vert g(x)^{\top} \left( \partial_{v}\pointWinOBJ(\outvar,\innerpred(x)+tg(x),x,y)-\partial_{v}\pointWinOBJ(\outvar,\innerpred(x),x,y)\right) \vert  \diff t \right]\\
	&\leq\frac{L}{2}\mathbb{E}_{\mathbb{P}}\brackets{\Verts{g(x)}^2} = \frac{L}{2}\Verts{g}^2_{\mathcal{H}},
\end{align*}
where the first inequality follows by application of the fundamental theorem of calculus since $\ell_{in}$ is differentiable in its second argument by \cref{assump:continous_diff_out}. The second line follows by Jensen's inequality while the last line uses that $v\mapsto \partial\ell_{in}(\outvar,v,x,y)$ is $L$-Lipschitz locally in $\outvar$ and uniformly in $x$ and $y$ by \cref{assump:smoothness}.  
Therefore, we have shown that $E(g) = o(\Verts{g}_{\mathcal{H}})$ which precisely means that $\innerpred \mapsto \inOBJ(\outvar,\innerpred)$ is differentiable with differential ${d_{in}}$. Moreover, $D_{in}$ is the partial gradient of $\inOBJ(\outvar,\innerpred)$ in the second variable: 
\begin{align*}
	\partial_{\innerpred}\inOBJ(\outvar,\innerpred) = D_{in} = x\mapsto \mathbb{E}_{\mathbb{P}}\brackets{\partial_v \pointWinOBJ(\outvar,\innerpred(x),x,y)\middle| x}.
\end{align*}

\end{proof}

\begin{lemma}[Differentiability of $\outOBJ$]\label{lemma:out_differential}
Under \cref{assump:radon_nikodym,assump:diff_l_out,assump:local_Lip_grad_l_out,assump:integrability_l_out,assump:second_moment}, $\outOBJ$ is jointly differentiable in $\outvar$ and $\innerpred$. Moreover, its partial derivatives $\partial_{\outvar}\outOBJ(\outvar,h)$ and $\partial_{\innerpred}\outOBJ(\outvar,h)$ are elements in $\Omega$ and $\mathcal{H}$ given by: 
\begin{align}\label{eq:partial_deriv_l_out}
\begin{split}
\partial_{\outvar}\outOBJ(\outvar,h) =&  \mathbb{E}_{\mathbb{Q}}\brackets{ \partial_{\outvar}\pointWoutOBJ\parens{\outvar,\innerpred(x),x,y}}\\
\partial_{\innerpred}\outOBJ(\outvar,\innerpred) 	=& x\mapsto r(x)\mathbb{E}_{\mathbb{Q}}\brackets{\partial_v \pointWoutOBJ(\outvar,\innerpred(x),x,y)\middle|x }.
\end{split}
\end{align}

\end{lemma}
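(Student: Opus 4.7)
The plan is to mirror the three-step template employed in the proof of \cref{lemm:d_outvar_expression}: (i) check that $\outOBJ(\outvar,\innerpred)$ is finite on $\Omega \times \Hcal$; (ii) identify bounded candidate partial derivatives $D_{\outvar} \in \Omega$ and $D_{\innerpred} \in \Hcal$ matching \cref{eq:partial_deriv_l_out}; and (iii) show that the first-order remainder of a joint expansion is $o(\Verts{\epsilon} + \Verts{g}_{\Hcal})$. The continuous joint differentiability of $\pointWoutOBJ$ in $(\outvar,v)$ granted by \cref{assump:diff_l_out} underwrites the differentiation-under-integral steps. The new ingredient compared to the inner-objective case is that $\outOBJ$ is an expectation under $\mathbb{Q}$ while the Hilbert norm $\Verts{\cdot}_{\Hcal}$ is the $L^2(\mathbb{P})$ norm, so the bounded Radon-Nikodym derivative $r(x) \leq M$ from \cref{assump:radon_nikodym} is used throughout to bridge the two measures.

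For step (i), I would write $\pointWoutOBJ(\outvar,\innerpred(x),x,y) = \pointWoutOBJ(\outvar,0,x,y) + \int_0^1 \innerpred(x)^\top \partial_v \pointWoutOBJ(\outvar, t\innerpred(x),x,y)\, dt$ and use the linear growth bound $\Verts{\partial_v \pointWoutOBJ} \leq C(1 + \Verts{v} + \Verts{x} + \Verts{y})$ from \cref{assump:local_Lip_grad_l_out}. Integrating under $\mathbb{Q}$, the moments $\mathbb{E}_\mathbb{Q}[\Verts{x}^2], \mathbb{E}_\mathbb{Q}[\Verts{y}^2]$ are finite by \cref{assump:second_moment}, the residual term is integrable by \cref{assump:integrability_l_out}, and crucially $\mathbb{E}_\mathbb{Q}[\Verts{\innerpred(x)}^2] = \mathbb{E}_\mathbb{P}[r(x)\Verts{\innerpred(x)}^2] \leq M \Verts{\innerpred}_{\Hcal}^2$ by the change of measure. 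For step (ii), the $\outvar$-candidate
\begin{equation*}
D_{\outvar} := \mathbb{E}_\mathbb{Q}\brackets{\partial_{\outvar}\pointWoutOBJ(\outvar,\innerpred(x),x,y)}
\end{equation*}
lies in $\Omega$ by the quadratic growth bound on $\partial_{\outvar}\pointWoutOBJ$ in \cref{assump:local_Lip_grad_l_out} combined with the same moment/measure-change arguments. For the $\innerpred$-candidate, the directional derivative at $g \in \Hcal$ equals $\mathbb{E}_\mathbb{Q}[g(x)^\top \partial_v \pointWoutOBJ(\outvar,\innerpred(x),x,y)]$; applying the tower property and the change of measure rewrites this as $\langle D_{\innerpred}, g \rangle_{\Hcal}$ with $D_{\innerpred}(x) := r(x)\, \mathbb{E}_\mathbb{Q}[\partial_v \pointWoutOBJ(\outvar,\innerpred(x),x,y) \mid x]$. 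Jensen's inequality applied to $\Verts{\cdot}^2$ inside the conditional expectation, together with $r \leq M$ and the linear growth bound on $\partial_v \pointWoutOBJ$, guarantees $D_{\innerpred} \in \Hcal$.

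For step (iii), I would control the first-order error
\begin{equation*}
E(\epsilon, g) := \verts{\outOBJ(\outvar+\epsilon, \innerpred+g) - \outOBJ(\outvar, \innerpred) - \langle D_{\outvar}, \epsilon \rangle - \langle D_{\innerpred}, g \rangle_{\Hcal}}
\end{equation*}
by splitting it into a pure $\innerpred$-increment (at outer parameter $\outvar+\epsilon$) plus a pure $\outvar$-increment (at $\innerpred$), then invoking the fundamental theorem of calculus on each piece. The $\innerpred$-piece yields a residual $\int_0^1 \mathbb{E}_\mathbb{Q}[g(x)^\top(\partial_v \pointWoutOBJ(\outvar+\epsilon, \innerpred + tg, \cdot) - \partial_v \pointWoutOBJ(\outvar, \innerpred, \cdot))]\, dt$, which the uniform Lipschitz bound on $\partial_v \pointWoutOBJ$ in \cref{assump:local_Lip_grad_l_out} bounds by $O((\Verts{\epsilon}+\Verts{g}_{\Hcal})\Verts{g}_{\Hcal})$ after Cauchy-Schwarz and a change of measure. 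The $\outvar$-piece uses the quadratic-growth Lipschitz bound on $\partial_{\outvar}\pointWoutOBJ$ and produces cross terms of the form $C\Verts{\epsilon}\, \mathbb{E}_\mathbb{Q}[(1 + \Verts{\innerpred(x)} + \Verts{x} + \Verts{y}) \Verts{g(x)}]$; these are $O(\Verts{\epsilon}\,\Verts{g}_{\Hcal})$ by Cauchy-Schwarz, \cref{assump:second_moment}, and $r \leq M$. Summing yields $E(\epsilon, g) = o(\Verts{\epsilon}+\Verts{g}_{\Hcal})$, which establishes joint Fr\'echet differentiability and identifies $D_{\outvar}, D_{\innerpred}$ as the partial derivatives in \cref{eq:partial_deriv_l_out}.

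The main obstacle is the two-measure bookkeeping: every $\mathbb{Q}$-expectation involving $\innerpred$ or $g$ must be translated back into the $L^2(\mathbb{P}) = \Hcal$ norm, which is only possible because \cref{assump:radon_nikodym} supplies a bounded density $r$. Without boundedness of $r$ the representer $D_{\innerpred}$ would not be guaranteed to lie in $\Hcal$, and the cross-term control in step (iii) would break down. Beyond this subtlety and the need to track the quadratic growth of $\partial_{\outvar}\pointWoutOBJ$, the remaining computations are direct analogues of those used to establish \cref{lemm:d_outvar_expression}.
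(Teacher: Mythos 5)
Your steps (i) and (ii) match the paper's proof essentially verbatim: the growth bounds from \cref{assump:local_Lip_grad_l_out}, the change of measure via the bounded Radon--Nikodym derivative, the tower property to exhibit the representer $D_{\innerpred}(x) = r(x)\,\mathbb{E}_{\mathbb{Q}}[\partial_v\pointWoutOBJ(\outvar,\innerpred(x),x,y)\,|\,x]$, and the Jensen/Cauchy--Schwarz argument for its square integrability are all the same. The structure of step (iii) --- splitting the remainder into an $\innerpred$-increment and an $\outvar$-increment and applying the fundamental theorem of calculus to each --- is also the paper's strategy.

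However, step (iii) contains a genuine gap. You claim that the residual involving $\partial_v\pointWoutOBJ(\outvar+\epsilon,\cdot)-\partial_v\pointWoutOBJ(\outvar,\cdot)$ is $O\bigl((\Verts{\epsilon}+\Verts{g}_{\Hcal})\Verts{g}_{\Hcal}\bigr)$ ``by the uniform Lipschitz bound in \cref{assump:local_Lip_grad_l_out},'' and similarly that the $\outvar$-piece is $O(\Verts{\epsilon}\Verts{g}_{\Hcal})$. But \cref{assump:local_Lip_grad_l_out} only provides Lipschitz continuity of $\partial_v\pointWoutOBJ$ and $\partial_{\outvar}\pointWoutOBJ$ with respect to the \emph{second} argument $v$ (locally in $\outvar'$); it gives no Lipschitz control, uniform in $(x,y)$, with respect to the outer variable $\outvar$ itself. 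The only regularity in $\outvar$ available is joint continuity of the partial derivatives from \cref{assump:diff_l_out}, which yields pointwise convergence but no rate. Consequently the terms $\partial_v\pointWoutOBJ(\outvar+t\epsilon,\innerpred(x),x,y)-\partial_v\pointWoutOBJ(\outvar,\innerpred(x),x,y)$ and $\partial_{\outvar}\pointWoutOBJ(\outvar+t\epsilon,\innerpred(x),x,y)-\partial_{\outvar}\pointWoutOBJ(\outvar,\innerpred(x),x,y)$ cannot be bounded by $C\Verts{\epsilon}(\cdots)$; the paper instead isolates exactly these two terms (its $\Delta_1'$ and $\Delta_2'$) and shows they vanish via the dominated convergence theorem, using the growth bounds of \cref{assump:local_Lip_grad_l_out} together with \cref{assump:second_moment} and $r\leq M$ to supply an integrable dominating function. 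Your argument as written would therefore fail on these two terms; the pieces that you do control correctly are only the $v$-increments at fixed outer parameter (the paper's $A_1$ and $A_3$). The conclusion $E(\epsilon,g)=o(\Verts{\epsilon}+\Verts{g}_{\Hcal})$ still holds, but establishing it requires the dominated-convergence step you omit, and it does not come with the $O(\Verts{\epsilon}\Verts{g}_{\Hcal})$ rate you assert.
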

\begin{proof}
We follow a similar procedure as in \cref{lemm:d_outvar_expression}, where we decompose the proof into three steps: verifying that the objective $\outOBJ$ is well-defined, identifying a candidate for the differential and proving that it is the differential of $\outOBJ$.

{\bf Well-definiteness of the  objective.} Let $(\outvar,h)$ be in $\Omega\times\mathcal{H}$. First, note that by \cref{assump:radon_nikodym}, we have that 
\begin{align}\label{eq:finite_LQ_norm}
\mathbb{E}_{\mathbb{Q}}\brackets{\Verts{h(x)}^2} = \mathbb{E}_{\mathbb{P}}\brackets{\Verts{h(x)}^2r(x)}\leq M \Verts{h}_{\mathcal{H}}^2<+\infty.
\end{align}
The next inequalities control the growth of $\ell_{out}$:
\begin{align*}
	\verts{\ell_{out}(\outvar,h(x),x,y)}
	\leq & 
	\verts{\ell_{out}(\outvar,0,x,y)} + \verts{\ell_{out}(\outvar,h(x),x,y)-\ell_{out}(\outvar,0,x,y)} \\
	\leq &
	\verts{\ell_{out}(\outvar,0,x,y)}+ \int_0^1\diff t \verts{h(x)^\top\partial_v\ell_{out}(\outvar,th(x),x,y)} \\
	\leq &
	\verts{\ell_{out}(\outvar,0,x,y)}+ \Verts{h(x)}\int_{0}^1 \diff t  \Verts{\partial_v\ell_{out}(\outvar,th(x),x,y)} \\
	\leq &
	\verts{\ell_{out}(\outvar,0,x,y)}+ C\Verts{h(x)}\parens{1+\Verts{h(x)} + \Verts{x} + \Verts{y}} \\
	\leq &
	\verts{\ell_{out}(\outvar,0,x,y)}+ C\parens{1+3\Verts{h(x)}^2 + \Verts{x}^2 + \Verts{y}^2}. 
\end{align*}
The first line is due to the triangular inequality while the second line follows by differentiability of $\partial_v\ell_{out}$ in its second argument (\cref{assump:diff_l_out}). The third line follows by Cauchy-Scwharz inequality wile the fourth line uses that $\ell_{out}$ has at most a linear growth in its last three arguments by \cref{assump:local_Lip_grad_l_out}. Using the above inequalities, we get the following upper-bound on $\outOBJ$:
\begin{align}
	\verts{\outOBJ(\outvar,h)}\leq 
	\mathbb{E}_{\mathbb{Q}}\brackets{\verts{\ell_{out}\parens{\outvar,0,x,y}}} + C\parens{1+3\mathbb{E}_{\mathbb{Q}}\brackets{\Verts{h(x)}^2} + \mathbb{E}_{\mathbb{Q}}\brackets{\Verts{x}^2 + \Verts{y}^2}} <+\infty. 
\end{align}
In the above upper-bound, $\mathbb{E}_{\mathbb{Q}}\brackets{\Verts{h(x)}^2}$ is finite by \cref{eq:finite_LQ_norm}. Additionally, $\mathbb{E}_{\mathbb{Q}}\brackets{\Verts{x}^2 + \Verts{y}^2}$ is finite since $\mathbb{Q}$ has finite second moments by \cref{assump:second_moment} while $\mathbb{E}_{\mathbb{Q}}\brackets{\verts{\ell_{out}\parens{\outvar,0,x,y}}}$ is also finite by \cref{assump:integrability_l_out}. Therefore, $\outOBJ$ is well defined over $\Omega\times\mathcal{H}$.

{\bf Candidate differential.} Fix $(\outvar,h)$ in $\Omega\times \mathcal{H}$ and define the following linear form:
\begin{align*}
	d_{out}(\epsilon,g) := \epsilon^{\top}\mathbb{E}_{\mathbb{Q}}\brackets{ \partial_{\outvar}\pointWoutOBJ\parens{\outvar,\innerpred(x),x,y}} + \mathbb{E}_{\mathbb{Q}}\brackets{g(x)^\top\partial_v \pointWoutOBJ(\outvar,\innerpred(x),x,y)}
\end{align*}
Define $D_{out} = (D_{\outvar},D_{h})$ to be:
\begin{align*}
	D_{\outvar} &:= \mathbb{E}_{\mathbb{Q}}\brackets{ \partial_{\outvar}\pointWoutOBJ\parens{\outvar,\innerpred(x),x,y}}\\
	D_{h} &:= x\mapsto r(x)\mathbb{E}_{\mathbb{Q}}\brackets{\partial_v \pointWoutOBJ(\outvar,\innerpred(x),x,y)\middle|x }.
\end{align*}
By an argument similar to the one in \cref{lemm:d_outvar_expression}, we see that $d_{out}(\epsilon,g) = \langle g,D_{h} \rangle_{\mathcal{H}} + \epsilon^{\top}D_{\outvar}$. We now need to show that $D_{\outvar}$ and $D_{h}$ are well defined elements of $\Omega$  and $\mathcal{H}$. 

{\bf Square integrability of $D_h$.} We use the following upper-bounds: 
\begin{align*}
	\mathbb{E}_{\mathbb{P}}\brackets{\Verts{D_{h}(x)}^2} 
	&\leq 
	\mathbb{E}_{\mathbb{P}}\brackets{ r(x)^2\mathbb{E}_{\mathbb{Q}}\brackets{\Verts{\partial_v \ell_{out}(\outvar, h(x),x, y)}\middle| x}^2 }\\
	&\leq 
	\mathbb{E}_{\mathbb{P}}
	\brackets{r(x)^2\mathbb{E}_{\mathbb{Q}}\brackets{\Verts{\partial_v \ell_{out}(\outvar, h(x),x, y)}^2 \middle| x} }\\
	&\leq 
	M\mathbb{E}_{\mathbb{P}}
	\brackets{r(x)\mathbb{E}_{\mathbb{Q}}\brackets{\Verts{\partial_v \ell_{out}(\outvar, h(x),x, y)}^2 \middle| x} }\\
	&= 
	M\mathbb{E}_{\mathbb{Q}}\brackets{\Verts{\partial_v \ell_{out}(\outvar, h(x),x, y)}^2}\\
	&\leq 
	4MC \parens{1 + \mathbb{E}_{\mathbb{Q}}\brackets{\Verts{h(x)}^2} + \mathbb{E}_{\mathbb{Q}}\brackets{\Verts{x}^2 + \Verts{y}^2}}.
\end{align*}
The first inequality is an application of Jensen's  inequality by convexity of the norm, while the second one is an application of Cauchy-Schwarz inequality. 
The third line uses that $r(x)$ is upper-bounded by a constant $M$ by \cref{assump:radon_nikodym}, and the fourth line follows from the ``tower'' property for conditional probability distributions. Finally, the last line follows by \cref{assump:local_Lip_grad_l_out} which ensures that  $\partial_v\ell_{out}$ has at most a linear growth in its last three arguments. 
By \cref{eq:finite_LQ_norm}, we have that $\mathbb{E}_{\mathbb{Q}}\brackets{\Verts{h(x)}^2}<+\infty$. Moreover, since $\mathbb{Q}$ has finite second order moment by \cref{assump:second_moment}, we also have that $\mathbb{E}_{\mathbb{Q}}\brackets{\Verts{x}^2 + \Verts{y}^2}<+\infty$. We therefore conclude that $\mathbb{E}_{\mathbb{P}}\brackets{\Verts{D_h(x)}^2}$ is finite which ensure that $D_h$ belongs to $\mathcal{H}$. 

{\bf Well-definiteness of $D_{\outvar}$.} To show that $D_{\outvar}$ is well defined, we need to prove that $(x,y)\mapsto \partial_{\outvar}\pointWoutOBJ\parens{\outvar,\innerpred(x),x,y}$ is integrable under $\mathbb{Q}$. By \cref{assump:local_Lip_grad_l_out}, we know that $\partial_{\outvar}\ell_{out}$ has at most a quadratic growth in it last three arguments so that the following inequality holds.
\begin{align*}
    \Verts{\partial_{\outvar}\pointWoutOBJ\parens{\outvar,\innerpred(x),x,y}}
	 &\leq 
	 C\Verts{1 + \Verts{h(x)}^2 + \Verts{x}^2  + \Verts{y}^2 }.
\end{align*}
We can directly conclude by taking the expectation under $\mathbb{Q}$ in the above inequality and recalling that $\mathbb{E}_{\mathbb{Q}}\brackets{\Verts{h(x)}^2}$ is finite by \cref{eq:finite_LQ_norm}, and that $\mathbb{Q}$ has finite second-order moments by \cref{assump:second_moment}.

{\bf Differentiability of $\outOBJ$.} Since differentiability is a local notion, we may assume without loss of generality that $\Verts{\epsilon}^2 + \Verts{g}_{\mathcal{H}}^2 \leq 1$. Introduce the functions $\Delta_1$ and $\Delta_2$ defined over $\Omega\times\mathcal{H}, \mathcal{X}\times \mathcal{Y}\times [0,1]$ as follows:
\begin{align*}
	\Delta_1(\epsilon,g,x,y,t) &:= \partial_{v}\ell_{out}\parens{\outvar+t\epsilon,h(x) + tg(x),x,y} -  \partial_{v}\ell_{out}\parens{\outvar+t\epsilon,h(x),x,y}\\
	\Delta_1'(\epsilon,g,x,y,t) &:= \partial_{v}\ell_{out}\parens{\outvar+t\epsilon,h(x),x,y} -  \partial_{v}\ell_{out}\parens{\outvar,h(x),x,y}\\
	\Delta_2(\epsilon,g,x,y,t) &:= \partial_{\outvar}\ell_{out}\parens{\outvar+t\epsilon,h(x) + tg(x),x,y} -  \partial_{\outvar}\ell_{out}\parens{\outvar+t\epsilon,h(x),x,y}\\
	\Delta_2'(\epsilon,g,x,y,t) &:= \partial_{\outvar}\ell_{out}\parens{\outvar+t\epsilon,h(x),x,y} -  \partial_{\outvar}\ell_{out}\parens{\outvar,h(x),x,y}.
\end{align*}
We consider the  first-order error $E(\epsilon,g)$ which admits the following upper-bounds:
\begin{align*}
E(\epsilon,g):=& \verts{ \outOBJ(\outvar + \epsilon, h+g)- \outOBJ(\outvar,h) - d_{out}(\epsilon,g)}\\
=& \verts{\mathbb{E}_{\mathbb{Q}}\brackets{ \int_0^1 \diff t  \parens{g(x)^{\top}(\Delta_1 + \Delta_1')(\epsilon,g,x,y,t) + \epsilon^{\top}(\Delta_2+\Delta_2')(\epsilon,g,x,y,t)  }} }\\
	\leq & 
	\mathbb{E}_{\mathbb{Q}}\brackets{ \int_0^1 \diff t  \Verts{g(x)}\parens{\Verts{\Delta_1(\epsilon,g,x,y,t)} + \Verts{\Delta_1'(\epsilon,g,x,y,t)}}}\\
	&+  \mathbb{E}_{\mathbb{Q}}\brackets{\int_0^1 \diff t\Verts{\epsilon}\parens{\Verts{\Delta_2(\epsilon,g,x,y,t)}+ \Verts{\Delta_2'(\epsilon,g,x,y,t)} } }\\
	\leq & 
	\mathbb{E}_{\mathbb{Q}}\brackets{\Verts{g(x)}^2}^{\frac{1}{2}}
	\parens{\underbrace{\mathbb{E}_{\mathbb{Q}}\brackets{ \int_0^1 \! \diff t  \Verts{\Delta_1(\epsilon,g,x,y,t)}^2}^{\frac{1}{2}}}_{A_1(\epsilon,g)} \! + \underbrace{\mathbb{E}_{\mathbb{Q}}\brackets{\int_0^1 \! \diff t  \Verts{\Delta_1'(\epsilon,g,x,y,t)}^2}^{\frac{1}{2}}}_{A_2(\epsilon,g)}
	}\\
	&+  \Verts{\epsilon}\parens{\underbrace{\mathbb{E}_{\mathbb{Q}}\brackets{\int_0^1 \diff t \Verts{\Delta_2(\epsilon,g,x,y,t)}}}_{A_3(\epsilon,g)}+ \underbrace{\mathbb{E}_{\mathbb{Q}}\brackets{\int_0^1 \diff t \Verts{\Delta_2'(\epsilon,g,x,y,t)}} }_{A_4(\epsilon,g)}}\\
	\leq & M\Verts{g}_{\mathcal{H}}\parens{A_1(\epsilon,g) + A_2(\epsilon,g)} + \Verts{\epsilon}\parens{A_3(\epsilon,g) + A_4(\epsilon,g)}. 
\end{align*}
The second line uses differentiability of $\ell_{out}$ (\cref{assump:diff_l_out}). The third uses the triangular inequality, while the fourth line uses Cauchy-Schwarz inequality. Finally, the last line uses \cref{eq:finite_LQ_norm}. 

We simply need to show that each of the terms $A_1, A_2, A_3$ and $A_4$ converge to $0$ as $\epsilon$ and $g$ converge to $0$. We treat each term separately.

{\bf Controlling $A_1$ and $A_3$.}
For $\epsilon$ small enough so that \cref{assump:local_Lip_grad_l_out} holds, the following upper-bounds on $A_1$ and $A_2$ hold:
\begin{align*}
    A_1(\epsilon,g) \leq & C \mathbb{E}_{\mathbb{Q}}\brackets{ \Verts{g(x)}^2}^{\frac{1}{2}}\\
    \leq & CM^{\frac{1}{2}}\Verts{g}_{\mathcal{H}}\\
    A_3(\epsilon,g) \leq & C \mathbb{E}_{\mathbb{Q}}\brackets{ \parens{1 + \Verts{h(x) + tg(x)} + \Verts{h(x)}  + \Verts{x} + \Verts{y}  }\Verts{g(x)}}\\
    \leq & 
    C\mathbb{E}_{\mathbb{Q}}\brackets{\Verts{g(x)}^2}^{\frac{1}{2}} \parens{1 +  2\mathbb{E}_{\mathbb{Q}}\brackets{ \Verts{h(x)}^2 }^{\frac{1}{2}} + \mathbb{E}_{\mathbb{Q}}\brackets{ \Verts{g(x)}^2 }^{\frac{1}{2}} + \mathbb{E}_{\mathbb{Q}}\brackets{ \Verts{x}^2 + \Verts{y}^2 }^{\frac{1}{2}} }\\
    \leq & 
    CM^{\frac{1}{2}}\Verts{g}_{\mathcal{H}}\parens{1 + M^{\frac{1}{2}} +  2\mathbb{E}_{\mathbb{Q}}\brackets{ \Verts{h(x)}^2 }^{\frac{1}{2}} + \mathbb{E}_{\mathbb{Q}}\brackets{ \Verts{x}^2 + \Verts{y}^2 }^{\frac{1}{2}} }.
\end{align*}
For $A_1$, we used that $\partial_v \ell_out$ has is Lipschitz continuous in its second argument for any $x,y\in \mathcal{X}\times \mathcal{Y}$ and locally in $\outvar$ by \cref{assump:local_Lip_grad_l_out}. The second upper-bound on $A_1$ uses \cref{eq:finite_LQ_norm}. 
For $A_2$, we used the locally Lipschitz property of $\partial_{\outvar}\ell_{out}$ from \cref{assump:local_Lip_grad_l_out}, followed by Cauchy-Schwarz inequality and \cref{eq:finite_LQ_norm}. For the last line, we also used that $\Verts{g}_{\mathcal{H}}\leq 1$ by assumption. The above upper-bounds on $A_1$ and $A_3$ ensure that these quantities converge to $0$ as $\epsilon$ and $g$ approach $0$.

{\bf Controlling $A_2$ and $A_4$.}
To show that $A_2$ and $A_4$ converge to $0$, we will use the dominated convergence theorem. It is easy to see that $\Delta_1'\parens{\epsilon,g,x,y,t}$ and $\Delta_2'\parens{\epsilon,g,x,y,t}$ converge point-wise to $0$ when $\epsilon$ and $g$ converge to $0$ since $(\outvar,v)\mapsto \partial_v\ell_{out}(\outvar,v,x,y)$ and $(\outvar,v)\mapsto \partial_\outvar\ell_{out}(\outvar,v,x,y)$ are continuous by  \cref{assump:diff_l_out}. It remains to dominate these functions. For $\epsilon$ small enough so that \cref{assump:local_Lip_grad_l_out} holds, we have that:   
\begin{align*}
   \Delta_1'\parens{\epsilon,g,x,y,t}^2\leq & 16C^2\parens{1 + \Verts{h(x)}^2 + \Verts{x}^2  + \Verts{y}^2 }\\
    \Delta_2'\parens{\epsilon,g,x,y,t}\leq & 2C\parens{1 + \Verts{h(x)}^2 + \Verts{x}^2  + \Verts{y}^2 }.
\end{align*}
Both upper-bounds are integrable under $\mathbb{Q}$ since $\mathbb{E}_{\mathbb{Q}}\brackets{\Verts{h(x)}^2}<+\infty$ by \cref{eq:finite_LQ_norm} and $\mathbb{Q}$ has finite second-order moment by \cref{assump:second_moment}.  Therefore, by the dominated convergence theorem, we deduce that $A_2$ and $A_4$ converge to $0$ as $\epsilon$ and $g$ approach $0$.

Finally, we have shown that  $E(\epsilon,g) = o\parens{\Verts{\epsilon} + \Verts{g}_{\mathcal{H}}}$ which allows to conclude that $\outOBJ$ is differentiable with the partial derivatives given by \cref{eq:partial_deriv_l_out}. 
\end{proof}

\begin{lemma}[\bf Differentiability of $\partial_{\innerpred}\inOBJ$]\label{lemm:C_hessian_expression}
Under \cref{assump:integrability_diff_l_in,assump:smoothness,assump:Lip_cross_deriv_l_in,assump:second_moment,assump:continous_diff_out,assump:second_continous_diff_in}, the differential map $(\outvar,\innerpred)\mapsto \partial_{\innerpred}\inOBJ(\outvar,\innerpred)$ defined in  \cref{lemm:d_outvar_expression} is differentiable on $\Omega\times \mathcal{H}$ in the sense of \cref{def:parametric_diff}. Its differential $d_{(\outvar,\innerpred)}\partial_{\innerpred}\inOBJ(\outvar,h): \Omega\times \mathcal{H}\rightarrow \mathcal{H}$ acts on elements $(\epsilon,g) \in \Omega\times \mathcal{H}$ as follows:
\begin{align}\label{eq:differential_grad}
	d_{(\outvar,\innerpred)}\partial_{\innerpred}\inOBJ(\outvar,h)(\epsilon,g) = \partial_{h}^2\inOBJ(\outvar,h)g + \parens{\partial_{\outvar,h}\inOBJ(\outvar,h)}^{\star}\epsilon,
\end{align}
where $\partial_{h}^2\inOBJ(\outvar,h):\mathcal{H}\rightarrow \mathcal{H}$ is a linear symmetric operator representing the partial derivative of $\partial_h \inOBJ(\outvar,h)$ w.r.t $h$  and $\parens{\partial_{\outvar,h}\inOBJ(\outvar,h)}^{\star}$ is the adjoint of $\partial_{\outvar,\innerpred}\inOBJ(\outvar,h):\mathcal{H}\rightarrow \Omega$ which represents the partial derivative of $\partial_h \inOBJ(\outvar,h)$  w.r.t $\outvar$. 
Moreover, $\partial_{h}^2\inOBJ(\outvar,h)$ and $\partial_{\outvar,\innerpred}\inOBJ(\outvar,h)$ are given by: 
\begin{align}\label{eq:differential_grad_2}
\partial_{h}^2\inOBJ(\outvar,h)g=& x\mapsto \mathbb{E}_{\mathbb{P}}\brackets{ \partial_{v}^2\pointWinOBJ(\outvar,\innerpred(x),x,y)\middle|x}g(x)\\ 
\partial_{\outvar,h}\inOBJ(\outvar,h)g =&  \mathbb{E}_{\mathbb{P}}\brackets{\partial_{\outvar,v}\pointWinOBJ(\outvar,\innerpred(x),x,y)g(x)},
\end{align}
\end{lemma}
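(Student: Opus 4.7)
The plan is to follow the three-step template from \cref{lemm:d_outvar_expression}: identify a candidate differential, confirm its components are bounded linear operators, and then verify Hadamard differentiability by controlling the residual along arbitrary admissible sequences.

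The boundedness checks are routine. For $\partial_{\innerpred}^2 \inOBJ(\outvar, \innerpred)$, the pointwise $L$-smoothness in \cref{assump:smoothness} gives a uniform operator-norm bound on $\partial_v^2 \pointWinOBJ$, which transfers via the conditional expectation to $\Verts{\partial_\innerpred^2 \inOBJ(\outvar,\innerpred)g}_{\Hcal} \leq L \Verts{g}_{\Hcal}$; symmetry is inherited pointwise from Schwarz's theorem. For $\partial_{\outvar, \innerpred}\inOBJ(\outvar, \innerpred): \Hcal \to \Omega$, I would combine the at-zero bound in \cref{assump:integrability_diff_l_in} with the $v$-Lipschitz estimate in \cref{assump:Lip_cross_deriv_l_in} to obtain
\begin{align*}
\Verts{\partial_{\outvar, v}\pointWinOBJ(\outvar, \innerpred(x), x, y)} \leq C(1 + \Verts{x} + \Verts{y} + \Verts{\innerpred(x)}),
\end{align*}
which is square-integrable under $\mathbb{P}$ by \cref{assump:second_moment} and $\innerpred \in \Hcal$; Cauchy--Schwarz then yields operator boundedness, and duality gives the explicit adjoint $(\partial_{\outvar,\innerpred}\inOBJ(\outvar,\innerpred))^{\star}\epsilon(x) = \mathbb{E}_{\mathbb{P}}[\partial_{\outvar,v}\pointWinOBJ(\outvar,\innerpred(x),x,y)^{\top} \mid x]\epsilon$.

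For Hadamard differentiability, I would fix $(\outvar, \innerpred)$ and take arbitrary sequences $(\epsilon_n, g_n) \to (\epsilon, g)$ in $\Omega \times \Hcal$ and $t_n \to 0$ with $t_n \neq 0$. Using continuous differentiability of $(\outvar, v) \mapsto \partial_v \pointWinOBJ(\outvar, v, x, y)$ from \cref{assump:second_continous_diff_in}, the fundamental theorem of calculus applied along the segment from $(\outvar, \innerpred(x))$ to $(\outvar + t_n \epsilon_n, \innerpred(x) + t_n g_n(x))$ expresses the finite difference $t_n^{-1}[\partial_\innerpred \inOBJ(\outvar + t_n\epsilon_n, \innerpred + t_n g_n)(x) - \partial_\innerpred \inOBJ(\outvar, \innerpred)(x)]$ as a conditional expectation of an integral over $s \in [0,1]$ of the quantity $\partial_{\outvar,v}\pointWinOBJ(\cdot_s)^{\top}\epsilon_n + \partial_v^2 \pointWinOBJ(\cdot_s) g_n(x)$, with $\cdot_s = (\outvar + s t_n \epsilon_n, \innerpred(x) + s t_n g_n(x), x, y)$. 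Subtracting the candidate differential at $(\epsilon_n, g_n)$ splits the residual into an $\outvar$-piece involving $[\partial_{\outvar, v}\pointWinOBJ(\cdot_s) - \partial_{\outvar, v}\pointWinOBJ(\outvar, \innerpred(x), x, y)]^{\top} \epsilon_n$ and a $v$-piece involving $[\partial_v^2 \pointWinOBJ(\cdot_s) - \partial_v^2 \pointWinOBJ(\outvar, \innerpred(x), x, y)] g_n(x)$.

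The $\outvar$-piece is handled by combining the $v$-Lipschitz estimate in \cref{assump:Lip_cross_deriv_l_in}, which produces a term of order $t_n \Verts{g_n}_{\Hcal}$, with the dominated convergence theorem on the $\outvar$-dependence using the square-integrable envelope from \cref{assump:integrability_diff_l_in,assump:Lip_cross_deriv_l_in} and pointwise continuity of $\partial_{\outvar,v}\pointWinOBJ$. The hardest part will be the $v$-piece, because the factor $g_n(x)$ sits inside the integrand and $\Hcal$-convergence of $g_n$ does not deliver pointwise convergence. The resolution I would adopt is to decompose $g_n = g + (g_n - g)$: the $(g_n - g)$ contribution is dominated by $2L\Verts{g_n - g}_{\Hcal} \to 0$ thanks to the uniform bound $\Verts{\partial_v^2 \pointWinOBJ} \leq L$, while for the fixed-$g$ contribution I would invoke a subsequence argument, extracting a further subsequence along which $g_n \to g$ almost everywhere (hence $\cdot_s \to (\outvar, \innerpred(x), x, y)$ pointwise) to obtain pointwise convergence of the Hessian difference applied to $g$, then apply dominated convergence with envelope $2L\Verts{g(x)}$; since every subsequence admits such a further subsequence, the whole sequence converges. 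Adding the small correction $\Verts{T(\epsilon_n, g_n) - T(\epsilon, g)}_{\Hcal} \to 0$ from boundedness of the candidate differential $T$ closes the argument and identifies \cref{eq:differential_grad} as the Hadamard differential.
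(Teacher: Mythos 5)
Your proposal is correct and follows essentially the same route as the paper's proof: the same candidate operators bounded via \cref{assump:smoothness}, \cref{assump:integrability_diff_l_in}, and \cref{assump:Lip_cross_deriv_l_in}, the same fundamental-theorem-of-calculus expansion along the segment, and the same four-way split (Hessian piece decomposed as $g_n = g + (g_n - g)$ with a uniform $L$-bound on the difference and dominated convergence on the fixed-$g$ part; cross-derivative piece handled by the $v$-Lipschitz estimate plus dominated convergence in $\outvar$). Your subsequence-of-subsequences device for extracting a.e.\ convergence of $g_n$ is in fact slightly more careful than the paper, which asserts pointwise convergence directly from $\Hcal$-convergence, but this is a refinement of the same argument rather than a different approach.
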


\begin{proof}
Let $(\outvar,\innerpred)$ be in $\Omega\times\mathcal{H}$. To show that $\partial_{\outvar}L_{in}$ is Hadamard differentiable, we proceed in two steps: we first identify a candidate differential and show that it is a bounded operator, then we prove Hadamard differentiability.

{\bf Candidate differential.} For a given $(\outvar,\innerpred)\in \Omega\times \mathcal{H}$, we consider the following linear operators $C_{w,h}: \mathcal{H}\rightarrow  \mathcal{H}$ and $B_{w,h}:\mathcal{H}\rightarrow \Omega$: 
\begin{align*}
	C_{\outvar,h}g = \mathbb{E}_{\mathbb{P}}\brackets{ \partial_{v}^2\ell_{in}(\outvar,h(x),x,y) \middle|x}g(x),\qquad
	B_{\outvar,h}g= \mathbb{E}_{\mathbb{P}}\brackets{ \partial_{\outvar,v}\ell_{in}(\outvar,h(x),x,y)g(x)}, 
\end{align*}
where the expectations are over $y$ conditionally on $x$. Next, we show that $C_{\outvar,h}$ and $B_{\outvar,h}$ are well-defined and bounded. 

{\bf Well-definiteness of the operator $C_{\outvar,h}$.} The first step is to show that the image $C_{\outvar,h}g$ of any element $g\in \mathcal{H}$ by $C_{\outvar,h}$ is also an element in $\mathcal{H}$. To this end, we simply need to find a finite upper-bound on $\Verts{C_{\outvar,h}g}_{\mathcal{H}}$ for a given $g\in \mathcal{H}$:
\begin{align*}
	\Verts{C_{\outvar,h}g}_{\mathcal{H}}^2 & = \mathbb{E}_{\mathbb{P}}\brackets{ \Verts{\mathbb{E}_{\mathbb{P}}\brackets{ \partial_{v}^2\ell_{in}(\outvar,h(x),x,y) \middle|x}g(x)}^2 }\\
	& \leq 
	\mathbb{E}_{\mathbb{P}}\brackets{ \Verts{\mathbb{E}_{\mathbb{P}}\brackets{ \partial_{v}^2\ell_{in}(\outvar,h(x),x,y) \middle|x}}_{op}^2\Verts{g(x)}^2 }\\
	&\leq 
	\mathbb{E}_{\mathbb{P}}\brackets{ \mathbb{E}_{\mathbb{P}}\brackets{ \Verts{\partial_{v}^2\ell_{in}(\outvar,h(x),x,y) }_{op}\middle|x}^2\Verts{g(x)}^2 }\\
	&\leq 
	\mathbb{E}_{\mathbb{P}}\brackets{ \Verts{\partial_{v}^2\ell_{in}(\outvar,h(x),x,y) }^2_{op}\Verts{g(x)}^2 }\\
	&\leq 
	L^2 \Verts{g}_{\mathcal{H}}^2.
\end{align*}
The second line follows using the operator norm inequality, 
the third line follows by Jensen's inequality applied to the norm, while the fourth uses the ``tower'' property for conditional distributions. 
Finally, the last line uses that $\partial_{v}^2\ell_{in}$ is upper-bounded uniformly in $x$ and $y$ by \cref{assump:smoothness}. Therefore, we conclude that $C_{\outvar,h}g$ belongs to $\mathcal{H}$. Moreover, the inequality $\Verts{C_{\outvar,h}g}_{\mathcal{H}}\leq L\Verts{g}_{\mathcal{H}}$ also establishes the continuity of the operator $C_{\outvar,h}$. 

{\bf Well-definiteness of the operator $B_{\outvar,h}$.} We first show that the image $B_{\outvar,h}$ is bounded. 
For a given $g$ in $\mathcal{H}$, we write:
\begin{align*}
	\Verts{B_{\outvar,h}g}  = & 
	\Verts{\mathbb{E}_{\mathbb{P}}\brackets{ \partial_{\outvar,v}\ell_{in}(\outvar,h(x),x,y)g(x)}}\\
	\leq &  
	\mathbb{E}_{\mathbb{P}}\brackets{ \Verts{\partial_{\outvar,v}\ell_{in}(\outvar,h(x),x,y)g(x)}}\\
	\leq & 
	\mathbb{E}_{\mathbb{P}}\brackets{ \Verts{\partial_{\outvar,v}\ell_{in}(\outvar,h(x),x,y)}_{op}\Verts{g(x)}}\\
	\leq & 
	\Verts{g}_{\mathcal{H}}\mathbb{E}_{\mathbb{P}}\brackets{ \Verts{\partial_{\outvar,v}\ell_{in}(\outvar,h(x),x,y)}_{op}^2}^{\frac{1}{2}}\\
	\leq & 
	\Verts{g}_{\mathcal{H}}\mathbb{E}_{\mathbb{P}}\brackets{ \Verts{\partial_{\outvar,v}\ell_{in}(\outvar,h(x),x,y)-\partial_{\outvar,v}\ell_{in}(\outvar,0,x,y)}_{op}^2}^{\frac{1}{2}}\\  
	&+  \Verts{g}_{\mathcal{H}}\mathbb{E}_{\mathbb{P}}\brackets{ \Verts{\partial_{\outvar,v}\ell_{in}(\outvar,0,x,y)}_{op}^2}^{\frac{1}{2}}\\
	\leq & 
	C\Verts{g}_{\mathcal{H}}\parens{\mathbb{E}_{\mathbb{P}}\brackets{ \Verts{h(x)}^2}^{\frac{1}{2}}  +  \mathbb{E}_{\mathbb{P}}\brackets{ \parens{1+\Verts{x} + \Verts{y}}^2}^{\frac{1}{2}}}\\ 
	\leq &
	C\Verts{g}_{\mathcal{H}}\parens{ \Verts{h}_{\mathcal{H}} +2\mathbb{E}_{\mathbb{P}}\brackets{ 1+\Verts{x}^2 + \Verts{y}^2}  } <+\infty.
\end{align*}
In the above expression, the second line is due to Jensen's inequality applied to the norm function, the third line follows from the operator norm inequality, while the fourth follows by Cauchy-Schwarz. The fifth line is due to the triangular inequality. Finally, the sixth line relies on two facts: 1) that $v\mapsto \partial_{\outvar,v}\ell_{in}(\outvar,v,x,y)$ is Lipschitz uniformly in $x$ and $y$ and locally in $\outvar$ by \cref{assump:Lip_cross_deriv_l_in}, and, 2) that $\Verts{\partial_{\outvar,v}\ell_{in}(\outvar,0,x,y)}$ has at most a linear growth in $x$ and $y$ locally in $\outvar$ by \cref{assump:integrability_diff_l_in}. 
Since $\mathbb{P}$ has finite second order moments by \cref{assump:second_moment} and both $h$ and $g$ are square integrable, we conclude that the constant $\Verts{B_{\outvar,h}}$ is finite. Moreover, the last inequality establishes that $B_{\outvar,h}$ is a continuous linear operator from $\mathcal{H}$ to $\Omega$. One can then see that the adjoint of $B_{\outvar,h}$ admits a representation of the form:
$$
(B_{\outvar,h})^{\star}\epsilon := \parens{\partial_{\outvar,h}\inOBJ(\outvar,h)}^{\star}\epsilon = x\mapsto \mathbb{E}_{\mathbb{P}}\brackets{\parens{\partial_{\outvar,v}\pointWinOBJ(\outvar,\innerpred(x),x,y)}^{\top}\middle|x}\epsilon.
$$
Therefore, we can consider the following candidate operator $d_{in}^2$  for the differential of $\partial_{h}L_{in}$:
\begin{align*}
	d_{in}^2(\epsilon,g) := C_{\outvar,h}g + (B_{\outvar,h})^{\star}\epsilon.
\end{align*}

{\bf Differentiablity of $\partial_{\innerpred}\inOBJ$.} We will show that $\partial_{\innerpred}\inOBJ$ is jointly Hadamard differentiable at $(\outvar, \innerpred)$ with differential operator given by:
\begin{align}\label{eq:second_diff}
d_{(\outvar,\innerpred)}\partial_{\innerpred}\inOBJ(\outvar,h)(\epsilon,g) = C_{\outvar,\innerpred}g + (B_{\outvar,\innerpred})^{\star}\epsilon.
\end{align}
To this end, we consider a sequence $(\epsilon_k, g_k)_{k\geq 1}$ converging in $\Omega\times \mathcal{H}$ towards an element $(\epsilon,g)\in \Omega\times \mathcal{H}$ and a non-vanishing real valued sequence $t_k$ converging to $0$. Define the first-order error $E_k$ as follows:
\begin{align*}
E_k:=&\Verts{\frac{1}{t_k}\parens{\partial_{\innerpred}\inOBJ(\outvar+t_k\epsilon_k,\innerpred+t_kg_k)- \partial_{\innerpred}\inOBJ(\outvar,\innerpred)} - C_{\outvar,h}g - (B_{\outvar,\innerpred})^{\star}\epsilon}_{\mathcal{H}}^2.
\end{align*}
Introduce the functions $P_1,P_2, \Delta_1$ and $\Delta_2$ defined over $\mathbb{N}^{\star},\mathcal{X}\times \mathcal{Y}\times [0,1]$ as follows:
\begin{align*}
	P_1(k,x,y,s) &= \begin{cases}
 	\partial_v^2\ell_{in}(\outvar+st_k\epsilon_k, h(x)+st_kg_k(x),x,y),&\qquad k\geq 1 \\
 	\partial_v^2\ell_{in}(\outvar,h(x),x,y), &\qquad k=0
 \end{cases}\\
	P_2(k,x,y,s) &= 
	\begin{cases}
		\parens{\partial_{\outvar,v}\ell_{in}(\outvar+st_k\epsilon_k,h(x)+st_kg_k(x),x,y)}^{\top}& \qquad k\geq 1\\
		\parens{\partial_{\outvar,v}\ell_{in}(\outvar,h(x),x,y)}^{\top},& \qquad k=0.
	\end{cases}\\
	\Delta_1(k,x,y,s) &= P_1(k,x,y,s)- P_1(0,x,y,s),\\
	\Delta_2(k,x,y,s) &= P_2(k,x,y,s)- P_2(0,x,y,s).\\
\end{align*}
By joint differentiability of $(\outvar,v)\mapsto\partial_v\ell_{in}(\outvar,v,x,y)$ (\cref{assump:continous_diff_out}), we use the fundamental theorem of calculus to express $E_k$ in terms of $\Delta_1$ and $\Delta_2$: 
\begin{align*}
	E_k
	= &
	\mathbb{E}_{\mathbb{P}}\brackets{\Verts{\mathbb{E}_{\mathbb{P}}\brackets{\int_0^1 \!\! \diff t \parens{ P_1(k,x,y,s)g_k(x) \smallminus P_1(0,x,y,s)g(x) + P_2(k,x,y,s)\epsilon_k \smallminus P_2(0,x,y,s)\epsilon}\middle| x }  }^2}\\
	\leq & 
	\mathbb{E}_{\mathbb{P}}\brackets{\mathbb{E}_{\mathbb{P}}\brackets{\int_0^1 \diff t  \Verts{P_1(k,x,y,s)g_k(x)\smallminus P_1(0,x,y,s)g(x) + P_2(k,x,y,s)\epsilon_k \smallminus P_2(0,x,y,s)\epsilon}^2\middle| x}}\\
	= & 
	\mathbb{E}_{\mathbb{P}}\brackets{\int_0^1 \diff t  \Verts{P_1(k,x,y,s)g_k(x)- P_1(0,x,y,s)g(x) + P_2(k,x,y,s)\epsilon_k - P_2(0,x,y,s)\epsilon}^2}\\
	\leq &
	4\underbrace{\mathbb{E}_{\mathbb{P}}\brackets{\int_0^1 \diff t  \Verts{\Delta_1(k,x,y,t)}^2_{op}\Verts{g(x)}^2}}_{A_k^{(1)}} + 4\Verts{\epsilon}^2\underbrace{\mathbb{E}_{\mathbb{P}}\brackets{\int_0^1 \diff t \Verts{\Delta_2(k,x,y,t)}^2_{op}}}_{B_k^{(1)}}\\
	&+ 
	4\underbrace{\mathbb{E}_{\mathbb{P}}\brackets{\int_0^1 \diff t  \Verts{P_1(k,x,y,t)}^2_{op}\Verts{g(x)-g_k(x)}^2}}_{A_k^{(2)}} + 4\Verts{\epsilon-\epsilon_k}^2\underbrace{\mathbb{E}_{\mathbb{P}}\brackets{\int_0^1 \diff t \Verts{P_2(k,x,y,t)}^2_{op}}}_{B_k^{(2)}}.
\end{align*}
The second line uses Jensen's inequality applied to the squared norm, the fourth line results from the ``tower'' property of conditional distributions. The fifth line uses Jensen's inequality for the square function followed by the operator norm inequality. It remains to show that $A_k^{(1)}, B_k^{(1)}$ and $A_k^{(2)}$  converge to $0$ and that $B_k^{(2)}$ is bounded. 

{\bf Upper-bound on $A_k^{(1)}$.}
We will use the dominated convergence theorem. \cref{assump:smoothness} ensures the existence of a positive constant $L$  and a neighborhood $B$ of $\outvar$ so that $v\mapsto\Verts{\partial_{v}^2\ell_{in}(\outvar',v,x,y)}_{op}$ is bounded by $L$ for any $\outvar',x,y\in B\times \mathcal{X}\times \mathcal{Y}$. 
Since $\outvar + t_k \epsilon_k\rightarrow \outvar$, then there exists some  $K_0$ so that, for any $k\geq K_0$, we can ensure that $\outvar + t_k \epsilon_k\in B$. This allows us to deduce that:
\begin{align}\label{eq:domination}
	\Verts{\Delta_1(k,x,y,t)}_{op}^2\Verts{g(x)}^2\leq 4L^2\Verts{g(x)}^2, 
\end{align}
 for any $k\geq K_0$ and any $x,y\in \mathcal{X}\times \mathcal{Y}$, with $\Verts{g(x)}^2$ being integrable under $\mathbb{P}$. 

Moreover, we also have the following point-wise convergence for $\mathbb{P}$-almost all $x\in \mathcal{X}$:
\begin{align}\label{eq:pointwise_conv}
	\Verts{\Delta_1(k,x,y,t)}_{op}^2\Verts{g(x)}^2\rightarrow 0.
\end{align}
\cref{eq:pointwise_conv} follows by noting that $\outvar + t_k\epsilon_k \rightarrow \outvar$ and that $h(x) + t_k g_k(x)\rightarrow h(x)$ for  $\mathbb{P}$-almost all $x\in \mathcal{X}$, since $t_k$ converges to $0$, $\epsilon_k$ converges to $\epsilon$ and $g_k$ converges to $g$ in $\mathcal{H}$ (a fortiori converges point-wise for $\mathbb{P}$-almost all $x\in \mathcal{X}$). Additionally, the map $(\outvar,v)\mapsto\Verts{\partial_{v}^2\ell_{in}(\outvar,v,x,y)}_{op}$ is continuous by \cref{assump:second_continous_diff_in}, which allows to establish  \cref{eq:pointwise_conv}. From \cref{eq:domination,eq:pointwise_conv} we can apply the dominated convergence theorem which allows to deduce that $A_k^{(1)}\rightarrow 0$. 

{\bf Upper-bound on $A_k^{(2)}$.}
By a similar argument as for $A_k^{(1)}$ and using \cref{assump:smoothness}, we know that there exists $K_0>0$ so that for any $k\geq K_0$:
\begin{align}
	\Verts{P_1(k,x,y,t)}_{op}^2\leq L^2.
\end{align}
Therefore, we directly get that:
\begin{align}
	A_k^{(2)}\leq L^2\Verts{g(x)-g_k(x)}^2_{\mathcal{H}}\rightarrow 0, 
\end{align}
where we used that $g_k\rightarrow g$ by construction. 

{\bf Upper-bound on $B_k^{(2)}$.} 
We will show that $\Verts{P_2\parens{k,x,y,t}}_{op}
$ is upper-bounded by a square integrable function under $\mathbb{P}$. By \cref{assump:integrability_diff_l_in,assump:Lip_cross_deriv_l_in}, there exists a neighborhood $B$ and a positive constant $C$ such that, for all $\outvar',v_1,v_2,x,y\in B\times \mathcal{V}\times \mathcal{V}\times \mathcal{X}\times \mathcal{Y}$:
\begin{align}\label{eq:growth_2}
\Verts{\partial_{\outvar,v_1}\ell_{in}\parens{\outvar',0,x,y}}&\leq C\parens{1+\Verts{x} + \Verts{y}}\\
	\Verts{\partial_{\outvar,v}\ell_{in}\parens{\outvar',v_1,x,y}- \partial_{\outvar,v}\ell_{in}\parens{\outvar',v_2,x,y}}&\leq C \Verts{v_1-v_2}\label{eq:lip_grad}
\end{align}
By a similar argument as for $A_k^{(1)}$, there exists $K_0$ so that for any $k\geq K_0$, the above inequalities hold when choosing $\outvar'=\outvar + t_k\epsilon_k$. 
Using this fact, we obtain the following upper-bound on $\Verts{P_2\parens{k,x,y,t}}_{op}$  for $k\geq K_0$:
\begin{align*}
	\Verts{P_2\parens{k,x,y,t}}_{op}
	\leq & 
	\Verts{\partial_{\outvar,v}\ell_{in}(\outvar+st_k\epsilon_k,h(x)+st_kg_k(x),x,y)- \partial_{\outvar,v}\ell_{in}(\outvar+st_k\epsilon_k,0,x,y)}_{op} \\
	&+ \Verts{\partial_{\outvar,v}\ell_{in}(\outvar+st_k\epsilon_k,0,x,y)}_{op}\\
	\leq & 
	C\parens{1+\Verts{h(x) + st_k g_k(x)} + \Verts{x} + \Verts{y} }\\
	\leq & C\parens{1+\Verts{h(x)} + t_k\Verts{ g_k(x)} + \Verts{x} + \Verts{y} }
\end{align*}
Therefore, by taking expectations and integrating over $t$, it follows:
\begin{align*}
	B_k^{(2)}
	&\leq C^2\mathbb{E}_{\mathbb{P}}\brackets{  \parens{1+\Verts{h(x)} + t_k\Verts{ g_k(x)} + \Verts{x} + \Verts{y}}^2}\\
	&\leq 
	4C^2\mathbb{E}_{\mathbb{P}}\brackets{  \parens{1+\Verts{h(x)}^2 + t_k^2\Verts{ g_k(x)}^2 + \Verts{x}^2 + \Verts{y}^2}}.
\end{align*}
By construction $t_k^2\Verts{ g_k(x)}^2 \rightarrow 0$ and is therefore a bounded sequence. Moreover, $\mathbb{E}_{\mathbb{P}}\brackets{\Verts{h(x)}^2}$ is finite since $h$ belongs to $\mathcal{H}$. Finally, $\mathbb{E}_{\mathbb{P}}\brackets{\Verts{x}^2+ \Verts{y}^2}<+\infty$ by \cref{assump:second_moment}. Therefore, we have shown that $B_k^{(2)}$ is bounded.

{\bf Upper-bound on $B_k^{(1)}$.}
By a similar argument as for $B_k^{(2)}$ and using again \cref{assump:integrability_diff_l_in,assump:Lip_cross_deriv_l_in}, there exists $K_0$  so that for any $k\geq K_0$:
\begin{align*}
\Verts{\Delta_2(k,x,y,t)}_{op}
	\leq &
 	\Verts{\partial_{\outvar,v}\ell_{in}(\outvar+st_k\epsilon_k,h(x)+st_kg_k(x),x,y)- \partial_{\outvar,v}\ell_{in}(\outvar+st_k\epsilon_k,h(x),x,y)}_{op} \\ 
 	&+ \Verts{\partial_{\outvar,v}\ell_{in}(\outvar+st_k\epsilon_k,h(x),x,y)- \partial_{\outvar,v}\ell_{in}(\outvar,h(x),x,y)}_{op}\\
 	\leq &
 	Ct_k\Verts{g_k(x)} + \Verts{\partial_{\outvar,v}\ell_{in}(\outvar+st_k\epsilon_k,h(x),x,y)- \partial_{\outvar,v}\ell_{in}(\outvar,h(x),x,y)}_{op},
\end{align*}
where we used \cref{eq:lip_grad} to get an upper-bound on the first terms. By squaring the above inequality and taking the expectation under $\mathbb{P}$ we get:
\begin{align}
	B_k^{(1)}\leq 2Ct_k \Verts{g_k}_{\mathcal{H}}^2 + 2\mathbb{E}_{\mathbb{P}}\brackets{\underbrace{\Verts{\partial_{\outvar,v}\ell_{in}(\outvar+st_k\epsilon_k,h(x),x,y)- \partial_{\outvar,v}\ell_{in}(\outvar,h(x),x,y)}^2_{op}}_{e_k(x,y)} } .
\end{align}
We only need to show that $\mathbb{E}_{\mathbb{P}}\brackets{e_k(x,y)}$ converges to $0$ since the first term $2Ct_k \Verts{g_k}_{\mathcal{H}}^2$ already converges to $0$ by construction of $t_k$ and $g_k$. To achieve this, we will use the dominated convergence theorem. It is easy to see that $e_k(x,y)$ converges to $0$ point-wise by continuity of $\outvar\mapsto\partial_{\outvar,v}\ell_{in}(\outvar,v,x,y)$ (\cref{assump:second_continous_diff_in}). Therefore, we only need to show that $e_k(x,y)$ is dominated by an integrable function. Provided that $k\geq K_0$, we can use \cref{eq:growth_2,eq:lip_grad} to get the following upper-bounds:
\begin{align*}
	\frac{1}{4}e_k(x,y)\leq &\Verts{\partial_{\outvar,v}\ell_{in}(\outvar+st_k\epsilon_k,h(x),x,y)- \partial_{\outvar,v}\ell_{in}(\outvar+st_k\epsilon_k,0,x,y)}^2_{op}\\ 
	&+ \Verts{\partial_{\outvar,v}\ell_{in}(\outvar,h(x),x,y)- \partial_{\outvar,v}\ell_{in}(\outvar,0,x,y)}^2_{op}\\
	&+  \Verts{\partial_{\outvar,v}\ell_{in}(\outvar,0,x,y)}^2_{op} + \Verts{\partial_{\outvar,v}\ell_{in}(\outvar+st_k\epsilon_k,0,x,y)}_{op}^2\\
	\leq & 
	2C^2\parens{1+\Verts{h(x)}^2 + \Verts{x}^2 + \Verts{y}^2}.
\end{align*}
The l.h.s. of the last line is an integrable function that is independent of $k$, since $h$ is square integrable by definition and  $\Verts{x}^2+ \Verts{y}^2$ are integrable by  \cref{assump:second_moment}. Therefore, by application of the dominated convergence theorem, it follows that $\mathbb{E}_{\mathbb{P}}\brackets{e_k(x,y) }\rightarrow 0$, we have shown that $B_k^{(1)}\rightarrow 0$. 

To conclude, we have shown that the first-order error $E_k$ converges to $0$ which means that $(\outvar,h)\mapsto\partial_h\inOBJ(\outvar,h)$ is jointly differentiable on $\Omega\times \mathcal{H}$, with differential given by \cref{eq:differential_grad,eq:differential_grad_2}.

\end{proof}

\subsection{Proof of \cref{prop:L2_proposition}}\label{mainproof}

\begin{proof}
The strategy is to show that the conditions on $\inOBJ$ and $\outOBJ$ stated in \cref{prop:implicit_diff} hold. 
By \cref{assump:strong_convexity}, for any $\outvar\in \Omega$, there exists a positive constant $\mu$ and a neighborhood $B$ of $\outvar$ on which the function $\pointWinOBJ(\outvar',v,x,y)$ is $\mu$-strongly convex in $v$ for any $(\outvar',x,y)\in B\times\mathcal{X}\times \mathcal{Y}$. Therefore, by integration, we directly deduce that $h\mapsto \inOBJ(\outvar',\innerpred)$ is $\mu$ strongly convex in $\innerpred$ for any $\outvar'\in B$. By \cref{lemm:C_hessian_expression,lemm:d_outvar_expression}, $h\mapsto\inOBJ(\outvar,\innerpred)$ is differentiable on $\mathcal{H}$ for all $\outvar\in \Omega$ and $\partial_h\inOBJ$ is Hadamard differentiable on $\Omega\times\mathcal{H}$. Additionally, $\outOBJ$ is jointly differentiable in $\outvar$ and $\innerpred$ by \cref{lemma:out_differential}. Therefore, the conditions on $\inOBJ$ and $\outOBJ$ for applying  \cref{prop:implicit_diff} hold.
Using the notations from \cref{prop:implicit_diff}, 
we have that the total gradient  $\outgrad$ can be expressed as:
\begin{align}\label{eq:total_grad_appendix}
\outgrad = g_{\outvar} + \crossDeriv_{\outvar} \adjoint_{\outvar}^\star
\end{align}
where 
$g_{\outvar} = \partial_{\outvar}\outOBJ(\outvar,\innerpred_{\outvar}^{\star})$, $\crossDeriv_{\outvar} = \partial_{\outvar,\innerpred}\inOBJ(\outvar,\innerpred_{\outvar}^{\star})$ and where $\adjoint_{\outvar}^\star$ is the minimizer of the adjoint objective $\dualOBJ$:
\begin{align*}
    \dualOBJ(\outvar, \adjoint) := \tfrac{1}{2}\ \adjoint^{\top} \Hessian_{\outvar} \adjoint + \adjoint^\top d_{\outvar}, 
\end{align*}
with $\Hessian_{\outvar} = \partial_{\innerpred}^2\inOBJ(\outvar,\innerpred_{\outvar}^{\star})$ and $d_{\outvar} = \partial_{\innerpred}\outOBJ(\outvar,\innerpred_{\outvar}^{\star})$. Recalling the expressions of the first and second order differential operators from \cref{lemm:C_hessian_expression,lemm:d_outvar_expression}, we deduce the expression of the adjoint objective as a sum of two expectations under $\mathbb{P}$ and $\mathbb{Q}$ given the optimal prediction function
\begin{align*}
    \dualOBJ(\outvar, \adjoint)
    = &\tfrac{1}{2}\ \mathbb{E}_{(x,y)\sim\mathbb{P}}\brackets{ \adjoint(x)^{\top} \partial_v^2\pointWinOBJ\left(\outvar,\innerpred_{\outvar}^{\star}(x),x,y\right)   \adjoint(x)}   \\
    &+ \mathbb{E}_{(x,y)\sim\mathbb{Q}}\brackets{ \adjoint(x)^{\top}\partial_{v} \pointWoutOBJ\left(\outvar,\innerpred_{\outvar}^{\star}(x),x,y\right)}.
\end{align*}
Furthermore, the vectors $g_{\outvar}$ and $\crossDeriv_{\outvar} \adjoint_{\outvar}^\star$ appearing in \cref{eq:total_grad_appendix} can also be expressed as expectations:
\begin{align*}
g_{\outvar} &=  \mathbb{E}_{(x,y)\sim\mathbb{Q}}\brackets{ \partial_{\outvar}\pointWoutOBJ\parens{\outvar,\innerpred_{\outvar}^{\star}(x),x,y}}\\
{\crossDeriv}_{\outvar}a_{\outvar}^{\star} &= \mathbb{E}_{(x,y)\sim\mathbb{P}}\brackets{ \partial_{\outvar,v}\pointWinOBJ\left(\outvar,\innerpred_{\outvar}^{\star}(x),x,y\right)a_{\outvar}^{\star}(x)}.
\end{align*}
\end{proof}

\section{Convergence Analysis}\label{sec:convergence_appendix}
We provide a convergence result of \cref{alg:func_diff_alg} to stationary points of $\mathcal{F}$. 
Our analysis uses the framework of biased stochastic gradient descent (Biased SGD)~\citep{demidovich2024guide} where the bias arises from suboptimality errors when solving the inner-level and adjoint problems. 
\subsection{Setup and assumptions}\label{sec:convergence_assumptions}

{\bf Gradient estimators.}
Recall that the total gradient $\outgrad$ admits the following expression under \cref{assump:integrability_l_in,assump:integrability_diff_l_in,assump:smoothness,assump:Lip_cross_deriv_l_in,assump:strong_convexity,assump:second_continous_diff_in,assump:continous_diff_out,assump:diff_l_out,assump:local_Lip_grad_l_out,assump:integrability_l_out,assump:second_moment,assump:radon_nikodym}: 
\begin{align*}
    \outgrad = \mathbb{E}_{\mathbb{Q}}\brackets{ \partial_{\outvar}\pointWoutOBJ\parens{\outvar,\innerpred_{\outvar}^{\star}(x),x,y} } + \mathbb{E}_{\mathbb{P}}\brackets{ \partial_{\outvar,v}\pointWinOBJ\left(\outvar,\innerpred_{\outvar}^{\star}(x),x,y\right)a_{\outvar}^{\star}(x)},
\end{align*}

We denote by $\outgradhat$ the gradient estimator, i.e., the mapping $\outgradmap: \Omega \rightarrow \Omega$ computed by \cref{alg:outgrad} and which admits the following expression:
\begin{align*}
    \hat{g}(\outvar) = \tfrac{1}{\verts{\outBatch}} \sum_{(\tilde{x},\tilde{y})\in\outBatch} \partial_{\outvar} \pointWoutOBJ\left(\outvar,\hat{h}_{\outvar}(\tilde{x}),\tilde{x},\tilde{y}\right) + \tfrac{1}{\verts{\mathcal{B}_{in}}} \sum_{(x,y)\in \mathcal{B}_{in}} \partial_{\outvar,v}\ell_{in}\parens{\outvar,\hat{h}_{\outvar}(x),x,y}\hat{a}_{\outvar}(x),
\end{align*}
where $\mathcal{B}_{in}$ and $\mathcal{B}_{out}$ are samples from $\mathbb{P}$ and $\mathbb{Q}$ independent from $\hat{h}_{\outvar}$ and $\hat{a}_{\outvar}$ and independent from each other (i.e.  $\mathcal{B}_{in}\perp \mathcal{B}_{out}$). 
Here, there are three independent sources of randomness when computing $\hat{g}(\outvar)$: estimation of $\hat{\innerpred}_\outvar$ and $\hat{\adjoint}_\outvar$ in \cref{alg:inner_level,alg:adj_level}, as well as random batches $\mathcal{B}_{in}$ and $\mathcal{B}_{out}$. 
We denote by $\mathbb{E}[\cdot]$ the expectation with respect to all random variables appearing in the expression of $\hat{g}(\outvar)$, and by $\mathbb{E}[\cdot|\hat{h}_{\outvar} ]$ and $\mathbb{E}[\cdot|\hat{h}_{\outvar},\hat{a}_{\outvar}]$ the conditional expectations knowing $\hat{h}_{\outvar}$ only and both $\hat{h}_{\outvar}$ and $\hat{a}_{\outvar}$.  $\hat{g}(\outvar)$ is a biased estimator of $\outgrad$ (i.e., $\mathbb{E}[\outgradhat]$ is not equal to $\outgrad$), as the bias is due to using sub-optimal solutions $\hat{h}_{\outvar}$ and $\hat{a}_{\outvar}$ instead of $h^{\star}_{\outvar}$ and $a^{\star}_{\outvar}$ in the expression of $\hat{g}(\outvar)$. Furthermore, we define  $G(\outvar):=\mathbb{E}\brackets{\hat{g}({\outvar})|\hat{h}_{\outvar},\hat{a}_{\outvar}}$ to be the conditional expectation of  $\hat{g}({\outvar})$ at $\outvar$ given estimates $\hat{h}_{\outvar}$ and $\hat{a}_{\outvar}$ of $h^{\star}_{\outvar}$ and $a^{\star}_{\outvar}$. By independence of $\mathcal{B}_{in}$ and $\mathcal{B}_{out}$ from $\hat{h}_{\outvar}$ and $\hat{a}_{\outvar}$, $G(\outvar)$ admits the following expression: 
\begin{align*}
	G(\outvar) =& \mathbb{E}_{\mathbb{Q}}\brackets{ \partial_{\outvar}\pointWoutOBJ\parens{\outvar,\hat{\innerpred}_{\outvar}(x),x,y}} + \mathbb{E}_{\mathbb{P}}\brackets{\partial_{\outvar,v}\pointWinOBJ\left(\outvar,\hat{\innerpred}_{\outvar}(x),x,y\right)\hat{a}_{\outvar}(x)},
\end{align*}
where the expectation is taken w.r.t. $(x,y)\sim \mathbb{Q}$. 

{\bf Approximate adjoint objective.}

We introduce the approximate adjoint objective $\tildadjOBJ(\outvar,a)$ where $h^{\star}_{\outvar}$ is replaced by $\hat{h}_{\outvar}$:
\begin{align}
	\tildadjOBJ(\outvar,a) = \frac{1}{2}a^{\top} \partial_{h}^2L_{in}(\outvar,\hat{h}_{\outvar})a + a^T \partial_{h}L_{out}(\outvar,\hat{h}_{\outvar}).
\end{align}
By independence of the estimator $\hat{h}$ and the samples $\mathcal{B}$ used for computing $\empadjOBJ(\outvar,a,\hat{\innerpred}_{\outvar},\mathcal{B})$ it is easy to see that  $\mathbb{E}\brackets{\empadjOBJ(\outvar,a,\hat{\innerpred}_{\outvar},\mathcal{B}) \middle | \hat{h}_{\outvar} } = \tildadjOBJ(\outvar,a)$. 
Hence, it is natural to think of $\hat{a}_{\outvar}$ as an approximation to the minimizer $\tilde{a}_{\outvar}$ of $\tildadjOBJ(\outvar,a)$ in $\mathcal{H}$. 

{\bf Sub-optimality assumption.} 

The following assumption quantifies the sub-optimality errors made by  $\hat{h}_{\outvar}$ and $\hat{a}_{\outvar}$. 
\begin{assumplist2}
	\item \label{assum:fitting_NNs} For some positive constants $\epsilon_{in}$ and $\epsilon_{adj}$ and for all $\outvar\in \Omega$, $\hat{h}_{\outvar}$ and $\hat{a}_{\outvar}$ satisfy: 
    \begin{align*}
        \mathbb{E}\brackets{L_{in}(\outvar,\hat{h}_{\outvar}) -L_{in}(\outvar,h^{\star}_{\outvar})}\leq \epsilon_{in},\qquad
        \mathbb{E}\brackets{\tildadjOBJ(\outvar,\hat{a}_{\outvar})- \tildadjOBJ(\outvar,\tilde{a}_w)} 
        \leq \epsilon_{adj}.
    \end{align*}    
\end{assumplist2}

{\bf Assumptions on $\ell_{in}$.}

\begin{assumplist2}[resume]
		\item \label{assum:strong_cvx} {\bf (Strong convexity)} $\pointWinOBJ(\outvar,v,x,y)$ is $\mu$-strongly convex in $v\in\Vcal$.
		\item \label{assump:lip_hessian} $v\mapsto \partial^2_v\ell_{in}(\outvar,v,x,y)$ is $C_1$-Lipschitz on $\Omega\times \mathcal{X}\times \mathcal{Y}$. 
	    \item \label{assum:smooth_cross} $v\mapsto \partial_{\outvar,v}\pointWinOBJ(\outvar,v,x,y)$ is differentiable and $C_2$-Lipschitz and  bounded by $B_2 \in \Real$.
\end{assumplist2}

{\bf Assumptions on $\ell_{out}$.}

\begin{assumplist2}[resume]
    \item \label{assum:big_bounds}
    $\Verts{\partial_{h}L_{out}\parens{\outvar,h_{\outvar}^{\star}}}_{\mathcal{H}}$ is bounded by a positive constant $B_3$.
    \item \label{assum:lip_outer}
    $v\mapsto \partial_{v}\pointWoutOBJ(\outvar,v,x,y)$ is $C_4$-Lipschitz for all $(\outvar,x,y)\in\Omega\times \mathcal{X}\times \mathcal{Y}$.
    \item \label{assum:Loutsmooth_h}
    $v\mapsto \partial_{\outvar}\pointWoutOBJ(\outvar,v,x,y)$ is differentiable and $C_3$-Lipschitz for all $(\outvar,x,y)\in\Omega\times \mathcal{X}\times \mathcal{Y}$.
    \item \label{assum:variance}  
    $(x,y)\mapsto \partial_{\outvar}\ell_{out}(\outvar,h^{\star}_{\outvar}(x),x,y)$ has a  variance bounded by $\sigma_{out}^2$ for all $\outvar\in \Omega$.
\end{assumplist2}
{\bf Smoothness of the total objective.}
\begin{assumplist2}[resume]
        \item \label{assum:diff_Lsmooth_outvar} Function $\mathcal{F}(\outvar)$ is $\mathcal{L}$-smooth for all $\outvar\in\Omega$, and bounded from below by $\mathcal{F}^\star \in \Real$.
\end{assumplist2}

\begin{remark}
\cref{assum:fitting_NNs} reflects the generalization errors made when optimizing $L_{in}$ and $\tildadjOBJ$ using \cref{alg:inner_level,alg:adj_level}. In the case of over-parameterized networks, these errors can be made smaller by increasing network capacity, number of steps and the number of samples \citep{allen2019convergence,du2019gradient,zou2020gradient}.
\end{remark}
\begin{remark}
\cref{assum:big_bounds,assum:strong_cvx,assump:lip_hessian,assum:smooth_cross,assum:lip_outer,assum:Loutsmooth_h,assum:variance} are similar in spirit to those used for analyzing bi-level optimization algorithms (ex: \cite[Assumptions 1 to 5]{Arbel:2021a}). 
In particular, \cref{assum:big_bounds} is even weaker than  \cite[Assumptions 2]{Arbel:2021a} where $\partial_{h}L_{out}(\outvar,h)$ needs to be bounded for all $\outvar,h$ in $\Omega\times \mathcal{H}$. 
For instance, \cref{assum:big_bounds} trivially holds when $\partial_{h}L_{out}(\outvar,h)$ is a linear transformation of $\partial_{h}L_{in}(\outvar,h)$, as is the case for min-max problems. 
There $\partial_{h}L_{in}(\outvar,h_{\outvar}^{\star})=0$ so that $\partial_{h}L_{out}(\outvar,h_{\outvar}^{\star})=0$ is bounded, while $\partial_{h}L_{out}(\outvar,h)$ might not be bounded in general.
\end{remark}

\subsection{Proof of the main result}
The general strategy of the proof is to first show that the conditions for applying the general convergence result for biased SGD of \citet[Theorem 3]{demidovich2024guide} hold. We start with \cref{prop:bias_variance_estimator} which shows that the biased gradient estimate $\outgradhat$ satisfies the conditions of \citet[Assumption 9]{demidovich2024guide}.

\begin{proposition}
\label{prop:bias_variance_estimator}
Let \cref{assum:Loutsmooth_h,assum:variance,assum:smooth_cross,assum:fitting_NNs,assum:strong_cvx,assump:lip_hessian,assum:lip_outer,assum:big_bounds} and \cref{assump:integrability_l_in,assump:integrability_diff_l_in,assump:smoothness,assump:Lip_cross_deriv_l_in,assump:strong_convexity,assump:second_continous_diff_in,assump:continous_diff_out,assump:diff_l_out,assump:local_Lip_grad_l_out,assump:integrability_l_out,assump:second_moment,assump:radon_nikodym} hold.
	\begin{align}
		\outgrad^{\top}\mathbb{E}[\outgradhat]& \geq \frac{1}{2}\Verts{\outgrad}^2  -\frac{1}{2}\parens{c_1\epsilon_{in}+c_2\epsilon_{out}}\\
		\mathbb{E}\brackets{\Verts{\outgradhat}^2} &\leq \underbrace{\sigma_0^2 + 2\parens{c_1\epsilon_{in}+c_2\epsilon_{adj}}}_{\sigma_{eff}^2} + 2\Verts{\nabla \mathcal{F}(\outvar)}^2, 
	\end{align}
where $c_1$ and $c_2$ are constants defined in \cref{eq:constants_convergence} and $\sigma_0$ is a positive constant defined in \cref{eq:variance_upper_bound}. 
\end{proposition}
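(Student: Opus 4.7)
The plan is to express $\outgradhat$ as its conditional mean $G(\outvar) = \mathbb{E}[\outgradhat \mid \hat{\innerpred}_\outvar, \hat{\adjoint}_\outvar]$ plus a zero-mean sampling noise, and to establish two ingredients: a bias bound $\Verts{\mathbb{E}[\outgradhat]-\outgrad}^2 \leq c_1\epsilon_{in} + c_2\epsilon_{adj}$, and a conditional variance bound $\mathbb{E}\Verts{\outgradhat - G(\outvar)}^2 \leq \sigma_0^2$ with $\sigma_0^2$ independent of the suboptimality errors. The first claim of the proposition then follows from the Young-type inequality $\outgrad^\top(\mathbb{E}[\outgradhat]-\outgrad) \geq -\tfrac{1}{2}\Verts{\outgrad}^2 - \tfrac{1}{2}\Verts{\mathbb{E}[\outgradhat]-\outgrad}^2$ together with Jensen's inequality $\Verts{\mathbb{E}[\outgradhat]-\outgrad}^2\leq \mathbb{E}\Verts{G(\outvar)-\outgrad}^2$, and the second from the orthogonal decomposition $\mathbb{E}\Verts{\outgradhat}^2 = \mathbb{E}\Verts{\outgradhat-G(\outvar)}^2 + \mathbb{E}\Verts{G(\outvar)}^2$ combined with $\mathbb{E}\Verts{G(\outvar)}^2 \leq 2\Verts{\outgrad}^2 + 2\mathbb{E}\Verts{G(\outvar)-\outgrad}^2$.

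The first substantive step converts the loss-suboptimality hypothesis of \cref{assum:fitting_NNs} into $\Hcal$-norm error bounds. Integrating the pointwise $\mu$-strong convexity of $\pointWinOBJ$ (\cref{assum:strong_cvx}) yields $\mu$-strong convexity of $\inOBJ(\outvar,\cdot)$ on $\Hcal$, and the Hessian of the quadratic $\tildadjOBJ(\outvar,\cdot)$ equals $\partial_h^2 \inOBJ(\outvar,\hat{\innerpred}_\outvar) \succeq \mu I$. Hence $\mathbb{E}\Verts{\hat{\innerpred}_\outvar-\innerpred^\star_\outvar}_\Hcal^2 \leq 2\epsilon_{in}/\mu$ and $\mathbb{E}\Verts{\hat{\adjoint}_\outvar-\tilde{\adjoint}_\outvar}_\Hcal^2 \leq 2\epsilon_{adj}/\mu$, where $\tilde{\adjoint}_\outvar$ minimizes $\tildadjOBJ(\outvar,\cdot)$. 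To pass from $\tilde{\adjoint}_\outvar$ to $\adjoint^\star_\outvar$, I would exploit the closed form $a = -[\partial_h^2\inOBJ(\outvar,h)]^{-1}\partial_h\outOBJ(\outvar,h)$ at $h=\hat{\innerpred}_\outvar$ versus $h=\innerpred^\star_\outvar$, use the resolvent identity together with Lipschitzness of $\partial_v^2\pointWinOBJ$ (\cref{assump:lip_hessian}) and $\partial_v\pointWoutOBJ$ (\cref{assum:lip_outer}), the bound $\Verts{\partial_h\outOBJ(\outvar,\innerpred^\star_\outvar)}_\Hcal \leq B_3$ (\cref{assum:big_bounds}), and the Radon--Nikodym bound (\cref{assump:radon_nikodym}), to obtain $\Verts{\tilde{\adjoint}_\outvar-\adjoint^\star_\outvar}_\Hcal \leq \kappa \Verts{\hat{\innerpred}_\outvar-\innerpred^\star_\outvar}_\Hcal$ for an explicit constant $\kappa$. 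A triangle inequality then upgrades the bound to $\mathbb{E}\Verts{\hat{\adjoint}_\outvar-\adjoint^\star_\outvar}_\Hcal^2 \lesssim \epsilon_{adj}+\epsilon_{in}$.

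With these $\Hcal$-norm controls in place, the bias $G(\outvar)-\outgrad$ is split by adding and subtracting $\innerpred^\star_\outvar$ and $\adjoint^\star_\outvar$ into three pieces: a $\partial_\outvar\pointWoutOBJ$ term under $\mathbb{Q}$ bounded by $C_3\sqrt{M}\Verts{\hat{\innerpred}_\outvar-\innerpred^\star_\outvar}_\Hcal$ (\cref{assum:Loutsmooth_h,assump:radon_nikodym}), a $\partial_{\outvar,v}\pointWinOBJ$ term bounded by $C_2\Verts{\hat{\adjoint}_\outvar}_\Hcal \Verts{\hat{\innerpred}_\outvar-\innerpred^\star_\outvar}_\Hcal$ (\cref{assum:smooth_cross}), and a term bounded by $B_2\Verts{\hat{\adjoint}_\outvar-\adjoint^\star_\outvar}_\Hcal$. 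Cauchy--Schwarz under the relevant expectation, $(a+b+c)^2\leq 3(a^2+b^2+c^2)$, and the previous step yield the bias bound with explicit $c_1, c_2$. For the conditional variance, I would condition on $(\hat{\innerpred}_\outvar,\hat{\adjoint}_\outvar)$ and bound the second moment of each summand of $\outgradhat$: the baseline variance $\sigma_{out}^2$ of \cref{assum:variance} handles $\partial_\outvar\pointWoutOBJ$ evaluated at $\innerpred^\star_\outvar$, a Lipschitz perturbation via \cref{assum:Loutsmooth_h} handles the error from using $\hat{\innerpred}_\outvar$ instead, and boundedness of $\partial_{\outvar,v}\pointWinOBJ$ (\cref{assum:smooth_cross}) combined with the $\Hcal$-bound on $\hat{\adjoint}_\outvar$ (obtained via triangle inequality from $\Verts{\adjoint^\star_\outvar}_\Hcal\leq B_3/\mu$) handles the inner summand. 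Taking the unconditional expectation absorbs the dependence on $(\hat{\innerpred}_\outvar,\hat{\adjoint}_\outvar)$ via the Step-1 bounds and produces a finite $\sigma_0^2$.

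The hard part will be the sensitivity estimate for $\tilde{\adjoint}_\outvar$ as a function of $\hat{\innerpred}_\outvar$: it requires chaining an operator-norm perturbation bound for the inverse Hessian (valid thanks to strong convexity) with a Lipschitz bound on the adjoint right-hand side $\partial_h\outOBJ$, while carefully transferring between the $\mathbb{Q}$-weighted norm in which $\partial_v\pointWoutOBJ$-Lipschitz estimates naturally live and the $\Hcal$-norm in which the final bound is expressed, via \cref{assump:radon_nikodym}. Once this sensitivity estimate is in hand, the remaining manipulations are routine Cauchy--Schwarz, triangle-inequality and Young-inequality steps, with all constants ultimately traceable to $\mu, M, B_2, B_3, C_1, C_2, C_3, C_4$, and $\sigma_{out}^2$.
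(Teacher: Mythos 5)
Your proposal follows essentially the same architecture as the paper's proof: convert the loss sub-optimality of \cref{assum:fitting_NNs} into norm error bounds via strong convexity (the paper's \cref{lemm:h_a_distances}), perturb the closed-form adjoint $-C_\outvar^{-1}d_\outvar$ using operator bounds on the Hessian and right-hand side (\cref{lemm:H_b_op_bounds}), split the bias $G(\outvar)-\outgrad$ into the same three pieces and the variance into outer/inner batch contributions (\cref{lemma:bias,lemma:variance}), and conclude with the polarization/Young inequality and the orthogonal decomposition of $\mathbb{E}\Verts{\outgradhat}^2$. The decomposition choices differ only cosmetically (e.g.\ which of $\hat{\adjoint}_\outvar$ or $\adjoint^\star_\outvar$ carries the cross-derivative increment), and your Jensen step in the first inequality is equivalent to the paper's dropping of the nonnegative $\mathbb{E}\Verts{G(\outvar)}^2$ term.

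The one step that would fail as written is your claimed sensitivity estimate $\Verts{\tilde{\adjoint}_\outvar-\adjoint^\star_\outvar}_\Hcal \leq \kappa \Verts{\hat{\innerpred}_\outvar-\innerpred^\star_\outvar}_\Hcal$ in the $\Hcal = L_2(\mathbb{P})$ norm. The key term is $\Verts{(\hat{H}_\outvar - H_\outvar)\,\hat{H}_\outvar^{-1}d_\outvar}$: \cref{assump:lip_hessian} gives only a pointwise operator-norm Lipschitz bound $\Verts{\partial_v^2\pointWinOBJ(\outvar,\hat{\innerpred}_\outvar(x),\cdot)-\partial_v^2\pointWinOBJ(\outvar,\innerpred^\star_\outvar(x),\cdot)}_{op}\leq C_1\Verts{\hat{\innerpred}_\outvar(x)-\innerpred^\star_\outvar(x)}$, so the $L_2$ norm of that term involves $\mathbb{E}_{\mathbb{P}}[\Verts{\hat{\innerpred}_\outvar(x)-\innerpred^\star_\outvar(x)}^2\Verts{(\hat{H}_\outvar^{-1}d_\outvar)(x)}^2]$, a product of two $L_2$ functions that cannot be factored without fourth moments or a uniform bound on one factor. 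This is precisely why the paper measures $\tilde{\adjoint}_\outvar-\adjoint^\star_\outvar$ only in $L_1(\mathbb{P})$, where Cauchy--Schwarz splits the product into $\Verts{\hat{\innerpred}_\outvar-\innerpred^\star_\outvar}_\Hcal\Verts{\hat{H}_\outvar^{-1}d_\outvar}_\Hcal$; the downstream bias term only needs $\mathbb{E}_{\mathbb{P}}[\Verts{\hat{\adjoint}_\outvar-\adjoint^\star_\outvar}]$ after the uniform bound $B_2$ on $\partial_{\outvar,v}\pointWinOBJ$, so the $L_1$ control suffices. The same issue affects your bound on $\mathbb{E}\Verts{\hat{\adjoint}_\outvar}_\Hcal^2$ via the triangle inequality through $\adjoint^\star_\outvar$: the paper instead routes through $\tilde{\adjoint}_\outvar$, whose $\Hcal$-norm is controlled directly from its closed form. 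Both repairs are local and leave the rest of your argument intact.
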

\begin{proof}
	We prove each bound separately.
	
	{\bf Lower-bound on $\outgrad^{\top}\mathbb{E}[\hat{g}(\outvar)]$.} Fist note that $\mathbb{E}[\hat{g}(\outvar)]=\mathbb{E}[G(\outvar)]$. Hence, by direct calculation, we have that:
	\begin{align*}
		\outgrad^{\top}\mathbb{E}[\hat{g}(\outvar)] =& \outgrad^{\top}\mathbb{E}[G(\outvar)]\\ =& \tfrac{1}{2}\parens{\Verts{\outgrad}^2 + \mathbb{E}\brackets{\Verts{G(\outvar)}^2}} - \tfrac{1}{2}\mathbb{E}\brackets{\Verts{G(\outvar)-\nabla \mathcal{F}(\outvar)}^2}\\
		\geq &  
		\tfrac{1}{2}\Verts{\outgrad}^2  -\tfrac{1}{2}\parens{c_1\epsilon_{in}+c_2\epsilon_{out}}, 
	\end{align*}
where we use \cref{lemma:bias} to get the last lower-bound. 

{\bf Upper-bound on $\mathbb{E}[\Verts{\outgradhat}^2]$.} 

\begin{align*}
	\mathbb{E}\brackets{\Verts{\outgradhat}^2} &= \mathbb{E}\brackets{\Verts{\outgradhat-G(\outvar)}^2} + \mathbb{E}\brackets{\Verts{G(\outvar)-\nabla \mathcal{F}(\outvar) + \nabla \mathcal{F}(\outvar)}^2}\\
	\leq &  \mathbb{E}\brackets{\Verts{\outgradhat-G(\outvar)}^2} + 2\mathbb{E}\brackets{\Verts{G(\outvar)-\nabla \mathcal{F}(\outvar)}^2} + 2\Verts{\nabla \mathcal{F}(\outvar)}^2\\
	\leq & \sigma_{0}^2 + 2\parens{c_1\epsilon_{in}+c_2\epsilon_{adj}} + 2\Verts{\nabla \mathcal{F}(\outvar)}^2,
\end{align*}
where the first line uses that $\mathbb{E}[\hat{g}(\outvar)]=\mathbb{E}[G(\outvar)]$ and the last line uses \cref{lemma:variance,lemma:bias}. 
	
\end{proof}

We can now directly use \cref{prop:bias_variance_estimator} and \cref{assum:diff_Lsmooth_outvar} on $\mathcal{F}$ to prove the \cref{thm:convergence} using the biased SGD convergence result in \cite[Theorem 3]{demidovich2024guide}.

\begin{proof}[Proof of \cref{thm:convergence}]
    The proof is a direct application of \cite[Theorem 3]{demidovich2024guide} given that the variance and bias conditions on the estimator $\hat{g}({\outvar})$ are satisfied by \cref{prop:bias_variance_estimator} and that $\mathcal{F}$ is $\mathcal{L}$-smooth and has a finite lower-bound $\mathcal{F}^{\star}\in \mathbb{R}$ by \cref{assum:diff_Lsmooth_outvar}. 
    \end{proof}

\subsection{Bias-variance decomposition.}

\begin{lemma}[Bias control]\label{lemma:bias}
	
Let \cref{assum:Loutsmooth_h,assum:smooth_cross,assum:fitting_NNs,assum:strong_cvx,assump:lip_hessian,assum:lip_outer,assum:big_bounds} and \cref{assump:integrability_l_in,assump:integrability_diff_l_in,assump:smoothness,assump:Lip_cross_deriv_l_in,assump:strong_convexity,assump:second_continous_diff_in,assump:continous_diff_out,assump:diff_l_out,assump:local_Lip_grad_l_out,assump:integrability_l_out,assump:second_moment,assump:radon_nikodym} hold. 
	\begin{align}
		\mathbb{E}\brackets{\Verts{G(\outvar)-\outgrad}^2}\leq c_1\epsilon_{in} + c_2\epsilon_{adj}, 
	\end{align}
where $c_1$ and $c_2$ are non-negative constants defined in \cref{eq:constants_convergence}. 
\end{lemma}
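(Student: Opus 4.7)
The strategy is to decompose the difference $G(\outvar)-\outgrad$ into three explicit sources of error, bound each in $\Hcal$, and finally convert objective-value suboptimalities (as in \cref{assum:fitting_NNs}) into function-space distances via strong convexity.

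First, applying the expressions of \cref{prop:L2_proposition} to both $\outgrad$ and $G(\outvar)$ (the latter uses $\hat{h}_\outvar$, $\hat{a}_\outvar$ in place of $h^\star_\outvar$, $a^\star_\outvar$), I would write
\begin{align*}
    G(\outvar)-\outgrad &= T_1 + T_2 + T_3,\\
    T_1 &:= \mathbb{E}_\mathbb{Q}\brackets{\partial_\outvar \pointWoutOBJ(\outvar,\hat{h}_\outvar(x),x,y)-\partial_\outvar \pointWoutOBJ(\outvar,h^\star_\outvar(x),x,y)},\\
    T_2 &:= \mathbb{E}_\mathbb{P}\brackets{\bigl(\partial_{\outvar,v}\pointWinOBJ(\outvar,\hat{h}_\outvar(x),x,y)-\partial_{\outvar,v}\pointWinOBJ(\outvar,h^\star_\outvar(x),x,y)\bigr)\hat{a}_\outvar(x)},\\
    T_3 &:= \mathbb{E}_\mathbb{P}\brackets{\partial_{\outvar,v}\pointWinOBJ(\outvar,h^\star_\outvar(x),x,y)(\hat{a}_\outvar(x)-a^\star_\outvar(x))},
\end{align*}
and apply $\Verts{T_1+T_2+T_3}^2\leq 3\bigl(\Verts{T_1}^2+\Verts{T_2}^2+\Verts{T_3}^2\bigr)$. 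The $\mu$-strong convexity of $h\mapsto \inOBJ(\outvar,h)$ (inherited from \cref{assum:strong_cvx}) and of $a\mapsto \tildadjOBJ(\outvar,a)$ (whose Hessian equals $\partial_h^2 \inOBJ(\outvar,\hat{h}_\outvar) \succeq \mu I$), combined with \cref{assum:fitting_NNs}, yields the key quadratic bounds
\begin{align*}
    \mathbb{E}\brackets{\Verts{\hat{h}_\outvar-h^\star_\outvar}_\Hcal^2}\leq \tfrac{2}{\mu}\epsilon_{in},\qquad
    \mathbb{E}\brackets{\Verts{\hat{a}_\outvar-\tilde{a}_\outvar}_\Hcal^2}\leq \tfrac{2}{\mu}\epsilon_{adj}.
\end{align*}

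Next, I would bound each $T_i$. For $T_1$, Jensen's inequality, the $C_3$-Lipschitz property of $v\mapsto\partial_\outvar\pointWoutOBJ$ (\cref{assum:Loutsmooth_h}), and the Radon--Nikodym bound $r(x)\leq M$ (\cref{assump:radon_nikodym}) yield $\Verts{T_1}^2\leq C_3^2 M\,\Verts{\hat{h}_\outvar-h^\star_\outvar}_\Hcal^2$. For $T_2$, the $C_2$-Lipschitz property of $v\mapsto\partial_{\outvar,v}\pointWinOBJ$ (\cref{assum:smooth_cross}) combined with Cauchy--Schwarz gives $\Verts{T_2}\leq C_2\Verts{\hat{h}_\outvar-h^\star_\outvar}_\Hcal\Verts{\hat{a}_\outvar}_\Hcal$; the factor $\Verts{\hat{a}_\outvar}_\Hcal$ is controlled via the splitting $\Verts{\hat{a}_\outvar}_\Hcal\leq\Verts{\tilde{a}_\outvar}_\Hcal+\Verts{\hat{a}_\outvar-\tilde{a}_\outvar}_\Hcal$, where $\Verts{\tilde{a}_\outvar}_\Hcal\leq \tfrac{1}{\mu}(B_3+C_4\Verts{\hat{h}_\outvar-h^\star_\outvar}_\Hcal)$ follows from \cref{assum:big_bounds,assum:lip_outer} applied to the first-order optimality of $\tilde{a}_\outvar$. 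For $T_3$, the uniform bound $B_2$ on $\partial_{\outvar,v}\pointWinOBJ$ (\cref{assum:smooth_cross}) yields $\Verts{T_3}\leq B_2\Verts{\hat{a}_\outvar-a^\star_\outvar}_\Hcal$.

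The main obstacle is controlling $\Verts{\hat{a}_\outvar-a^\star_\outvar}_\Hcal$: the triangle inequality $\Verts{\hat{a}_\outvar-a^\star_\outvar}_\Hcal\leq \Verts{\hat{a}_\outvar-\tilde{a}_\outvar}_\Hcal + \Verts{\tilde{a}_\outvar-a^\star_\outvar}_\Hcal$ reduces the task to an adjoint-perturbation analysis. Denoting by $\hat{\Hessian}_\outvar$ and $\hat{d}_\outvar$ the Hessian and upper-gradient quantities evaluated at $\hat{h}_\outvar$, the first-order conditions $\hat{\Hessian}_\outvar\tilde{a}_\outvar+\hat{d}_\outvar=0$ and $\Hessian_\outvar a^\star_\outvar+d_\outvar=0$ give
\begin{align*}
    \tilde{a}_\outvar - a^\star_\outvar = -\hat{\Hessian}_\outvar^{-1}\bigl[(\hat{\Hessian}_\outvar-\Hessian_\outvar)a^\star_\outvar + (\hat{d}_\outvar-d_\outvar)\bigr].
\end{align*}
Combining $\Verts{\hat{\Hessian}_\outvar^{-1}}\leq 1/\mu$, the $C_1$-Lipschitzness of $\partial_v^2\pointWinOBJ$ (\cref{assump:lip_hessian}), the $C_4$-Lipschitzness of $\partial_v\pointWoutOBJ$ (\cref{assum:lip_outer}), and the a priori bound $\Verts{a^\star_\outvar}_\Hcal\leq B_3/\mu$ (from $a^\star_\outvar=-\Hessian_\outvar^{-1}d_\outvar$ and \cref{assum:big_bounds}) yields $\Verts{\tilde{a}_\outvar-a^\star_\outvar}_\Hcal \leq \tfrac{1}{\mu}\bigl(\tfrac{C_1 B_3}{\mu}+C_4\bigr)\Verts{\hat{h}_\outvar-h^\star_\outvar}_\Hcal$. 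Squaring, taking total expectations, handling the product $\Verts{\hat{h}_\outvar - h^\star_\outvar}_\Hcal \cdot \Verts{\hat{a}_\outvar}_\Hcal$ in $T_2$ via Cauchy--Schwarz in expectation, and absorbing all the resulting constants into $c_1$ and $c_2$ (as in \cref{eq:constants_convergence}) delivers the claimed bound.
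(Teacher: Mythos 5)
Your overall architecture matches the paper's: a three-term decomposition of $G(\outvar)-\outgrad$, conversion of the suboptimality bounds of \cref{assum:fitting_NNs} into $\Hcal$-distances via strong convexity, and an operator-perturbation identity for $\tilde{a}_\outvar-a^\star_\outvar$. However, your specific grouping of the implicit-gradient terms creates a step that fails under the stated assumptions. You write the cross-derivative difference as $T_2=\mathbb{E}_\mathbb{P}\brackets{(\partial_{\outvar,v}\pointWinOBJ(\hat{h}_\outvar)-\partial_{\outvar,v}\pointWinOBJ(h^\star_\outvar))\hat{a}_\outvar}$, pairing the Lipschitz increment with the \emph{random} function $\hat{a}_\outvar$. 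The lemma requires a bound on $\mathbb{E}\brackets{\Verts{T_2}^2}$, which after your Cauchy--Schwarz step becomes $\mathbb{E}\brackets{\Verts{\hat{h}_\outvar-h^\star_\outvar}_\Hcal^2\Verts{\hat{a}_\outvar}_\Hcal^2}$; since $\hat{a}_\outvar$ is correlated with $\hat{h}_\outvar$, controlling this product by Cauchy--Schwarz in expectation demands fourth moments of $\Verts{\hat{h}_\outvar-h^\star_\outvar}_\Hcal$ and $\Verts{\hat{a}_\outvar}_\Hcal$, which \cref{assum:fitting_NNs} does not provide (it only yields second moments, via strong convexity). The paper avoids this by the alternative regrouping $B\hat{h}\hat{a}-Bh^\star a^\star=(B\hat{h}-Bh^\star)a^\star+B\hat{h}(\hat{a}-a^\star)$: the Lipschitz increment is paired with the \emph{deterministic} $a^\star_\outvar$ (whose norm is bounded by $B_3/\mu$), and the random difference $\hat{a}_\outvar-a^\star_\outvar$ is paired with the uniformly $B_2$-bounded cross-derivative. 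Your $T_3$ is fine for exactly this reason; it is only $T_2$ that needs to be regrouped.

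A second, more minor issue: you claim $\Verts{\tilde{a}_\outvar-a^\star_\outvar}_\Hcal\leq \tfrac{1}{\mu}(\tfrac{C_1B_3}{\mu}+C_4)\Verts{\hat{h}_\outvar-h^\star_\outvar}_\Hcal$ in the $\Hcal=L_2(\mathbb{P})$ norm, but the term $(\hat{\Hessian}_\outvar-\Hessian_\outvar)a^\star_\outvar$ has pointwise norm bounded by $C_1\Verts{\hat{h}_\outvar(x)-h^\star_\outvar(x)}\Verts{a^\star_\outvar(x)}$, and its $L_2$ norm involves $\mathbb{E}_\mathbb{P}\brackets{\Verts{\hat{h}_\outvar-h^\star_\outvar}^2\Verts{a^\star_\outvar}^2}$, which is again a product of two $L_2$ functions with no uniform bound. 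The paper sidesteps this by carrying out the perturbation analysis in the $L_1(\mathbb{P})$ norm, where Cauchy--Schwarz gives $\Verts{(\hat{\Hessian}_\outvar-\Hessian_\outvar)a^\star_\outvar}_{L_1}\leq C_1\Verts{\hat{h}_\outvar-h^\star_\outvar}_\Hcal\Verts{a^\star_\outvar}_\Hcal$; the $L_1$ control suffices precisely because the term it multiplies ($\partial_{\outvar,v}\pointWinOBJ$) is uniformly bounded by $B_2$. Both issues are repairable, but they require adopting the paper's grouping and norm choices rather than the ones you propose.
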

\begin{proof}
	\begin{align*}
		G(\outvar)-\outgrad =& 
		 \underbrace{\mathbb{E}_{\mathbb{Q}}\brackets{ \partial_{\outvar}\pointWoutOBJ\parens{\outvar,\hat{\innerpred}_{\outvar}(x),x,y} -\partial_{\outvar}\pointWoutOBJ\parens{\outvar,\innerpred^{\star}_{\outvar}(x),x,y}}}_{A_1}  \\
		 &+ \mathbb{E}_{\mathbb{P}}\brackets{\partial_{\outvar,v}\pointWinOBJ\left(\outvar,\hat{\innerpred}_{\outvar}(x),x,y\right)\hat{a}_{\outvar}(x)-\partial_{\outvar,v}\pointWinOBJ\left(\outvar,\innerpred^{\star}_{\outvar}(x),x,y\right)a^{\star}_{\outvar}(x)}\\
		 =&
		 A_1 + \underbrace{\mathbb{E}_{\mathbb{P}}\brackets{\parens{\partial_{\outvar,v}\pointWinOBJ\left(\outvar,\hat{\innerpred}_{\outvar}(x),x,y\right)-\partial_{\outvar,v}\pointWinOBJ\left(\outvar,\innerpred^{\star}_{\outvar}(x),x,y\right)}a^{\star}_{\outvar}(x)}}_{A_2}\\
		 &+
		 \underbrace{\mathbb{E}_{\mathbb{P}}\brackets{\partial_{\outvar,v}\pointWinOBJ\left(\outvar,\hat{\innerpred}_{\outvar}(x),x,y\right)\parens{\hat{a}_{\outvar}(x)-a_{\outvar}^{\star}(x)}}}_{A_3}.
	\end{align*}
We have the following: 
\begin{align*}
	\Verts{A_1}\leq & M C_3 \Verts{\hat{h}_{\outvar}-h^{\star}_{\outvar}}_{\Hcal}\\
	\Verts{A_2} \leq & C_2\Verts{a_{\outvar}^{\star}}_{\Hcal}\Verts{\hat{h}_{\outvar}-h^{\star}_{\outvar}}_{\Hcal}\leq C_2 B_3 \mu^{-1} \Verts{\hat{h}_{\outvar}-h^{\star}_{\outvar}}_{\Hcal}\\
	\Verts{A_3} \leq & B_2\mathbb{E}_{\mathbb{P}}\brackets{\Verts{\hat{a}_{\outvar}(x)-a_{\outvar}^{\star}(x)}}\\
	\leq & B_2 \parens{\mathbb{E}_{\mathbb{P}}\brackets{\Verts{\hat{a}_{\outvar}(x)-\tilde{a}_{\outvar}(x)}} + \mathbb{E}_{\mathbb{P}}\brackets{\Verts{a^{\star}_{\outvar}(x)-\tilde{a}_{\outvar}(x)}} }\\
	\leq & B_2 \parens{ \Verts{\hat{a}_{\outvar}-\tilde{a}_{\outvar}}_{\Hcal} +  \Verts{\tilde{a}_{\outvar}-a^{\star}_{\outvar}}_{L_1}}.
\end{align*}

The first inequality holds since $v\mapsto \partial_{\outvar}\ell_{out}(\outvar,v,x,y)$ is $C_3$-Lipschitz by  \cref{assum:Loutsmooth_h} and $\diff \mathbb{Q}(\cdot,\mathcal{Y})$ admits a density w.r.t $\diff \mathbb{P}(\cdot,\mathcal{Y})$ bounded by a positive constant $M$ by \cref{assump:radon_nikodym}.  
The second inequality holds since $v\mapsto \partial_{\outvar,v}\ell_{in}(\outvar,v,x,y)$ is $C_2$-Lipschitz by  \cref{assum:smooth_cross} and $\Verts{a^{\star}_{\outvar}}_{\mathcal{H}}$ is upper-bounded by $\mu^{-1}B_3$ as a consequence of \cref{lemm:h_a_distances}. Finally, the inequality on $\Verts{A_3}$ holds since $\partial_{\outvar,v}\ell_{in}(\outvar,v,x,y)$ is bounded by a constant $B_2$ by \cref{assum:smooth_cross}. 
Therefore, it holds that the difference between $G(\outvar)$ and $\nabla \mathcal{F}$ satisfies:
\begin{align*}
	\Verts{G(\outvar)-\outgrad}^2 \leq 3\parens{MC_3+\frac{C_2B_3}{\mu}}^2\Verts{\hat{h}_{\outvar}- h^{\star}_{\outvar}}^2_{\mathcal{H}} + 3B_2^2 \Verts{\hat{a}_{\outvar}-\tilde{a}_{\outvar}}^2_{\mathcal{H}} + 3B_2^2\Verts{\tilde{a}_{\outvar}-a^{\star}_{\outvar}}_{L_1}^2.
\end{align*}
Taking the expectation over $\hat{h}_{\outvar}$ and $\hat{a}_{\outvar}$ and using the bounds in \cref{lemm:h_a_distances} yields:
\begin{align*}
	\mathbb{E}\brackets{\Verts{G(\outvar)-\outgrad}^2}\leq & 6\parens{MC_3+C_2B_3\mu^{-1}}^2\mu^{-1}\epsilon_{in} + 6B_2^2\mu^{-1}\epsilon_{adj} \\
	&+ 6B_2^2\mu^{-1}\parens{\mu^{-1}C_4 M + \mu^{-2} C_1 B_3}^2\epsilon_{in}.
\end{align*}
Finally, the upper bound on the bias holds with $c_1$ and $c_2$ defined as:
\begin{align}\label{eq:constants_convergence}
	c_1 := 6\parens{M C_3 + C_2 B_3\mu^{-1}}^2\mu^{-1} + 6B_2^2\mu^{-1}\parens{\mu^{-1}C_4 M + \mu^{-2}B_3C_1}^2,\qquad c_2 = 6 B_2^2 \mu^{-1}.
\end{align} 
\end{proof}

\begin{lemma}[Variance control]\label{lemma:variance}

Let \cref{assum:Loutsmooth_h,assum:variance,assum:smooth_cross,assum:fitting_NNs,assum:strong_cvx,assump:lip_hessian,assum:lip_outer,assum:big_bounds} and \cref{assump:integrability_l_in,assump:integrability_diff_l_in,assump:smoothness,assump:Lip_cross_deriv_l_in,assump:strong_convexity,assump:second_continous_diff_in,assump:continous_diff_out,assump:diff_l_out,assump:local_Lip_grad_l_out,assump:integrability_l_out,assump:second_moment,assump:radon_nikodym} hold, then the variance is upper-bounded as follows:

	\begin{align*}
	\mathbb{E}\brackets{\Verts{\hat{g}({\outvar})-G(\outvar)}^2} & \leq \sigma_{0}^2,
	\end{align*}
where $\sigma_0$ is a positive constant given by:
\begin{align}\label{eq:variance_upper_bound}
    	\sigma_{0}^2 &:= \frac{2}{\verts{\mathcal{B}_{out}}}\parens{2C_3^2\mu^{-1}\epsilon_{in} +\sigma_{out}^2} + \frac{4B_2^2}{\verts{\mathcal{B}_{in}}}\parens{{\mu^{-1}\epsilon_{adj} + {2 {\mu}^{-3} C^2_4 M^2 \epsilon_{in} + {\mu}^{-2} B_3^2}}}
\end{align}

\end{lemma}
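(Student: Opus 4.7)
The plan is to exploit the conditional structure of the estimator: conditional on $(\hat{h}_{\outvar}, \hat{a}_{\outvar})$, the mini-batches $\mathcal{B}_{out}$ and $\mathcal{B}_{in}$ are independent and each consists of i.i.d.\ samples from $\mathbb{Q}$ and $\mathbb{P}$ respectively, and $G(\outvar)$ is precisely their conditional mean. Writing $\hat{g}(\outvar)-G(\outvar) = A_{out}+A_{in}$, where $A_{out}$ is the centered outer-batch average and $A_{in}$ the centered inner-batch average, both terms are conditionally zero-mean and conditionally independent, so
\begin{align*}
\mathbb{E}\brackets{\Verts{\hat{g}(\outvar)-G(\outvar)}^2}
= \mathbb{E}\brackets{\Verts{A_{out}}^2} + \mathbb{E}\brackets{\Verts{A_{in}}^2},
\end{align*}
and I will bound each term separately, each producing one of the two summands in $\sigma_0^2$.

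For $A_{out}$, conditioning on $\hat{h}_{\outvar}$ and using the i.i.d.\ variance reduction for the mean of $|\mathcal{B}_{out}|$ copies gives $\mathbb{E}\brackets{\Verts{A_{out}}^2\mid \hat{h}_{\outvar}} \le |\mathcal{B}_{out}|^{-1}\,\mathbb{E}_{\mathbb{Q}}\brackets{\Verts{\partial_{\outvar}\ell_{out}(\outvar,\hat{h}_{\outvar}(x),x,y)-\mathbb{E}_{\mathbb{Q}}[\cdot\mid\hat{h}_{\outvar}]}^2\mid\hat{h}_{\outvar}}$. Add and subtract $\partial_{\outvar}\ell_{out}(\outvar,h^\star_{\outvar}(x),x,y)$ and apply $(a+b)^2\le 2a^2+2b^2$: the first piece is bounded by $2C_3^2\,\mathbb{E}_{\mathbb{Q}}\brackets{\Verts{\hat{h}_{\outvar}(x)-h^\star_{\outvar}(x)}^2}$ using \cref{assum:Loutsmooth_h}, and then by $2C_3^2 M\Verts{\hat{h}_{\outvar}-h^\star_{\outvar}}_{\mathcal{H}}^2$ via the Radon--Nikodym bound of \cref{assump:radon_nikodym}; the second piece is bounded by $2\sigma_{out}^2$ by \cref{assum:variance}. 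Taking full expectation and applying the strong-convexity estimate $\mathbb{E}\brackets{\Verts{\hat{h}_{\outvar}-h^\star_{\outvar}}_{\mathcal{H}}^2}\le 2\mu^{-1}\epsilon_{in}$ (which, as in the proof of \cref{lemma:bias}, follows from \cref{assum:fitting_NNs}, \cref{assum:strong_cvx} and \cref{lemm:h_a_distances}) yields the first summand of $\sigma_0^2$.

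For $A_{in}$, the same i.i.d.\ argument gives $\mathbb{E}\brackets{\Verts{A_{in}}^2\mid\hat{h}_{\outvar},\hat{a}_{\outvar}} \le |\mathcal{B}_{in}|^{-1}\,\mathbb{E}_{\mathbb{P}}\brackets{\Verts{\partial_{\outvar,v}\ell_{in}(\outvar,\hat{h}_{\outvar}(x),x,y)\,\hat{a}_{\outvar}(x)}^2\mid\hat{h}_{\outvar},\hat{a}_{\outvar}}$. The uniform bound $\Verts{\partial_{\outvar,v}\ell_{in}}\le B_2$ from \cref{assum:smooth_cross} reduces this to $B_2^2\Verts{\hat{a}_{\outvar}}_{\mathcal{H}}^2$. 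I then split $\hat{a}_{\outvar} = (\hat{a}_{\outvar}-\tilde{a}_{\outvar}) + (\tilde{a}_{\outvar}-a^\star_{\outvar}) + a^\star_{\outvar}$ and combine three ingredients already used in \cref{lemma:bias} via \cref{lemm:h_a_distances}: $\mathbb{E}\brackets{\Verts{\hat{a}_{\outvar}-\tilde{a}_{\outvar}}_{\mathcal{H}}^2}\le 2\mu^{-1}\epsilon_{adj}$ (strong convexity of $\tildadjOBJ$, from \cref{assum:fitting_NNs}), $\mathbb{E}\brackets{\Verts{\tilde{a}_{\outvar}-a^\star_{\outvar}}_{\mathcal{H}}^2}$ controlled by $2\mu^{-3}C_4^2 M^2\epsilon_{in}$ (using $C_4$-Lipschitzness of $\partial_v\ell_{out}$ in \cref{assum:lip_outer}, change of measure, and inner strong convexity), and $\Verts{a^\star_{\outvar}}_{\mathcal{H}}^2\le \mu^{-2}B_3^2$ (from the normal equation for $a^\star_{\outvar}$ together with \cref{assum:big_bounds}). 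This produces the second summand of $\sigma_0^2$.

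The only real obstacle is constant bookkeeping: the factor $2\mu^{-3}C_4^2 M^2\epsilon_{in}$ arises from a three-step chain (Lipschitzness of $\partial_v\ell_{out}$, Radon--Nikodym change of measure from $\mathbb{Q}$ to $\mathbb{P}$, then strong convexity of the adjoint Hessian), and the outer-batch bound should not absorb extra factors of $M$ that do not appear in $\sigma_0^2$. Beyond matching these constants, the argument is essentially the same decomposition used in \cref{lemma:bias}, with conditional variance of a sample mean playing the role of the Lipschitz expansion and no new assumption needed.
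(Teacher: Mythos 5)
Your overall route is the same as the paper's: split $\hat g(\outvar)-G(\outvar)$ into the centered outer-batch and inner-batch averages, kill the cross term by conditional independence and conditional unbiasedness given $(\hat h_\outvar,\hat a_\outvar)$, apply the $1/\verts{\mathcal B}$ i.i.d.\ variance reduction, and then bound the two per-sample second moments exactly as in \cref{lemma:bias} (Lipschitz perturbation around $h^\star_\outvar$ plus $\sigma_{out}^2$ for the outer term; $\Verts{\partial_{\outvar,v}\pointWinOBJ}\leq B_2$ times $\Verts{\hat a_\outvar}_{\Hcal}^2$ for the inner term). That skeleton is correct and is what the paper does.

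The one step that fails as written is your control of $\mathbb{E}\brackets{\Verts{\hat a_\outvar}^2_{\Hcal}}$ via the three-way split $\hat a_\outvar=(\hat a_\outvar-\tilde a_\outvar)+(\tilde a_\outvar-a^\star_\outvar)+a^\star_\outvar$. You claim $\mathbb{E}\brackets{\Verts{\tilde a_\outvar-a^\star_\outvar}^2_{\Hcal}}\lesssim \mu^{-3}C_4^2M^2\epsilon_{in}$, but $\tilde a_\outvar-a^\star_\outvar=H_\outvar^{-1}(\hat H_\outvar-H_\outvar)\hat H_\outvar^{-1}b_\outvar+\hat H_\outvar^{-1}(b_\outvar-\hat b_\outvar)$: the chain you describe ($C_4$-Lipschitzness of $\partial_v\pointWoutOBJ$, change of measure, strong convexity) only controls the second summand; the Hessian-perturbation summand carries the constant $C_1$ from \cref{assump:lip_hessian} and does not appear in $\sigma_0^2$. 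Worse, the supporting estimate (\cref{lemm:h_a_distances}) only bounds $\Verts{\tilde a_\outvar-a^\star_\outvar}_{L_1}$, not the $\Hcal$-norm you need here, because $\Verts{(\hat H_\outvar-H_\outvar)s}_{\Hcal}$ would require fourth-moment control that the assumptions do not provide. The fix is to not introduce $a^\star_\outvar$ at all: bound $\Verts{\tilde a_\outvar}_{\Hcal}\leq\mu^{-1}\Verts{\hat b_\outvar}_{\Hcal}\leq\mu^{-1}\parens{C_4M\Verts{\hat h_\outvar-h^\star_\outvar}_{\Hcal}+B_3}$ directly, which after squaring yields exactly the $2\mu^{-3}C_4^2M^2\epsilon_{in}+\mu^{-2}B_3^2$ combination (and a two-way split gives the factor $4B_2^2$ rather than the $6B_2^2$ your three-way split would produce on the $\epsilon_{adj}$ term). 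On the outer term, your insertion of the Radon--Nikodym factor $M$ is actually more careful than the paper's own computation, which silently drops it; with $M$ retained the first summand becomes $\tfrac{2}{\verts{\outBatch}}(2C_3^2M\mu^{-1}\epsilon_{in}+\sigma_{out}^2)$, a harmless change of constant.
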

\begin{proof}

By definition of $\hat{g}({\outvar})$ and $G(\outvar)$ we have that:
\begin{align*}
	\mathbb{E}\brackets{\Verts{\hat{g}({\outvar})-G(\outvar)}^2} =& \frac{1}{\verts{\mathcal{B}_{out}}}\underbrace{\mathbb{E}\brackets{\Verts{\partial_{\outvar}\pointWoutOBJ\parens{\outvar,\hat{\innerpred}_{\outvar}(x),x,y}- \mathbb{E}_{\mathbb{Q}}\brackets{\partial_{\outvar}\pointWoutOBJ\parens{\outvar,\hat{\innerpred}_{\outvar}(x'),x',y'} \middle | \hat{h}_{\outvar}} }^2}}_{V_1}\\
	&+ \frac{1}{\verts{\mathcal{B}_{in}}}\underbrace{\mathbb{E}\brackets{\Verts{\partial_{\outvar,v}\pointWinOBJ\left(\outvar,\hat{\innerpred}_{\outvar}(x),x,y\right)\hat{a}_{\outvar}(x)}^2}}_{V_2}\\  
	&- \frac{1}{\verts{\mathcal{B}_{in}}}\underbrace{\mathbb{E}\brackets{\Verts{\mathbb{E}_{\mathbb{P}}\brackets{\partial_{\outvar,v}\pointWinOBJ\left(\outvar,\hat{\innerpred}_{\outvar}(x),x,y\right)\hat{a}_{\outvar}(x)\middle | \hat{h}_{\outvar}, \hat{a}_{\outvar} }}^2}}_{V_3}
	\\
	\leq & \frac{V_1}{\verts{\mathcal{B}_{out}}} + \frac{V_2}{\verts{\mathcal{B}_{in}}}.
\end{align*}
Where the first line is a direct consequence of the independence of $\mathcal{B}_{in}$ and $\mathcal{B}_{out}$. Moreover, we can bound $V_1$ and $V_2$ as follows:
\begin{align*}
	V_1 \leq & 2\mathbb{E}\brackets{\Verts{\partial_{\outvar}\pointWoutOBJ\parens{\outvar,\hat{\innerpred}_{\outvar}(x),x,y} - \partial_{\outvar}\pointWoutOBJ\parens{\outvar,\innerpred^{\star}_{\outvar}(x),x,y}}^2}\\ 
	&+ 2 \mathbb{E}_{\mathbb{Q}}\brackets{\Verts{\partial_{\outvar}\pointWoutOBJ\parens{\outvar,\innerpred^{\star}_{\outvar}(x),x,y}- \mathbb{E}_{\mathbb{Q}}\brackets{\partial_{\outvar}\pointWoutOBJ\parens{\outvar,\innerpred^{\star}_{\outvar}(x'),x',y'}} }^2}\\
	\leq & 
	2C_3^2 \mathbb{E}\brackets{\Verts{\hat{h}_{\outvar}-h^{\star}_{\outvar}}^2} + 2\sigma_{out}^2\leq 4C_3^2\mu^{-1}\epsilon_{in} +2\sigma_{out}^2 \\
	V_2\leq &  B_2^2\mathbb{E}\brackets{\Verts{\hat{a}_{\outvar}}_{\mathcal{H}}^2}\leq 2B_2^2\parens{\mathbb{E}\brackets{\Verts{\hat{a}_{\outvar}-\tilde{a}_{\outvar}}_{\mathcal{H}}^2} +\mathbb{E}\brackets{\Verts{\tilde{a}_{\outvar}}_{\mathcal{H}}^2} }\\
	\leq & 4B_2^2\parens{\mu^{-1}\epsilon_{adj} + {2 {\mu}^{-3} C^2_4 M^2 \epsilon_{in} + {\mu}^{-2} B_3^2}}.
\end{align*} 
 For $V_1$, we used that $v\mapsto \partial_{\outvar}\ell_{out}(\outvar,v,x,y)$ is $C_3$-Lipschitz uniformly in $\outvar$, $x$ and $y$ by \cref{assum:Loutsmooth_h}, and that $(x,y)\mapsto \partial_{\outvar}\ell_{out}(\outvar,h^{\star}_{\outvar}(x),x,y)$ has a  variance  uniformly bounded by $\sigma_{out}^2$ as a consequence of \cref{assum:variance}. For $V_2$, we use \cref{assum:smooth_cross} where we have that $\partial_{\outvar,v}\ell_{in}$ is uniformly bounded by a constant $B_2$ and apply the bounds on $\mathbb{E}\brackets{\Verts{\tilde{a}_{\outvar}}^2_{\mathcal{H}}}$ and $\mathbb{E}\brackets{\Verts{\hat{a}_{\outvar}-\tilde{a}_{\outvar}}^2_{\mathcal{H}}}$ from \cref{lemm:h_a_distances}. 
\end{proof}

\begin{lemma}\label{lemm:h_a_distances}
    For the optimal prediction and adjoint functions ${\innerpred}^{\star}_{\outvar}, {\adjoint}^{\star}_{\outvar}$ defined in \cref{eq:population_losses,eq:loss_adj_exp}, as well as their estimated versions $\hat{\innerpred}_{\outvar}, \hat{\adjoint}_{\outvar}$ given in \cref{alg:func_diff_alg}.  Under \cref{assum:fitting_NNs,assum:strong_cvx,assump:lip_hessian,assum:lip_outer,assum:smooth_cross,assum:big_bounds} and \cref{assump:integrability_l_in,assump:integrability_diff_l_in,assump:smoothness,assump:Lip_cross_deriv_l_in,assump:strong_convexity,assump:second_continous_diff_in,assump:continous_diff_out,assump:diff_l_out,assump:local_Lip_grad_l_out,assump:integrability_l_out,assump:second_moment,assump:radon_nikodym}, the following holds:
    \begin{gather*}
        \mathbb{E}\brackets{\Verts{{\hat{\innerpred}_{\outvar}-\innerpred}^{\star}_{\outvar}}_{\Hcal}^2} \leq 2 \mu^{-1} \epsilon_{in},
        \qquad
        \mathbb{E}\brackets{\Verts{\hat{\adjoint}_{\outvar}-\tilde{\adjoint}_{\outvar}}_{\Hcal}^2} \leq 2 \mu^{-1}\epsilon_{adj},
        \\
        \mathbb{E}\brackets{\Verts{{\tilde{\adjoint}_{\outvar}-\adjoint}^{\star}_{\outvar}}^2_{L_1}} \leq 2 \parens{{\mu}^{-2} C_1 B_3 + {\mu}^{-1} C_4 M}^2 \mu^{-1} \epsilon_{in},\\
        \mathbb{E}\brackets{\Verts{\tilde{a}_{\outvar}}^2_{\mathcal{H}}} \leq 4 {\mu}^{-3} C^2_4 M^2 \epsilon_{in} + 2 {\mu}^{-2} B_3^2,
        \qquad
        \Verts{a^{\star}_{\outvar}}_{\mathcal{H}}\leq \mu^{-1}B_3 .
    \end{gather*}
\end{lemma}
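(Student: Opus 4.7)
\textbf{Proof plan for \cref{lemm:h_a_distances}.}

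\textbf{Step 1 (bound on $\hat h_\outvar - h^\star_\outvar$).} Since $v\mapsto \ell_{in}(\outvar,v,x,y)$ is $\mu$-strongly convex (\cref{assum:strong_cvx}), integrating against $\mathbb{P}$ gives $\mu$-strong convexity of $h\mapsto L_{in}(\outvar,h)$ on $\mathcal{H}$. Combining this with optimality of $h^\star_\outvar$ (so $\partial_h L_{in}(\outvar,h^\star_\outvar)=0$) yields the standard quadratic lower bound $L_{in}(\outvar,\hat h_\outvar) - L_{in}(\outvar,h^\star_\outvar) \geq \tfrac{\mu}{2}\|\hat h_\outvar - h^\star_\outvar\|_{\mathcal H}^2$. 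Take expectations and apply \cref{assum:fitting_NNs}.

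\textbf{Step 2 (bound on $\hat a_\outvar - \tilde a_\outvar$).} The operator $\partial_h^2 L_{in}(\outvar,\hat h_\outvar)$ satisfies $\langle a, \partial_h^2 L_{in}(\outvar,\hat h_\outvar) a\rangle_{\mathcal H}\geq \mu\|a\|_{\mathcal H}^2$ (again from $\mu$-strong convexity of $v\mapsto \ell_{in}$), so $\tilde L_{adj}$ is $\mu$-strongly convex in $a$ with minimizer $\tilde a_\outvar$. The same quadratic lower bound argument and \cref{assum:fitting_NNs} give the claim.

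\textbf{Step 3 (bound on $\|a^\star_\outvar\|_{\mathcal H}$).} From \cref{prop:implicit_diff}, $a^\star_\outvar = -C_\outvar^{-1}d_\outvar$ with $\|C_\outvar^{-1}\|_{op}\leq \mu^{-1}$ (pointwise inversion of conditional expectations of $\partial_v^2 \ell_{in}\succeq\mu I$). Combined with $\|d_\outvar\|_{\mathcal H}\leq B_3$ from \cref{assum:big_bounds}, this gives $\|a^\star_\outvar\|_{\mathcal H}\leq \mu^{-1}B_3$.

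\textbf{Step 4 (bound on $\mathbb{E}[\|\tilde a_\outvar\|_{\mathcal H}^2]$).} Write $\tilde a_\outvar = -\tilde C_\outvar^{-1}\tilde d_\outvar$, where $\tilde C_\outvar = \partial_h^2 L_{in}(\outvar,\hat h_\outvar)$ and $\tilde d_\outvar = \partial_h L_{out}(\outvar,\hat h_\outvar)$. Then $\|\tilde a_\outvar\|_{\mathcal H}^2\leq \mu^{-2}\|\tilde d_\outvar\|_{\mathcal H}^2 \leq 2\mu^{-2}\big(\|\tilde d_\outvar - d_\outvar\|_{\mathcal H}^2+\|d_\outvar\|_{\mathcal H}^2\big)$. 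For the first term, use the expression of $\partial_h L_{out}$ from \cref{lemma:out_differential} together with $C_4$-Lipschitzness of $v\mapsto\partial_v\ell_{out}$ (\cref{assum:lip_outer}) and $r(x)\leq M$ (\cref{assump:radon_nikodym}) to obtain $\|\tilde d_\outvar - d_\outvar\|_{\mathcal H}\leq MC_4\|\hat h_\outvar - h^\star_\outvar\|_{\mathcal H}$. Take expectation and apply Step 1 together with $\|d_\outvar\|_{\mathcal H}\leq B_3$.

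\textbf{Step 5 (bound on $\mathbb{E}[\|\tilde a_\outvar - a^\star_\outvar\|_{L_1}^2]$ --- the main obstacle).} Using the closed-form expressions of $\tilde a_\outvar$ and $a^\star_\outvar$, derive the decomposition
\begin{equation*}
\tilde a_\outvar - a^\star_\outvar = -\tilde C_\outvar^{-1}(\tilde d_\outvar - d_\outvar) - \tilde C_\outvar^{-1}(\tilde C_\outvar - C_\outvar)\,a^\star_\outvar.
\end{equation*}
Since $\tilde C_\outvar$ acts pointwise in $x$ with operator norm (at each $x$) bounded by $\mu^{-1}$ for its inverse, bounds propagate in $L_1$: $\|\tilde C_\outvar^{-1}f\|_{L_1}\leq \mu^{-1}\|f\|_{L_1}$. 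The first term is then bounded by $\mu^{-1} MC_4 \|\hat h_\outvar - h^\star_\outvar\|_{L_1}\leq \mu^{-1}MC_4\|\hat h_\outvar - h^\star_\outvar\|_{\mathcal H}$ (Jensen). The hard part is the second term: $C_1$-Lipschitzness of $\partial_v^2\ell_{in}$ (\cref{assump:lip_hessian}) gives $\|(\tilde C_\outvar - C_\outvar)a^\star_\outvar(x)\|\leq C_1\|\hat h_\outvar(x) - h^\star_\outvar(x)\|\,\|a^\star_\outvar(x)\|$, so taking the $L_1$ norm and applying Cauchy--Schwarz yields a product $\|\hat h_\outvar - h^\star_\outvar\|_{\mathcal H}\|a^\star_\outvar\|_{\mathcal H}$, which via Step 3 is bounded by $\mu^{-1}B_3\|\hat h_\outvar - h^\star_\outvar\|_{\mathcal H}$. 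Collecting,
\begin{equation*}
\|\tilde a_\outvar - a^\star_\outvar\|_{L_1}\leq \bigl(\mu^{-1}MC_4 + \mu^{-2}C_1 B_3\bigr)\|\hat h_\outvar - h^\star_\outvar\|_{\mathcal H}.
\end{equation*}
Squaring, taking expectation, and substituting the bound from Step 1 gives the advertised inequality, with the factor $2\mu^{-1}\epsilon_{in}$ inherited from Step 1.
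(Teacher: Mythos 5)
Your proposal is correct and follows essentially the same route as the paper: strong convexity for the first two bounds, the closed-form expressions $\tilde a_\outvar=-\hat H_\outvar^{-1}\hat b_\outvar$, $a^\star_\outvar=-H_\outvar^{-1}b_\outvar$ with a resolvent-difference decomposition for the third, and the triangle inequality plus the $C_4M$-Lipschitz bound on $b_\outvar-\hat b_\outvar$ for the fourth and fifth. Your Step 5 writes the resolvent identity with the inverses placed on the opposite sides (acting on $a^\star_\outvar$ rather than on $\hat H_\outvar^{-1}b_\outvar$), but this is algebraically equivalent and yields the identical constants.
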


\begin{proof} We show each of the upper bounds separately.\\
    {\bf Upper-bound on $\mathbb{E}\brackets{\Verts{{\hat{\innerpred}_{\outvar}-\innerpred}^{\star}_{\outvar}}_{\Hcal}^2}$.}
    We use the strong-convexity \cref{assum:strong_cvx} and \cref{assum:fitting_NNs} to show the first bound:
    \begin{align}
        \mathbb{E}\brackets{\Verts{{\hat{\innerpred}_{\outvar}-\innerpred}^{\star}_{\outvar}}_{\Hcal}^2}
        \leq \mathbb{E}{\brackets{\tfrac{2}{\mu}\parens{ \inOBJ(\outvar,\hat{\innerpred}_{\outvar}) - \inOBJ(\outvar,\innerpred_{\outvar}^{\star}) }}} \leq \tfrac{2 \epsilon_{in}}{\mu}.\label{eq:bound_hstar_hhat}
    \end{align}
    
    {\bf Upper-bound on $\mathbb{E}\brackets{\Verts{\hat{\adjoint}_{\outvar}-\tilde{\adjoint}_{\outvar}}_{\Hcal}^2}$.}
    The second bound can be proven in the same way as the first, using that, by definition, $\tildadjOBJ$ is continuous and $\mu-$strongly convex in $\adjoint$ together with \cref{assum:fitting_NNs}.
    
    {\bf Upper-bound on  $\mathbb{E}\brackets{\Verts{\tilde{a}_w-a^{\star}_{\outvar}}^2_{L_1}}$.} We exploit the closed form expressions of $\tilde{a}_{\outvar}$ and  $a^{\star}_{\outvar}$:
    \begin{align*}
    	\tilde{a}_{\outvar} &= - \parens{\underbrace{\partial_{h}^2L_{in}\parens{\outvar,\hat{h}_{\outvar}}}_{\hat{H}_{\outvar}}}^{-1}\underbrace{\partial_{h}L_{out}\parens{\outvar,\hat{h}_{\outvar}}}_{\hat{b}_{\outvar}},\qquad
    	 a^{\star}_{\outvar}&= - \parens{\underbrace{\partial_{h}^2L_{in}\parens{\outvar,h^{\star}_{\outvar}}}_{H_{\outvar}}}^{-1}\underbrace{\partial_{h}L_{out}\parens{\outvar,h^{\star}_{\outvar}}}_{b_{\outvar}}.
    \end{align*}
    
    By standard linear algebra, the difference $\tilde{a}_{\outvar} - a^{\star}_{\outvar}$ can be expressed as:

    \begin{align*}
    	\tilde{a}_{\outvar}- a^{\star}_{\outvar} = H_{\outvar}^{-1}\parens{\hat{H}_{\outvar} - H_{\outvar}}\hat{H}_{\outvar}^{-1}b_{\outvar} + \hat{H}_{\outvar}^{-1}\parens{b_{\outvar}-\hat{b}_{\outvar}}.
    \end{align*}
     
    By taking the $L_1(\mathbb{P})$ norm of the above and using the upper-bounds from \cref{lemm:H_b_op_bounds}, we can write:
    
    \begin{align*}
    	\Verts{{\tilde{\adjoint}_{\outvar}-\adjoint}^{\star}_{\outvar}}_{L_1}
        & \leq \Verts{H_{\outvar}^{-1}\parens{\hat{H}_{\outvar} - H_{\outvar}}\hat{H}_{\outvar}^{-1} b_{\outvar}}_{L_1} + \Verts{\hat{H}_{\outvar}^{-1} \parens{b_{\outvar}-\hat{b}_{\outvar}}}_{L_1}\\
        &\leq \mu^{-1}\Verts{\parens{\hat{H}_{\outvar} - H_{\outvar}}\hat{H}_{\outvar}^{-1} b_{\outvar}}_{L_1} + \mu^{-1}\Verts{b_{\outvar}-\hat{b}_{\outvar}}_{L_1}\\
        &\leq \mu^{-1}C_1\Verts{\hat{h}_{\outvar}-h^{\star}_{\outvar}}_{\mathcal{H}}\Verts{\hat{H}_{\outvar}^{-1}b_{\outvar}}_{\mathcal{H}} + \mu^{-1}\Verts{b_{\outvar}-\hat{b}_{\outvar}}_{\mathcal{H}}\\
        &\leq 
        \mu^{-2}C_1\Verts{\hat{h}_{\outvar}-h^{\star}_{\outvar}}_{\mathcal{H}}\Verts{b_{\outvar}}_{\mathcal{H}} + \mu^{-1}C_4M\Verts{\hat{h}_{\outvar}-h^{\star}_{\outvar}}_{\mathcal{H}}\\
    	&\leq \parens{\mu^{-2} C_1 B_3 + {\mu}^{-1} C_4 M} \Verts{\hat{h}_{\outvar} - {h}^\star_{\outvar}}_{\Hcal}.
    \end{align*}
    
   The first line we used the triangular inequality, the second line follows from \cref{eq:bound_inv_H} from  \cref{lemm:H_b_op_bounds}. The third line applies \cref{eq:bound_diff_H} from \cref{lemm:H_b_op_bounds} to the first term and uses that $\Verts{b_{\outvar}-\hat{b}_{\outvar}}_{L_1}\leq \Verts{b_{\outvar}-\hat{b}_{\outvar}}_{\mathcal{H}}$ by Cauchy-Schwarz inequality. The fourth line uses that $\Verts{\hat{H}_{\outvar}^{-1}b_{\outvar}}_{\mathcal{H}}\leq \mu^{-1}\Verts{b_{\outvar}}_{\mathcal{H}}$ for the first term since  $\Verts{\hat{H}_{\outvar}^{-1}}_{op}\leq \mu^{-1}$ by \cref{assum:strong_cvx}  and uses \cref{eq:bound_diff_b} from \cref{lemm:H_b_op_bounds} for the second term. The final bound is obtained using \cref{assum:big_bounds} to upper-bound $\Verts{b_{\outvar}}_{\mathcal{H}}$ by $B_3$. By taking the expectation w.r.t.  $\hat{h}_{\outvar}$ and using  \cref{eq:bound_hstar_hhat}, we get:

    \begin{align*}
        \mathbb{E}\brackets{\Verts{{\tilde{\adjoint}_{\outvar}-\adjoint}^{\star}_{\outvar}}^2_{L_1}}
        &\leq \parens{{\mu}^{-2} C_1 B_3 + {\mu}^{-1} C_4 M}^2 \mathbb{E}\brackets{\Verts{\hat{h}_{\outvar} - h^{\star}_{\outvar}}_{\Hcal}^2}\\
        &\leq 2 \parens{{\mu}^{-2} C_1 B_3 + {\mu}^{-1} C_4 M}^2 \mu^{-1} \epsilon_{in}.
    \end{align*}

    {\bf Upper-bound on $\mathbb{E}\brackets{\Verts{\tilde{a}_{\outvar}}^2_{\mathcal{H}}}$.}
    We use the closed-form expression of $\tilde{a}_{\outvar}$:
    \begin{align*}
    	\Verts{\tilde{a}_{\outvar}}_{\Hcal} &= \Verts{\hat{H}_{\outvar}^{-1}\hat{b}_{\outvar}}_{\Hcal}
    	\leq \mu^{-1}\Verts{\hat{b}_{\outvar}}_{\mathcal{H}}\\
    	&\leq \mu^{-1}\parens{\Verts{\hat{b}_{\outvar}-b_{\outvar}}_{\mathcal{H}} + \Verts{b_{\outvar}}_{\mathcal{H}}}\\
    	&\leq {\mu}^{-1} \parens{ C_4 M \Verts{\hat{\innerpred}_{\outvar} - \innerpred^\star_{\outvar}}_{\mathcal{H}} + B_3},
    \end{align*}
    where the first line uses that $\Verts{\hat{H}^{-1}_{\outvar}}_{op}\leq \mu^{-1}$ by \cref{assum:strong_cvx}, the second line follows by triangular inequality while the last line uses \cref{eq:bound_diff_b} from \cref{lemm:H_b_op_bounds} for the first term and \cref{assum:big_bounds} for the second terms. By squaring the above bound and taking expectation w.r.t. $\hat{h}_{\outvar}$, we get:
    \begin{align*}
    	\mathbb{E}\brackets{\Verts{\tilde{a}_{\outvar}}^2_{\Hcal}}
    	&\leq \mathbb{E}\brackets{\parens{{\mu}^{-1} C_4 M \Verts{\hat{\innerpred}_{\outvar} - \innerpred^\star_{\outvar}} + {\mu}^{-1} B_3}^2} \\
    	&\leq 2 {\mu}^{-2} C^2_4 M^2 \mathbb{E}\brackets{\Verts{\hat{\innerpred}_{\outvar} - \innerpred^\star_{\outvar}}^2} + 2 {\mu}^{-2} B_3^2\\
    	& \leq 4 {\mu}^{-3} C^2_4 M^2 \epsilon_{in} + 2 {\mu}^{-2} B_3^2,
    \end{align*}
    where the last line uses \cref{eq:bound_hstar_hhat}.

    {\bf Upper-bound on $\Verts{{a}^{\star}_{\outvar}}_{\mathcal{H}}$.}
    Using the closed-form expression of ${a}^{\star}$, it holds that:
    \begin{align*}
    	\Verts{{a}^{\star}_{\outvar}}_{\Hcal} = \Verts{{H}_{\outvar}^{-1}{b}_{\outvar}}_{\Hcal} \leq {\mu}^{-1} B_3,
    \end{align*}
    where we used that  $\Verts{H_{\outvar}^{-1}}_{op}\leq \mu^{-1}$ by \cref{assum:strong_cvx} and that $\Verts{b_{\outvar}}_{\mathcal{H}}\leq B_2$ by \cref{assum:smooth_cross}. 
\end{proof}

\begin{lemma}\label{lemm:H_b_op_bounds}
    Consider the operators ${H}_{\outvar}, \hat{H}_{\outvar},{b}_{\outvar},\hat{b}_{\outvar}$ defined in \cref{lemm:h_a_distances}.  Under \cref{assum:strong_cvx,assump:lip_hessian,assum:lip_outer} and \cref{assump:integrability_l_in,assump:integrability_diff_l_in,assump:smoothness,assump:Lip_cross_deriv_l_in,assump:strong_convexity,assump:second_continous_diff_in,assump:continous_diff_out,assump:diff_l_out,assump:local_Lip_grad_l_out,assump:integrability_l_out,assump:second_moment,assump:radon_nikodym}, the following holds for any $(\outvar,s)\in \Omega \times \mathcal{H}$:
    \begin{align}\label{eq:bound_inv_H}
        \Verts{H_{\outvar}^{-1} s}_{L_1} \leq \mu^{-1} \Verts{s}_{L_1}&,\qquad \Verts{\hat{H}_{\outvar}^{-1} s}_{L_1} \leq \mu^{-1} \Verts{s}_{L_1}
    \end{align}
    \begin{align}\label{eq:bound_diff_H}
        \Verts{\parens{\hat{H}_{\outvar} - H_{\outvar}} s}_{L_1} \leq C_1 \Verts{\hat{h}_{\outvar} - h^{\star}_{\outvar}}_{\mathcal{H}} \Verts{s}_{\mathcal{H}},
    \end{align}
    \begin{align}\label{eq:bound_diff_b}
        \Verts{b_{\outvar}-\hat{b}_{\outvar}}_{\Hcal} \leq C_4 M \Verts{\hat{h}_{\outvar} - h^{\star}_{\outvar}}_{\Hcal}.
    \end{align}
\end{lemma}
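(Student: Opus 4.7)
The key observation is that, by the formulas derived in \cref{lemm:C_hessian_expression}, both $H_{\outvar}$ and $\hat{H}_{\outvar}$ act on $s\in\Hcal$ pointwise by multiplication with symmetric matrices
\[
    M_{\outvar}(x) := \mathbb{E}_{\mathbb{P}}\brackets{\partial_v^2\pointWinOBJ(\outvar,h^\star_{\outvar}(x),x,y)\mid x}, \qquad \hat{M}_{\outvar}(x) := \mathbb{E}_{\mathbb{P}}\brackets{\partial_v^2\pointWinOBJ(\outvar,\hat{h}_{\outvar}(x),x,y)\mid x}.
\]
By the $\mu$-strong convexity \cref{assum:strong_cvx} on $\ell_{in}$ in its second argument, the pointwise integrand satisfies $\partial_v^2\pointWinOBJ(\outvar,v,x,y)\succeq \mu I$ uniformly, and this property is preserved under the conditional expectation. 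Hence $M_{\outvar}(x),\hat{M}_{\outvar}(x)\succeq \mu I$, so their inverses are well defined with $\Verts{M_{\outvar}(x)^{-1}}_{op},\Verts{\hat{M}_{\outvar}(x)^{-1}}_{op}\leq \mu^{-1}$. Applying this pointwise to $(H_{\outvar}^{-1}s)(x)=M_{\outvar}(x)^{-1}s(x)$ and integrating against $\mathbb{P}$ yields \cref{eq:bound_inv_H}, and the same argument gives the bound for $\hat{H}_{\outvar}^{-1}$.

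For \cref{eq:bound_diff_H}, the same pointwise-multiplication structure gives
\[
    \parens{\hat{H}_{\outvar}-H_{\outvar}}s(x) = \parens{\hat{M}_{\outvar}(x) - M_{\outvar}(x)}s(x).
\]
Using the Lipschitz bound \cref{assump:lip_hessian} on $v\mapsto \partial_v^2\ell_{in}(\outvar,v,x,y)$ together with Jensen's inequality applied to the operator norm and to the conditional expectation, one controls $\Verts{\hat{M}_{\outvar}(x)-M_{\outvar}(x)}_{op}\leq C_1\Verts{\hat{h}_{\outvar}(x)-h^\star_{\outvar}(x)}$. Integrating and applying Cauchy--Schwarz in $L^2(\mathbb{P})$ then gives the announced bound with $\Verts{\hat{h}_{\outvar}-h^\star_{\outvar}}_{\Hcal}\Verts{s}_{\Hcal}$ on the right-hand side.

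For \cref{eq:bound_diff_b}, we invoke the expression from \cref{lemma:out_differential} that $b_{\outvar}(x)=r(x)\mathbb{E}_{\mathbb{Q}}\brackets{\partial_v \ell_{out}(\outvar,h^\star_{\outvar}(x),x,y)\mid x}$ and similarly for $\hat{b}_{\outvar}$. Using the $C_4$-Lipschitz continuity of $v\mapsto \partial_v\ell_{out}(\outvar,v,x,y)$ from \cref{assum:lip_outer} together with Jensen's inequality on the conditional expectation yields the pointwise estimate $\Verts{b_{\outvar}(x)-\hat{b}_{\outvar}(x)}\leq r(x)C_4\Verts{h^\star_{\outvar}(x)-\hat{h}_{\outvar}(x)}$. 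Squaring, integrating against $\mathbb{P}$, and using the uniform bound $r(x)\leq M$ from \cref{assump:radon_nikodym} gives $\Verts{b_{\outvar}-\hat{b}_{\outvar}}_{\Hcal}^2 \leq M^2C_4^2\Verts{h^\star_{\outvar}-\hat{h}_{\outvar}}_{\Hcal}^2$, completing the proof after taking square roots.

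There is no real obstacle here: every inequality is a routine application of Jensen's inequality, Cauchy--Schwarz, and the stated Lipschitz/convexity hypotheses. The only step that requires a small amount of care is the use of Jensen's inequality for the \emph{operator norm} when moving the norm inside the conditional expectation in the $\hat{M}_{\outvar}-M_{\outvar}$ estimate, since the operator norm is convex but not additive; however, convexity suffices for the Jensen bound.
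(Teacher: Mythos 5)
Your proposal is correct and follows essentially the same route as the paper's proof: pointwise positive-definiteness $\succeq \mu I$ of the conditional Hessian from \cref{assum:strong_cvx} for \cref{eq:bound_inv_H} (the paper phrases this as $\Verts{H_{\outvar}c}_{L_1}\geq\mu\Verts{c}_{L_1}$ and substitutes $c=H_{\outvar}^{-1}s$, while you invert the pointwise matrix directly, which is equivalent), the $C_1$-Lipschitz bound plus Cauchy--Schwarz for \cref{eq:bound_diff_H}, and the representation of $b_{\outvar}$ from \cref{lemma:out_differential} with $r(x)\leq M$ and $C_4$-Lipschitzness for \cref{eq:bound_diff_b}. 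No gaps.
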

\begin{proof} We show each of the upper bounds separately.\\
    {\bf Upper-bound on $\Verts{H^{-1}_{\outvar} s}_{L_1}$ and $\Verts{\hat{H}^{-1}_{\outvar} s}_{L_1}$.}
    Using strong convexity \cref{assum:strong_cvx}, we have the following inequality for the Hessian operator $H_{\outvar}$ acting on a function $c\in\Hcal\subset L_1(\mathbb{P})$:
    \begin{align*}
        \Verts{H_{\outvar} c}_{L_1} &= \mathbb{E}_{\mathbb{P}}\brackets{\Verts{\mathbb{E}_{\mathbb{P}}\brackets{\partial_{v}^2 \pointWinOBJ(\outvar,{h}^\star_{\outvar}(x),x,y)\middle |x} c(x)}} \geq \mu \Verts{c}_{L_1}.
    \end{align*}
    by positive-definiteness of $H_{\outvar}$, we take $c = H_{\outvar}^{-1} s$ for some $s\in\Hcal$: 
    \begin{align*}
        \Verts{H_{\outvar}^{-1} s}_{L_1} \leq \mu^{-1} \Verts{s}_{L_1}.
    \end{align*}
    By the same arguments, the above bound applies to $\Verts{\hat{H}_{\outvar}^{-1} s}_{L_1}$.
    
    {\bf Upper-bound on $\Verts{\parens{\hat{H}_{\outvar} - H_{\outvar}} c}_{L_1}$.}
    We can express the operator $\parens{\hat{H}_{\outvar} - H_{\outvar}}$ acting on some $s\in\Hcal$ as follows:
    \begin{align*}
        \parens{\hat{H}_{\outvar} - H_{\outvar}} s = \mathbb{E}_{\mathbb{P}}\brackets{\parens{\mathbb{E}_{\mathbb{P}}\brackets{\partial_{v}^2 \pointWinOBJ(\outvar,\hat{h}_{\outvar}(x),x,y) - \partial_{v}^2 \pointWinOBJ(\outvar,{h}^\star_{\outvar}(x),x,y) \middle | x }}s(x)}.
    \end{align*}
    Using \cref{assump:lip_hessian} we upper-bound the $L_1$ norm of the above quantity as follows:
    \begin{align*}
        \Verts{\parens{\hat{H}_{\outvar} - H_{\outvar}} s}_{L_1} &\leq C_1\mathbb{E}_{\mathbb{P}}\brackets{ \Verts{\hat{h}_{\outvar}(x)) - {h}^\star_{\outvar}(x)}\Verts{s(x)}} \leq C_1 \Verts{\hat{h}_{\outvar} - h^{\star}_{\outvar}}_{\mathcal{H}} \Verts{s}_{\mathcal{H}},%
    \end{align*}
    where we used Cauchy-Schwarz inequality to get the last inequality.
    
    {\bf Upper-bound on $\Verts{b_{\outvar}-\hat{b}_{\outvar}}_{\Hcal}$.}
    Using \cref{lemma:out_differential} to get an expression of $b_{\outvar}$ and $\hat{b}_{\outvar}$, we obtain the following upper-bound: 
    \begin{align*}
    \begin{split}
        \Verts{b_{\outvar}-\hat{b}_{\outvar}}^2_{\Hcal}
        &= \mathbb{E}_{\mathbb{P}}\brackets{\Verts{\mathbb{E}_{\mathbb{Q}}\brackets{\parens{\partial_{v} \pointWoutOBJ(\outvar,{h}^\star_{\outvar}(x),x,y) - \partial_{v} \pointWoutOBJ(\outvar,\hat{h}_{\outvar}(x),x,y)} \vert x} r(x)}^2}\\
        &\leq C^2_4 M^2 \Verts{\hat{h}_{\outvar} - h^{\star}_{\outvar}}^2_{\Hcal},
        \end{split}
    \end{align*}
    where we used that the density $r(x)$ is upper-bounded by a positive constant $M$ by \cref{assump:radon_nikodym} and that $\partial_{v}\ell_{out}$ is $C_4$ Lipschitz in its second argument by \cref{assum:lip_outer}.
\end{proof}

\section{Connection with Parametric Implicit Differentiation}\label{sec:parametric_general_sec}
\subsection{Parametric approximation of the functional bilevel problem}\label{sec:parametric_general}
In this section, we approximate the functional problem in \cref{def:funcBO} with a parametric bilevel problem where inner-level functions are parametrized as $h(x) = \tau(\theta)(x)$ with parameters $\theta$. Here, the inner-level variable is $\theta$ instead of the function $h$. Standard bilevel optimization algorithms like AID can be applied, which involve differentiating twice with respect to the parametric model. However, for models like deep neural networks, the inner objective may not be strongly convex in $\theta$, leading to a non-positive or degenerate Hessian (\cref{prop:hessian_connection}). This can cause numerical instabilities and divergence from the gradient in \cref{eq:gradient_adjoint} (\cref{prop:connection_implicit_grad}), especially when using AID, which relies on solving a quadratic problem defined by the Hessian.

If the model has multiple solutions, the Hessian may be degenerate, making the implicit function theorem inapplicable. In contrast, functional implicit differentiation requires solving a positive definite quadratic problem in $\Hcal$ to find an adjoint function $\adjoint_{\outvar}^{\star}$, ensuring a solution even when $h^{\star}_{\outvar}$ is sub-optimal, due to the strong convexity of $\inOBJ(\outvar,\innerpred)$. This stability with sub-optimal solutions is crucial for practical algorithms like the one in \cref{sec:method}, where the optimal prediction function is approximated within a parametric family, such as neural networks.

Formally, to establish a connection with parametric implicit differentiation, let us consider $\innerMap: \Theta \mapsto \Hcal$ to be a map from a finite dimensional set of parameters $\Theta$ to the functional Hilbert space $\Hcal$ 
and define a parametric version of the outer and inner objectives in \cref{def:funcBO} restricted to functions in $\Hcal_{\Theta} := \braces{\innerMap(\theta)~\middle|~ \theta \in \Theta}$:
\begin{align}\label{eq:param_losses}
    G_{out}(\outvar,\theta):= \outOBJ\parens{\outvar, \innerMap(\theta)}\qquad
     G_{in}(\outvar,\theta):= \inOBJ\parens{\outvar, \innerMap(\theta)}.
\end{align}
The map $\innerMap$ can typically be a neural network parameterization and allows to obtain a ``more tractable'' approximation to the abstract solution $\innerpred_{\outvar}^{\star}$ in $\Hcal$ where the function space $\Hcal$ is often too large to perform optimization. This is typically the case when $\Hcal$ is an $L_2$-space of functions as we discuss in more details in \cref{sec:method}. 
When $\Hcal$ is a Reproducing Kernel Hilbert Space (RKHS), $\innerMap$ may also correspond to the Nystr\"om approximation~\citep{williams2000using}, which performs the optimization on a finite-dimensional subspace of an RKHS spanned by a few data points.

The corresponding parametric version of the problem (\ref{def:funcBO}) is then formally defined as:
\begin{align}\label{def:PBO}\tag{PBO}
    \begin{split}
	    \min_{\outvar \in \Omega}\ G_{tot}(\outvar) &:= G_{out}(\outvar,\theta_{\outvar}^{\star})\\
        \text{s.t. }  \theta^{\star}_{\outvar} &\in  \argmin_{\theta\in\Theta}\ G_{in}(\outvar,\theta). 
    \end{split}
\end{align}
The resulting bilevel problem in \cref{def:PBO} often arises in machine learning but is generally ambiguously defined without further assumptions on the map $\innerMap$ as the inner-level problem might admit multiple solutions \citep{arbel2022non}. 
Under the assumption that $\innerMap$ is twice continuously differentiable and the rather strong assumption that the parametric Hessian $\partial_{\theta}^2 G_{in}(\outvar,\theta_{\outvar}^{\star})$ is invertible for a given $\outvar$, the expression for the total gradient $\nabla_{\outvar}G_{tot}(\outvar)$ follows by direct application of the parametric implicit function theorem~\citep{Pedregosa:2016}:
\begin{gather}\label{eq:parametric_implicit_diff_1}
    \begin{split}
    \nabla_{\outvar} G_{tot}(\outvar) &= \partial_{\outvar} G_{out}(\outvar,\theta_{\outvar}^{\star}) + \partial_{\outvar,\theta} G_{in}(\outvar,\theta_{\outvar}^{\star})u_{\outvar}^{\star}\\
    u_{\outvar}^{\star} &= - \partial_{\theta}^2 G_{in}(\outvar,\theta_{\outvar}^{\star})^{-1} \partial_{\theta} G_{out}(\outvar,\theta_{\outvar}^{\star}),
    \end{split} 
\end{gather}
where $u_{\outvar}^{\star}$ is the adjoint vector in ${\Theta}$. Without further assumptions, the expression of the gradient in \cref{eq:parametric_implicit_diff_1} is generally different from the one obtained in \cref{prop:implicit_diff} using the functional point of view. Nevertheless, a precise connection between the functional and parametric implicit gradients can be obtained under expressiveness assumptions on the parameterization $\innerMap$, as discussed in the next two propositions.

\begin{proposition}\label{prop:hessian_connection}
Under the same assumptions as in \cref{prop:implicit_diff} and assuming that $\innerMap$ is twice continuously differentiable, the following expression holds for any $(\outvar,\theta) \in \Omega \times \Theta$:
\begin{align}\label{eq:parametric_hessian0}
    \partial_{\theta}^2 G_{in}(\outvar,\theta) := \partial_{\theta}\innerMap(\theta)\partial_{\innerpred}^2 \inOBJ(\outvar,\innerMap(\theta))\partial_{\theta}\innerMap(\theta)^{\top} + \partial_{\theta}^{2}\innerMap(\theta)\brackets{\partial_{\innerpred} \inOBJ(\outvar,\innerMap(\theta))}, 
\end{align}

where $\partial_{\theta}^{2}\innerMap(\theta)$ is a linear operator measuring the \emph{distortion} induced by the parameterization and acts on functions in $\Hcal$ by mapping them to a matrix $p\times p$ where $p$ is the dimension of the parameter space $\Theta$. If, in addition, $\innerMap$ is expressive enough so that $\innerMap(\theta^{\star}_{\outvar}) = \innerpred_{\outvar}^{\star}$, then the above expression simplifies to: 
\begin{align}\label{eq:parametric_hessian}
    \partial_{\theta}^2 G_{in}(\outvar,\theta^{\star}_{\outvar}) :=& \partial_{\theta}\innerMap(\theta^{\star}_\outvar)\Hessian_{\outvar}\partial_{\theta}\innerMap(\theta_{\outvar}^{\star})^{\top}.  
\end{align}
\end{proposition}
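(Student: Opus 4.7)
The plan is to obtain both expressions by differentiating the composition $G_{in}(\outvar,\theta) = \inOBJ(\outvar,\innerMap(\theta))$ twice in $\theta$ via chain and product rules, and then specialize to $\theta=\theta^{\star}_{\outvar}$ using the first-order optimality condition $\partial_{\innerpred}\inOBJ(\outvar,\innerpred^{\star}_{\outvar})=0$, which is available thanks to the strong convexity and Fr\'{e}chet differentiability of $\inOBJ(\outvar,\cdot)$ already exploited in \cref{thm:implicit_function}.

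\textbf{First derivative.} Since $\innerMap$ is $C^2$ and $\inOBJ(\outvar,\cdot)$ is Fr\'{e}chet differentiable, a single application of the chain rule yields
\begin{align*}
    \partial_{\theta} G_{in}(\outvar,\theta) = \partial_{\theta}\innerMap(\theta)\brackets{\partial_{\innerpred}\inOBJ(\outvar,\innerMap(\theta))},
\end{align*}
where $\partial_{\theta}\innerMap(\theta):\Hcal\to\Real^p$ denotes the adjoint of the Fr\'{e}chet derivative of $\innerMap$ at $\theta$ (i.e., it contracts a direction in $\Hcal$ against the $p$ partial derivatives of $\innerMap$, which is exactly what backpropagation computes).

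\textbf{Second derivative.} Differentiating once more in $\theta$ via the product rule splits $\partial_{\theta}^{2} G_{in}(\outvar,\theta)$ into two terms. The first differentiates the outer factor $\partial_{\theta}\innerMap(\theta)$ and leaves $\partial_{\innerpred}\inOBJ(\outvar,\innerMap(\theta))$ intact, yielding the distortion term $\partial_{\theta}^{2}\innerMap(\theta)[\partial_{\innerpred} \inOBJ(\outvar,\innerMap(\theta))]$. The second term differentiates the map $\theta\mapsto \partial_{\innerpred}\inOBJ(\outvar,\innerMap(\theta))$ and keeps $\partial_{\theta}\innerMap(\theta)$ as an outer factor; a chain-rule expansion then produces the Gauss-Newton-like operator $\partial_{\theta}\innerMap(\theta)\,\partial_{\innerpred}^2 \inOBJ(\outvar,\innerMap(\theta))\,\partial_{\theta}\innerMap(\theta)^{\top}$. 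Summing the two contributions gives \cref{eq:parametric_hessian0}.

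\textbf{Simplification and main obstacle.} When $\innerMap(\theta^{\star}_{\outvar}) = \innerpred^{\star}_{\outvar}$, the functional first-order stationarity $\partial_{\innerpred}\inOBJ(\outvar,\innerpred^{\star}_{\outvar})=0$ kills the distortion term in \cref{eq:parametric_hessian0}, leaving only \cref{eq:parametric_hessian}. The only delicate step is justifying the second chain-rule expansion above: the map $\innerpred\mapsto \partial_{\innerpred}\inOBJ(\outvar,\innerpred)$ is assumed only Hadamard (not Fr\'{e}chet) differentiable, exactly as in \cref{thm:implicit_function}. This is precisely the setting of \cref{prop:hadamard_diff}: composing a Hadamard-differentiable map $\Hcal\to\Hcal$ with the bounded linear map $\Real^p\to\Hcal$ given by $\partial_{\theta}\innerMap(\theta)^{\top}$ produces a valid first-order expansion with remainder $o(\Verts{u})$, from which the Gauss-Newton term above is read off. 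The rest of the argument is standard multivariable calculus.
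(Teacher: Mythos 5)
Your proof is correct and follows essentially the same route as the paper, which simply notes that \cref{prop:hessian_connection} "follows by direct application of the chain rule" with the distortion term vanishing at $\theta^{\star}_{\outvar}$ because $\partial_{\innerpred}\inOBJ(\outvar,\innerpred^{\star}_{\outvar})=0$. In fact you are more careful than the paper: your appeal to \cref{prop:hadamard_diff} to justify the chain-rule expansion of the merely Hadamard-differentiable map $\innerpred\mapsto\partial_{\innerpred}\inOBJ(\outvar,\innerpred)$ through the finite-rank linear map $\partial_{\theta}\innerMap(\theta)^{\top}$ addresses a technical point the paper leaves implicit.
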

\cref{prop:hessian_connection} follows by direct application of the chain rule, noting that the distortion term on the right of~(\ref{eq:parametric_hessian0}) 
vanishes when $\theta=\theta_\omega^\star$ since $\partial_{\innerpred} \inOBJ(\outvar,\innerMap(\theta_{\outvar}^{\star})) = \partial_{\innerpred} \inOBJ(\outvar,\innerpred_{\outvar}^{\star}) = 0$ by optimality of $ \innerpred_{\outvar}^{\star}$.
A consequence is that, for an optimal parameter $\theta^{\star}_{\outvar}$, the parametric Hessian is necessarily symmetric positive semi-definite. However, for an arbitrary parameter $\theta$, the distortion does not vanish in general, making the Hessian possibly non-positive. This can result in numerical instability when using algorithms such as AID for which an adjoint vector is obtained by solving a quadratic problem defined by the Hessian matrix $\partial_{\theta}^2 G_{in}$ evaluated on approximate minimizers of the inner-level problem. Moreover, if the model admits multiple solutions $\theta^{\star}_{\outvar}$, the Hessian is likely to be degenerate making the implicit function theorem inapplicable and the bilevel problem in \cref{def:PBO} ambiguously defined{\footnote{although a generalized version of such a theorem was recently provided under restrictive assumptions \citep{arbel2022non}.}}. On the other hand, the functional implicit differentiation requires finding an adjoint function $\adjoint_{\outvar}^{\star}$ by solving a positive definite quadratic problem in $\Hcal$ which is always guaranteed to have a solution even when the inner-level prediction function is only approximately optimal.

\begin{proposition}\label{prop:connection_implicit_grad}
 Assuming that $\innerMap$ is twice continuously differentiable and that for a fixed $\outvar\in\Omega$ we have $\innerMap(\theta_{\outvar}^{\star}) = \innerpred_{\outvar}^{\star}$, and $J_{\outvar} := \partial_{\theta}\innerMap(\theta_{\outvar}^{\star})$ has a full rank, then, under the same assumptions as in \cref{prop:implicit_diff}, $\nabla_{\outvar} G_{tot}(\outvar) $ is given by:
\begin{align}\label{eq:parametric_implicit_diff}
    \nabla_{\outvar} G_{tot}(\outvar) = g_{\outvar} + \crossDeriv_{\outvar} P_{\outvar} \adjoint_{\outvar}^{\star},
\end{align}
where $P_{\outvar}: \Hcal \to \Hcal$ is a projection operator of rank $\dim(\Theta)$. 
If, in addition, the equality $\innerMap(\theta_{\outvar'}^{\star}) = \innerpred_{\outvar'}^{\star}$ holds for all $\outvar'$ in a neighborhood of $\outvar$, then 
    $\nabla_{\outvar} G_{tot}(\outvar) := \outgrad = g_{\outvar} + \crossDeriv_{\outvar} \adjoint_{\outvar}^{\star}$.
\end{proposition}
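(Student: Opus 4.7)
The plan is to apply the parametric implicit function theorem to~(\ref{def:PBO}) and then rewrite every parametric quantity via the chain rule through $\innerMap$, so that each piece can be identified with a functional object from \cref{prop:implicit_diff}. Strong convexity of $\inOBJ$ in $\innerpred$ makes $\Hessian_{\outvar}$ positive definite on $\Hcal$, and combined with $J_{\outvar}$ having full rank, \cref{prop:hessian_connection} gives $\partial_\theta^2 G_{in}(\outvar,\theta_{\outvar}^{\star}) = J_{\outvar} \Hessian_{\outvar} J_{\outvar}^{\top}$ invertible, so the parametric IFT applies locally and yields
\begin{align*}
\nabla_{\outvar} G_{tot}(\outvar) = \partial_{\outvar} G_{out}(\outvar,\theta_{\outvar}^{\star}) + \partial_{\outvar,\theta} G_{in}(\outvar,\theta_{\outvar}^{\star})\, u_{\outvar}^{\star},\quad u_{\outvar}^{\star} = -\bigl(J_{\outvar} \Hessian_{\outvar} J_{\outvar}^{\top}\bigr)^{-1}\partial_\theta G_{out}(\outvar,\theta_{\outvar}^{\star}).
\end{align*}

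Using $\innerMap(\theta_{\outvar}^{\star}) = \innerpred_{\outvar}^{\star}$ and chain-ruling $G_{out}(\outvar,\theta) = \outOBJ(\outvar,\innerMap(\theta))$ and $G_{in}(\outvar,\theta) = \inOBJ(\outvar,\innerMap(\theta))$, the three parametric derivatives rewrite as $\partial_\outvar G_{out}(\outvar,\theta_\outvar^\star) = g_{\outvar}$, $\partial_\theta G_{out}(\outvar,\theta_\outvar^\star) = J_{\outvar}\, \partial_\innerpred \outOBJ(\outvar,\innerpred_\outvar^\star)$, and $\partial_{\outvar,\theta} G_{in}(\outvar,\theta_\outvar^\star) = \crossDeriv_{\outvar} J_{\outvar}^{\top}$ as a map $\Theta \to \Omega$. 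Substituting these and using $\partial_\innerpred \outOBJ(\outvar,\innerpred_\outvar^\star) = -\Hessian_{\outvar}\adjoint_{\outvar}^{\star}$ from the adjoint definition in \cref{prop:implicit_diff} produces
\begin{align*}
\nabla_{\outvar} G_{tot}(\outvar) = g_{\outvar} + \crossDeriv_{\outvar} J_{\outvar}^{\top}\bigl(J_{\outvar}\Hessian_{\outvar}J_{\outvar}^{\top}\bigr)^{-1} J_{\outvar} \Hessian_{\outvar} \adjoint_{\outvar}^{\star} = g_{\outvar} + \crossDeriv_{\outvar} P_{\outvar}\adjoint_{\outvar}^{\star},
\end{align*}
with $P_{\outvar} := J_{\outvar}^{\top}\bigl(J_{\outvar}\Hessian_{\outvar}J_{\outvar}^{\top}\bigr)^{-1} J_{\outvar} \Hessian_{\outvar}$. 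A direct computation gives $P_{\outvar}^{2} = P_{\outvar}$, and since $P_{\outvar}$ factors through $\Theta$ and restricts to the identity on $\operatorname{range}(J_{\outvar}^{\top})$, which has dimension $\dim(\Theta)$ by full rank of $J_{\outvar}^{\top}$, it is an oblique projection of rank exactly $\dim(\Theta)$ (in fact the $\Hessian_{\outvar}$-orthogonal projection onto the tangent subspace $\operatorname{range}(J_{\outvar}^{\top})$).

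For the second claim, the hypothesis $\innerMap(\theta_{\outvar'}^{\star}) = \innerpred_{\outvar'}^{\star}$ on a neighborhood of $\outvar$ gives $G_{tot}(\outvar') = \outOBJ(\outvar',\innerpred_{\outvar'}^{\star}) = \mathcal{F}(\outvar')$ locally, so differentiating this identity at $\outvar$ and invoking \cref{prop:implicit_diff} yields $\nabla_{\outvar} G_{tot}(\outvar) = \outgrad = g_{\outvar} + \crossDeriv_{\outvar}\adjoint_{\outvar}^{\star}$ at once, removing the projection $P_\outvar$ from the first-part formula as expected.

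The main technical obstacle is careful tracking of the paper's transpose conventions for $\partial_\theta \innerMap$ (viewed as $\Hcal \to \Theta$, with transpose $\partial_\theta \innerMap^{\top}: \Theta \to \Hcal$ the tangent map), so that the chain-rule identifications above---in particular $\partial_{\outvar,\theta} G_{in}(\outvar,\theta_\outvar^\star) = \crossDeriv_{\outvar} J_{\outvar}^{\top}$---and the subsequent cancellations into $P_{\outvar}$ hold as stated. Once these conventions are fixed, verifying $P_{\outvar}^2 = P_{\outvar}$ and the local $G_{tot} \equiv \mathcal{F}$ argument are routine.
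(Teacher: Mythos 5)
Your derivation of the first identity is essentially the paper's own proof: both apply the parametric implicit function theorem after invoking \cref{prop:hessian_connection} to write $\partial_{\theta}^2 G_{in}(\outvar,\theta_{\outvar}^{\star}) = J_{\outvar}\Hessian_{\outvar}J_{\outvar}^{\top}$ (invertible by full rank of $J_{\outvar}$ and positive definiteness of $\Hessian_{\outvar}$), chain-rule the parametric derivatives into $g_{\outvar}$, $J_{\outvar}d_{\outvar}$ and $\crossDeriv_{\outvar}J_{\outvar}^{\top}$, substitute $d_{\outvar}=-\Hessian_{\outvar}\adjoint_{\outvar}^{\star}$, and identify $P_{\outvar}=J_{\outvar}^{\top}(J_{\outvar}\Hessian_{\outvar}J_{\outvar}^{\top})^{-1}J_{\outvar}\Hessian_{\outvar}$ as an idempotent of rank $\dim(\Theta)$; your transpose conventions match the paper's, and your extra observation that $P_{\outvar}$ is the $\Hessian_{\outvar}$-orthogonal projection onto $\operatorname{range}(J_{\outvar}^{\top})$ is a correct refinement not stated there. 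For the second claim you take a genuinely different and shorter route: you note that the neighborhood hypothesis forces $G_{tot}(\outvar')=\outOBJ(\outvar',\innerpred_{\outvar'}^{\star})=\mathcal{F}(\outvar')$ locally, so the two differentiable functions share a gradient at $\outvar$ and \cref{prop:implicit_diff} finishes immediately. The paper instead differentiates the identity $\innerMap(\theta_{\outvar'}^{\star})=\innerpred_{\outvar'}^{\star}$ to obtain $\partial_{\outvar}\theta_{\outvar}^{\star}J_{\outvar}=\partial_{\outvar}\innerpred_{\outvar}^{\star}$, deduces $\crossDeriv_{\outvar}=DJ_{\outvar}\Hessian_{\outvar}$ for a finite-dimensional matrix $D$, and cancels terms algebraically; that route buys the structural identity $\crossDeriv_{\outvar}P_{\outvar}=\crossDeriv_{\outvar}$ explaining exactly how the projector disappears, whereas yours buys brevity. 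Both arguments are valid.
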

\cref{prop:connection_implicit_grad}, which is proven below, shows that, even when the parametric family is expressive enough to recover the optimal prediction function $\innerpred_{\outvar}^{\star}$ at a single value $\outvar$, the expression of the total gradient in \cref{eq:parametric_implicit_diff} using parametric implicit differentiation might generally differ from the one obtained using its functional counterpart. Indeed the projector $P_{\outvar}$, which has a rank equal to $\dim(\Theta)$, biases the adjoint function by projecting it into a finite dimensional space before applying the cross derivative operator. 
Only under a much stronger assumption on $\innerMap$, requiring it to recover the optimal prediction function $\innerpred_{\outvar}^{\star}$ in a neighborhood of the outer-level variable $\outvar$, both parametric and functional implicit differentiation recover the same expression for the total gradient. 
In this case, the projector operator aligns with the cross-derivative operator so that $\crossDeriv_{\outvar}P_{\outvar} = \crossDeriv_{\outvar}$. Finally, note that the expressiveness assumptions on $\tau$ 
made in \cref{prop:hessian_connection,prop:connection_implicit_grad} are only used here to discuss the connection with the parametric implicit gradient and are not required by the  method we introduce in \cref{sec:method}.

\begin{proof}[Proof of \cref{prop:connection_implicit_grad}]\label{proof:conn_with_param}
Here we want to show the connection between the \emph{parametric} gradient of the outer variable $\nabla_{\outvar} G_{tot}(\outvar)$ usually used in approximate differentiation methods and the \emph{functional} gradient of the outer variable $\outgrad$ derived from the functional bilevel problem definition in \cref{def:funcBO}. 
Recall the definition of the \emph{parametric} inner objective $G_{in}(\outvar,\theta):= \inOBJ\parens{\outvar, \innerMap(\theta)}$.
According to \cref{prop:hessian_connection}, we have the following relation
\begin{align*}
    \partial_{\theta}^2 G_{in}(\outvar,\theta^{\star}_{\outvar}) :=&
    J_{\outvar}\Hessian_{\outvar}J_{\outvar}^\top ~~~\text{with}~~~J_{\outvar}:=\partial_{\theta}\innerMap(\theta_{\outvar}^{\star}).
\end{align*}
By assumption, $J_{\outvar}$ has a full rank which matches the dimension of the parameter space $\Theta$. Recall from the assumptions of \cref{thm:implicit_function} that the Hessian operator $\Hessian_{\outvar}$ is positive definite by the strong convexity of the inner-objective $\inOBJ$ in the second argument. We deduce that $\partial_{\theta}^2 G_{in}(\outvar,\theta^{\star}_{\outvar})$ must be invertible, since, by construction, the dimension of $\Theta$ is smaller than that of the Hilbert space $\Hcal$ which has possibly infinite dimension. Recall from \cref{thm:implicit_function}, $\crossDeriv_{\outvar}:=\partial_{\outvar,\innerpred}\inOBJ(\outvar,\innerpred^{\star}_{\outvar})$ and the assumption that $\innerMap(\theta_{\outvar}^{\star}) = \innerpred_{\outvar}^{\star}$. We apply the parametric implicit function theorem to get the following expression of the Jacobian $\partial_{\outvar}\theta^{\star}_{\outvar}$:
\begin{align*}
	\partial_{\outvar}\theta^{\star}_{\outvar} := -\crossDeriv_{\outvar}J_{\outvar}^{\top}\parens{J_{\outvar}\Hessian_{\outvar}J_{\outvar}^{\top} }^{-1}. 
\end{align*}
Hence, differentiating the total objective $G_{tot}(\outvar) := G_{out}(\outvar,\theta_{\outvar}^{\star}) = \outOBJ\parens{\outvar, \innerMap(\theta_{\outvar}^{\star})}$ and applying the chain rule directly results in the following expression:
\begin{align}\label{eq:full_expression_parametric_grad}
    \nabla_{\outvar}G_{tot}(\outvar)=  g_{\outvar} - \crossDeriv_{\outvar}J_{\outvar}^{\top}\parens{J_{\outvar}\Hessian_{\outvar}J_{\outvar}^{\top} }^{-1}J_{\outvar} d_{\outvar}, 
\end{align}
with previously defined $g_{\outvar}:= \partial_\outvar \outOBJ(\outvar, \innerpred^\star_{\outvar})$ and $d_{\outvar}:= \partial_\innerpred \outOBJ\parens{\outvar,\innerpred_{\outvar}^\star}$.

We now introduce the operator $P_{\outvar} := J_{\outvar}^{\top}\parens{J_{\outvar}\Hessian_{\outvar}J_{\outvar}^{\top} }^{-1}J_{\outvar} C_{\outvar}$. The operator $P_{\outvar}$ is a projector as it satisfies $P_{\outvar}^2=P_{\outvar}$. Hence, using the fact that the Hessian operator is invertible, and recalling that the adjoint function is given by $\adjoint_{\outvar}^{\star} = -\Hessian_{\outvar}^{-1} \uppergrad_{\outvar}$, we directly get form \cref{eq:full_expression_parametric_grad} that:
\begin{align*}
    \nabla_{\outvar} G_{tot}(\outvar) := g_{\outvar} + \crossDeriv_{\outvar} P_{\outvar} \adjoint_{\outvar}^{\star}. 
\end{align*}

If we further assume that $\innerMap(\theta_{\outvar'}^{\star}) = \innerpred_{\outvar'}^{\star}$ holds for all $\outvar'$ in a neighborhood of $\outvar$, then differentiating with respect to $\outvar$ results in the following identity:
\begin{align*}
	\partial_{\outvar}\theta_{\outvar}^{\star} J_{\outvar} = \partial_{\outvar}\innerpred_{\outvar}^{\star}.
\end{align*}
Using the expression of $\partial_{\outvar}\innerpred_{\outvar}^{\star}$ from  \cref{eq:Jacobian_equation}, we have the following identity:
\begin{align*}
	-\partial_{\outvar}\theta_{\outvar}^{\star} J_{\outvar}\Hessian_{\outvar} = \crossDeriv_{\outvar}. 
\end{align*}
In other words, $\crossDeriv_{\outvar}$ is of the form $\crossDeriv_{\outvar}:= DJ_{\outvar}\Hessian_{\outvar}$ for some finite dimensional matrix $D$ of size $\dim(\Omega)\times \dim(\Theta)$. Recalling the expression of the total gradient, we can deduce the equality between \emph{parametric} and \emph{functional} gradients:
\begin{align*}
	\nabla_{\outvar} G_{tot}(\outvar) &= g_{\outvar} - \crossDeriv_{\outvar}J_{\outvar}^{\top}\parens{J_{\outvar}\Hessian_{\outvar}J_{\outvar}^{\top} }^{-1}J_{\outvar} d_{\outvar}\\
	&= g_{\outvar} - DJ_{\outvar}\Hessian_{\outvar}J_{\outvar}^{\top}\parens{J_{\outvar}\Hessian_{\outvar}J_{\outvar}^{\top} }^{-1}J_{\outvar} d_{\outvar}\\
	&= g_{\outvar} - DJ_{\outvar} d_{\outvar}\\ 
	&= g_{\outvar} - DJ_{\outvar}\Hessian_{\outvar} \Hessian_{\outvar}^{-1} d_{\outvar}\\
	&= g_{\outvar} + \crossDeriv_{\outvar} \adjoint_{\outvar}^{\star} = \outgrad.
\end{align*}
The first equality follows from the general expression of the total gradient $\nabla_{\outvar} G_{tot}(\outvar)$. In the second line we use the expression of $\crossDeriv_{\outvar}$ which then allows to simplify the expression in the third line. Then, recalling that the Hessian operator $\Hessian_{\outvar}$ is invertible, we get the fourth line. Finally, the result follows by using again the expression of $\crossDeriv_{\outvar}$ and recalling the definition of the adjoint function $\adjoint_{\outvar}^{\star}$.
\end{proof}

\subsection{Computational Cost and Scalability}
\label{sec:computational_cost}

The optimization of the prediction function $\hat{\innerpred}_{\outvar}$ in the inner-level optimization loop is similar to AID, although the total gradient computation differs significantly. Unlike AID, \cref{alg:func_diff_alg} does not require differentiating through the parameters of the prediction model when estimating the total gradient $\outgrad$. This property results in an improved cost in time and memory in most practical cases as shown in \cref{tab:table_complex} and \cref{fig:computational_cost}. More precisely, AID requires computing Hessian-vector products of size $p_{in}$, which corresponds to the number of hidden layer weights of the neural network $\hat{\innerpred}_{\outvar}$. While \emph{FuncID} only requires Hessian-vector products of size $d_v$, i.e.~the output dimension of $\hat{\innerpred}_{\outvar}$. In many practical cases, the network's parameter dimension $p_{in}$ is much larger than its output size $d_v$, which results in considerable benefits in terms of memory when using \emph{FuncID} rather than AID, as shown in \cref{fig:computational_cost} (left). Furthermore, unlike AID, the overhead of evaluating Hessian-vector products in \emph{FuncID} is not affected by the time cost for evaluating the prediction network. When $\hat{\innerpred}_{\outvar}$ is a deep network, such an overhead increases significantly with the network size, making AID significantly slower (\cref{fig:computational_cost} (right)).

\begin{table}[H]
\centering
$$
\begin{array}{lccc}
\hline
\text { Method } & \text { Time cost} & \text { Memory cost} \\
\hline
 \text{AID} &  \gamma(T_{L_{in}} + T_{h})   & \beta p_{in} + M_h \\
\text{\emph{FuncID}} &  \gamma T_{L_{in}} +(2+ \delta) T_{a} + T_{h}  & \beta d_{v} + M_{a}\\
\hline
\end{array}
$$
\caption{Cost in time and memory for performing a single total gradient estimation using either AID or \emph{FuncID} and assuming the prediction model is learned. 
{\bf{Time cost}}: $T_{h}$ and $T_a$ represent the time cost of evaluating both prediction and adjoint models $\innerpred$ and $\adjoint$, while $T_{in}$ is the time cost for evaluating the inner objective once the outputs of $\innerpred$ are computed. The factors $\gamma$ and $\delta$ are multiplicative overheads for evaluating hessian-vector products and gradient.
{\bf{Memory cost}}: $M_h$ and $M_a$ represent the memory cost of storing the intermediate outputs of $\innerpred$ and $\adjoint$, $p_{in}$ and $d_{v}$ are the memory costs of storing the Hessian-vector product for AID and \emph{FuncID} respectively and $\beta$ is a multiplicative constant that depends on a particular implementation.}
\label{tab:table_complex}
\end{table}
\begin{figure}
    \centering
    \includegraphics[width=0.33\linewidth]{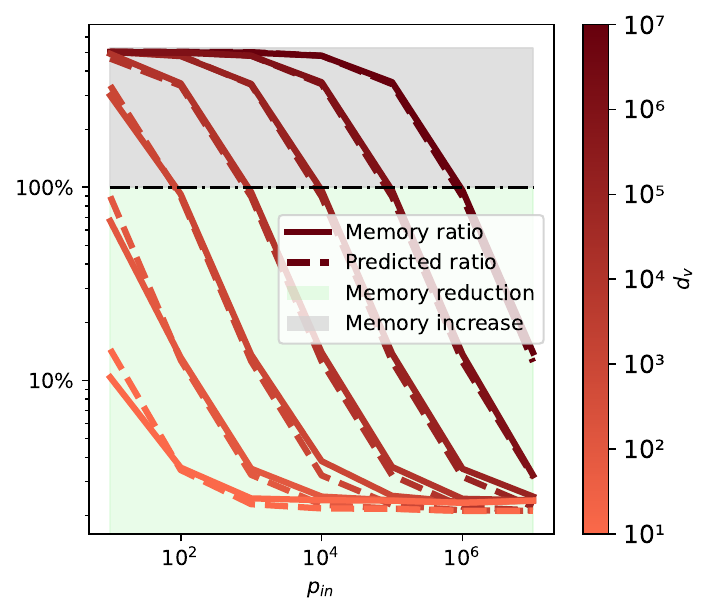}%
    \includegraphics[width=0.34\linewidth]{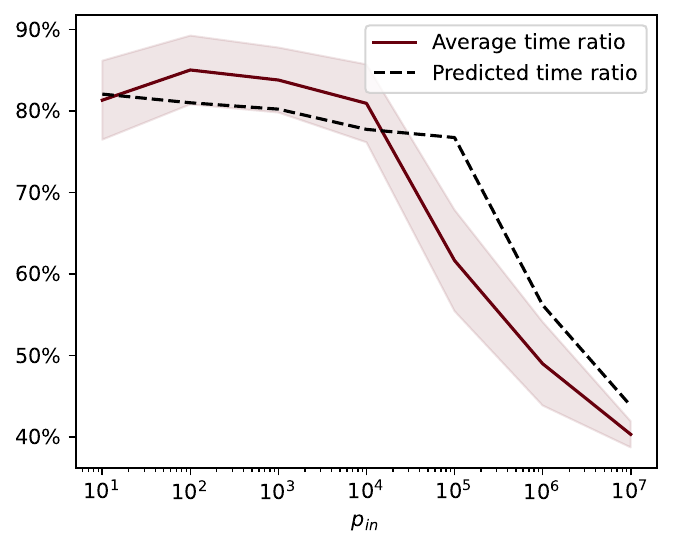}
    \caption{Memory and time comparison of a single total gradient approximation using \emph{FuncID} vs AID. {(\bf Left)} Memory usage ratio of \emph{FuncID} over AID vs inner model parameter dimension $p_{in}$, for various values of the output dimension $d_{v}$. {\bf (Right)} Time ratio of \emph{FuncID} over AID vs inner model parameter dimension $p_{in}$ averaged over several values of $d_v$ and $10^4$ evaluations. The continuous lines are experimental results obtained using a JAX implementation \citep{jax2018github} running on a GPU. The dashed lines correspond to theoretical estimates obtained using the algorithmic costs given in \cref{tab:table_complex} with $\gamma=12, \delta=2$ for time, and the constant factors in the memory cost fitted to the data.}
    \label{fig:computational_cost}
\end{figure}

\section{Additional Details about 2SLS Experiments}\label{appendixDSprites}
\par We closely follow the experimental setting of the state-of-the-art method DFIV \citep{DFIV}. The goal of this experiment is to learn a model $f_{\outvar}$ approximating the structural function $f_{struct}$ that accurately describes the effect of the treatment $t$ on the outcome $o$ with the help of an instrument $x$, as illustrated in Figure~\ref{fig:IV}.

\begin{figure}[htb]
\centering
\begin{tikzpicture}[remember picture,node distance=1cm,scale=0.5]
  \node (e) [draw, circle, dashed] {$\epsilon$};
  \node (t) [below=of e, draw, circle] {$t$};
  \node (o) [right=of t, draw, circle] {$o$};
  \node (x) [left=of t, draw, circle] {$x$};
  \draw [->] (x) -- node[below] {} (t);
  \draw [->] (e) -- (t);
  \draw [->] (e) -- node[right] {+} (o);
  \draw [->] (t) -- node[below] {$f_{struct}$} (o);
\end{tikzpicture}
\caption{The causal relationships between all variables in an Instrumental Variable (IV) causal graph, where $t$ is the treatment variable (\textit{dsprites} image), $o$ is the outcome (label in $\Real$), $x$ is the instrument and $\epsilon$ is the unobserved confounder}
\label{fig:IV}
\end{figure}
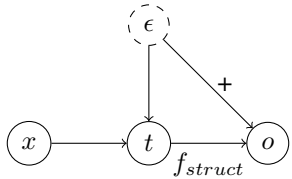

\subsection{Dsprites data.}\label{appendixDSprites_data}
We follow the exact same data generation procedure as in \citet[Appendix E.3]{DFIV}. From the \emph{dsprites} dataset \citep{dsprites}, we generate the treatment $t$ and outcome $o$ as follows:
\begin{enumerate}
    \item Uniformly sample latent parameters \textit{scale, rotation, posX, posY} from \emph{dsprites}.
    \item Generate treatment variable $t$ as
    $$
    t=\textit{Fig(scale, rotation, posX, posY})+{\lambda} .
    $$
    \item Generate outcome variable $o$ as
    $$
    o=\frac{\|A t\|_2^2-5000}{1000}+32(\text {\textit{posY} }-0.5)+\varepsilon .
    $$
\end{enumerate}

Here, function \textit{Fig} returns the corresponding image to the latent parameters, and ${\lambda}, \varepsilon$ are noise variables generated from ${\lambda} \sim \mathcal{N}(0.0,0.1 I)$ and $\varepsilon \sim \mathcal{N}(0.0,0.5)$. Each element of the matrix $A \in \mathbb{R}^{10 \times 4096}$ is generated from Unif$(0.0,1.0)$ and fixed throughout the experiment. From the data generation process, we can see that $t$ and $o$ are confounded by \textit{posY}. We use the instrumental variable $x=$\textit{(scale, rotation, posX)} $\in \mathbb{R}^3$, and figures with random noise as treatment variable $t$. The variable \textit{posY} is not revealed to the model, and there is no observable confounder. The structural function for this setting is
$$
f_{struct}(t)=\frac{\|A t\|_2^2-5000}{1000}.
$$
Test data points are generated from grid points of latent variables. The grid consist of 7 evenly spaced values for \textit{posX, posY}, 3 evenly spaced values for \textit{scale}, and 4 evenly spaced values for \textit{orientation}.

\subsection{Experimental details}\label{appendixDSprites_exp}
All results are reported over an average of 20 runs with different seeds on \textit{24GB NVIDIA RTX A5000} GPUs.

\paragraph{Feature maps.} 
As in the DFIV setting, we approximate the true structural function $f_{struct}$ with $f_{\outvar}=u^\top \psi_{\chi}(t)$ where $\psi_{\chi}$ is a feature map of the treatment $t$, $u$ is a vector in $\Real^{d_2}$, and $f_{\outvar}$ is parameterized by $\outvar = (u,\chi)$. To solve the inner-problem of the bilevel formulation in \cref{sec:IVapplication}, the inner prediction function $\innerpred_{\outvar}$ is optimized over functions of the form $\innerpred(x) = V \phi(x)$ where we denote $\phi$ the feature map of the instrument $x$ and $V$ is a matrix in $\Real^{d_1\times d_1}$. The feature maps $\psi_{\chi}$ and $\phi$ are neural networks (\cref{tab:networks}) that are optimized using empirical objectives from \cref{sec:emp_objectives} and synthetic \emph{dsprites} data, the linear weights $V$ and $u$ are fitted exactly at each iteration.

\paragraph{Choice of the adjoint function in \emph{FuncID}.} In the \textit{dsprites} experiment, we call \textit{linear \emph{FuncID}} the functional implicit diff.~method with a linear choice of the adjoint function. \emph{Linear \emph{FuncID}} uses an adjoint function of the form $\adjoint^\star_{\outvar}(x) = W \phi(x)$ with $W\in\Real^{d_1\times d_1}$. In other words, to find $\adjoint^\star_{\outvar}$, the features $\phi$ are fixed and only the optimal linear weight $W$ is computed in closed-form. In the \textit{FuncID} method, the adjoint function lives in the same function space as $\innerpred_{\outvar}$. This is achieved by approximating $\adjoint^\star_{\outvar}$ with a separate neural network with the same architecture as $\innerpred_{\outvar}$.

\begin{table}[h]
\begin{minipage}{.5\linewidth}
\centering
\begin{tabular}{cc} 
\textbf{Layer} & \text{instrument feature map } $\phi$ \\
\hline 
1 & $\operatorname{Input}(x)$ \\
2 & $\mathrm{FC}(3,256)$, $\mathrm{SN}$, $\mathrm{ReLU}$ \\
3 & $\mathrm{FC}(256,128)$, $\mathrm{SN}$, $\mathrm{ReLU}$, $\mathrm{LN}$ \\
4 & $\mathrm{FC}(128,128)$, $\mathrm{SN}$, $\mathrm{ReLU}$, $\mathrm{LN}$ \\
5 & $\mathrm{FC}(128,32)$, $\mathrm{SN}$, $\mathrm{LN}$, $\mathrm{ReLU}$ \\
\end{tabular}
\end{minipage}%
\begin{minipage}{.5\linewidth}
\centering
\begin{tabular}{cc}
\textbf{Layer} & \text{treatment feature map } $\psi_{\chi}$ \\
\hline 
1 & $\operatorname{Input}(t)$ \\
2 & $\mathrm{FC}(4096,1024)$, $\mathrm{SN}$, $\mathrm{ReLU}$ \\
3 & $\mathrm{FC}(1024,512)$, $\mathrm{SN}$, $\mathrm{ReLU}$, $\mathrm{LN}$ \\
4 & $\mathrm{FC}(512,128)$, $\mathrm{SN}$, $\mathrm{ReLU}$ \\
5 & $\mathrm{FC}(128,32)$, $\mathrm{SN}$, $\mathrm{LN}$, $\mathrm{Tanh}$ \\
\end{tabular}
\end{minipage}\\
\caption{Neural network architectures used in the \textit{dsprites} experiment for all models. The \emph{FuncID} model has an extra fully-connected layer $\mathrm{FC}(32,1)$ in both networks. LN corresponds to \textit{LayerNorm} and SN to \textit{SpectralNorm}.}
\label{tab:networks}
\end{table}

\paragraph{Hyper-parameter tuning.}

As in the setup of DFIV, for training all methods, we use 100 outer iterations ($\nbItersOuterSol$ in \cref{alg:func_diff_alg}), and 20 inner iterations ($\nbItersInnerSol$ in \cref{alg:func_diff_alg}) per outer iteration with full-batch. We select the hyper-parameters based on the best validation loss, which we obtain using a validation set with instances of all three variables $(t,o,x)$ \citep[Appendix A]{DFIV}. Because of the number of linear solvers, the grid search performed for AID is very large, so we only run it with one seed. For other methods, we run the grid search on 4 different seeds and take the ones with the highest average validation loss. Additionally, for the hyper-parameters that are not tuned, we take the ones reported in \citet{DFIV}.
\begin{itemize}
    \item {\bf Deep Feature Instrumental Variable Regression:} All DFIV hyper-parameters are set based on the best ones reported in \citet{DFIV}.
    \item {\bf Approximate Implicit Differentiation}: We perform a grid search over 5 linear solvers (two variants of gradient descent, two variants of conjugate gradient and an identity heuristic solver), linear solver learning rate $10^{-n}$ with $n\in \{3,4,5\}$, linear solver number of iterations $\{2, 10, 20\}$, inner optimizer learning rate $10^{-n}$ with $n\in \{2,3,4\}$, inner optimizer weight decay $10^{-n}$ with $n\in \{1,2,3\}$ and outer optimizer learning rate $10^{-n}$ with $n\in \{2,3,4\}$. 
    \item {\bf Iterative Differentiation}: We perform a grid search over number of ``unrolled'' inner iterations $\{2, 5\}$ (this is chosen because of memory constraints since ``unrolling'' an iteration is memory-heavy), number of warm-start inner iterations $\{18, 15\}$, inner optimizer learning rate $10^{-n}$ with $n\in \{2,3,4\}$, inner optimizer weight decay $10^{-n}$ with $n\in \{1,2,3\}$ and outer optimizer learning rate $10^{-n}$ with $n\in \{2,3,4\}$.
    \item {\bf Gradient Penalty}: The method is based on Eq. 5.1 in \citet{shen2023penalty}, for this single-level method we perform a grid search on the learning rate $10^{-n}$ with $n\in \{2,3,4,5,6\}$, weight decay $10^{-n}$ with $n\in \{1,2,3\}$, and the penalty weight $10^{-n}$ with $n\in \{0,1,2,3\}$. Since the method has only a single optimization loop, we increase the number of total iterations to 2000 compared to the other methods (100 outer-iterations and 20 inner iterations).
    \item {\bf Value Function Penalty}: The method is based on Eq. 3.2 in \citet{shen2023penalty}, for this method we perform the same grid search as for the Gradient Penalty method. However, since this method has an inner loop, we perform 100 outer iterations and perform a grid search on the number of inner iterations with $10$ and $20$.
    \item {\bf \emph{FuncID}}: We perform a grid search over the number of  iterations for learning the adjoint network $\{10, 20\}$, adjoint optimizer learning rate $10^{-n}$ with $n\in \{2,3,4,5,6\}$ and adjoint optimizer weight decay $10^{-n}$ with $n\in \{1,2,3\}$. The rest of the parameters are the same as for DFIV since the inner and outer models are almost equivalent to the treatment and instrumental networks used in their experiments. 
\end{itemize}

\subsection{Additional results}\label{appendixDSprites_res}
We run an additional experiment with $10k$ training points using the same setting described above to illustrate the effect of the sample size on the methods. \cref{fig:IV_results10k} shows that a similar conclusion can be drawn when increasing the training sample size from $5k$ to $10k$, thus illustrating the robustness of the obtained results.
\begin{figure}[H]
    \centering
    \includegraphics[width=0.325\linewidth]{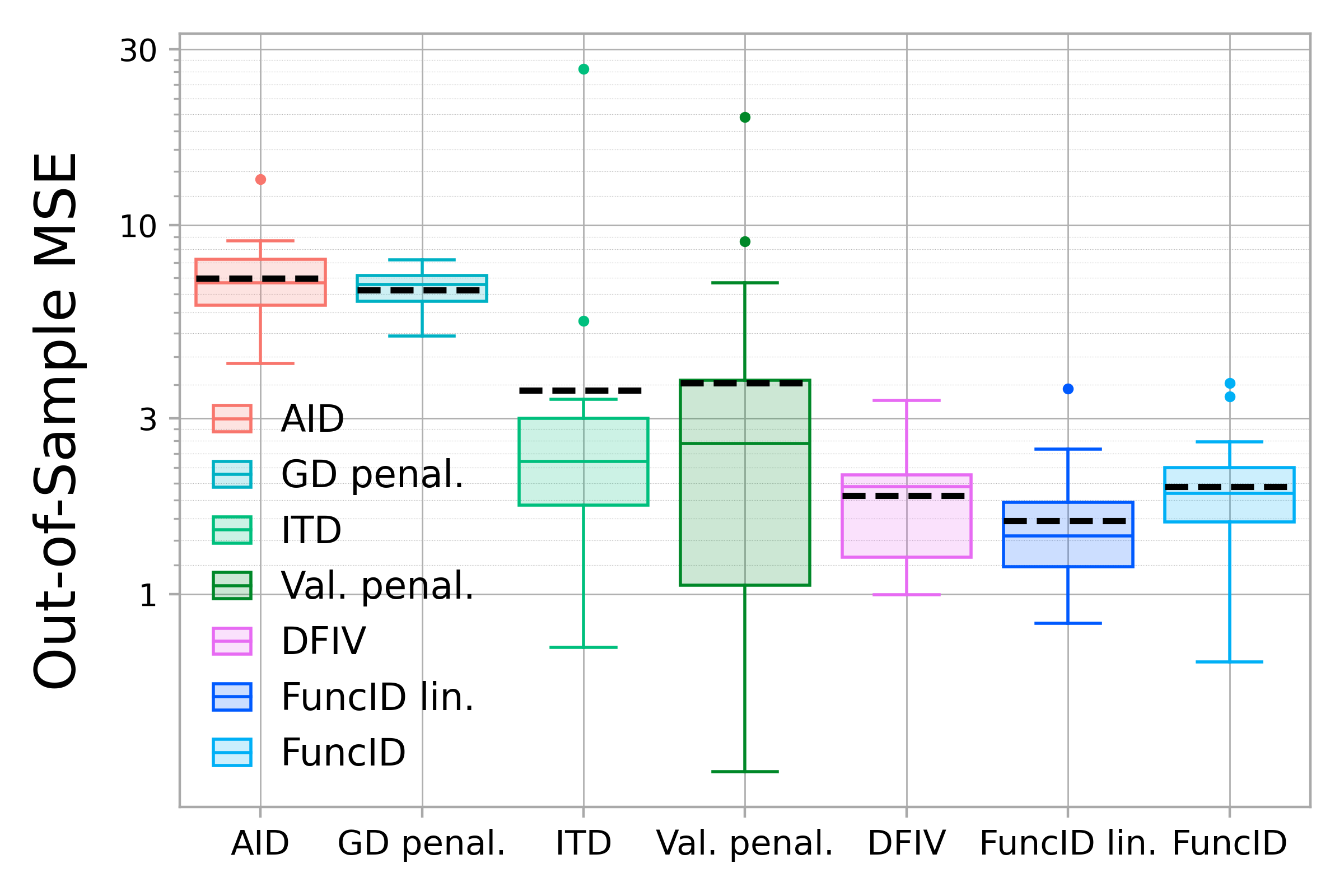}
    \includegraphics[width=0.325\linewidth]{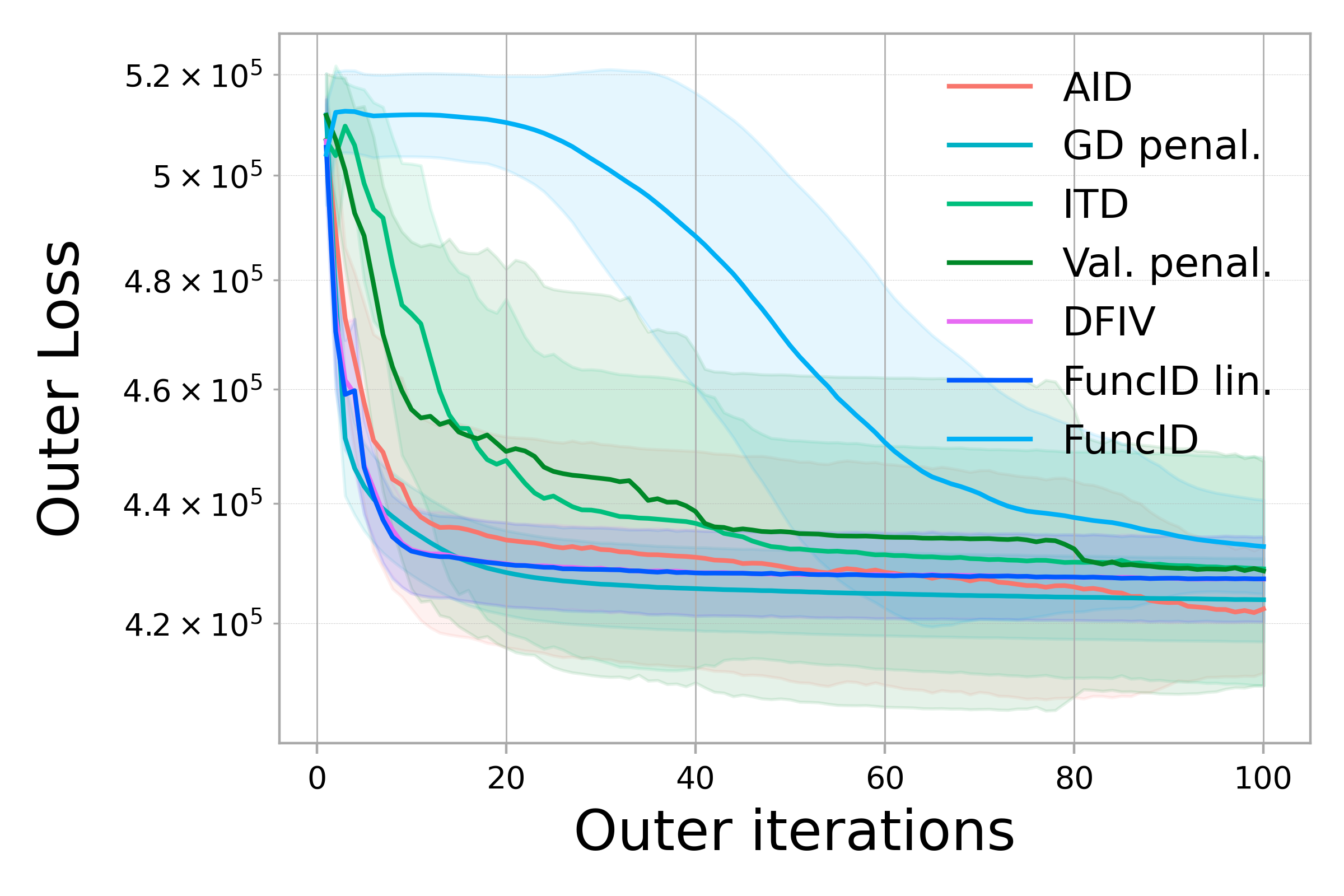}
    \includegraphics[width=0.325\linewidth]{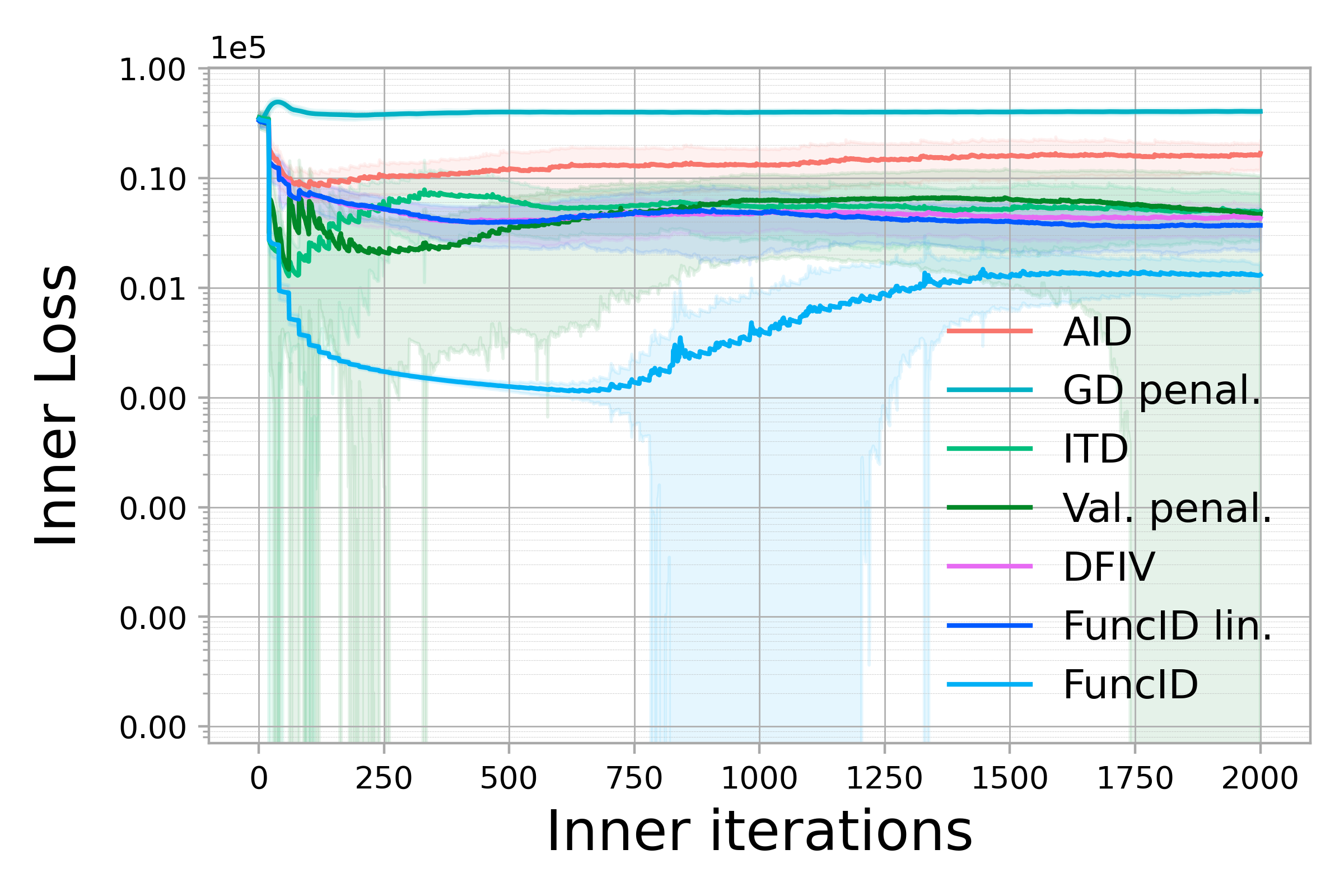}
    \caption{Performance metrics for Instrumental Variable (IV) regression. {{\bf (Left)} final test loss. (\bf Middle)} outer loss vs training iterations, {(\bf Right)} inner loss vs training iterations. All results are averaged over 20 runs with 10000 training samples and 588 test samples.}
    \label{fig:IV_results10k}
\end{figure}

\section{Additional Details about Model-Based RL Experiments}
\label{appendixRL}

\subsection{Closed-form expression for the adjoint function}\label{appendixRL_adj}
For the \emph{FuncID} method, we exploit the structure of the adjoint objective to obtain a closed-form expression of the adjoint function $\adjoint_{\outvar}^\star$. In the model-based RL setting, the unregularized adjoint objective has a simple expression of the form:
\begin{align}
    \empadjOBJ(\outvar,\adjoint,\hat{\innerpred}_{\outvar}, \mathcal{B}) =& \frac{1}{2\verts{\inBatch}} \sum_{(x,y)\in \inBatch}\Verts{\adjoint(x)}^2 \\
    &+ \frac{1}{\verts{\outBatch}} \adjoint(x)^{\top}\partial_{v}f(\hat{\innerpred}_{\outvar}(x),y).
\end{align}
The key observation here is that the same batches of data are used for both the inner and outer problems, i.e.~$\inBatch= \outBatch$. Therefore, we only need to evaluate the function $a$ on a finite set of points $x$ where $(x,y)\in \inBatch$. Without restricting the solution set of $\adjoint$ or adding regularization to $\empadjOBJ$, the optimal solution $\adjoint_{\outvar}^\star$ simply matches $-\partial_{v}f(\hat{\innerpred}_{\outvar}(x),y)$ on the set of points $x$ s.t. $(x,y)\in \inBatch$. Our implementation directly exploits this observation and uses the following expression for the total gradient estimation:
\begin{align}
    g_{out} = -\sum_{(x,y)\in\inBatch}\partial_{\outvar,v}f(\hat{\innerpred}_{\outvar}(x), r_{\outvar}(x),s_{\outvar}(x))\partial_{v}f(\hat{\innerpred}_{\outvar}(x),y).
\end{align}

\subsection{Experimental details}\label{appendixRL_exp}
As in the experiments of \citet{Nikishin:2022}, we use the \textit{CartPole} environment with 2 actions, 4-dimensional continuous state space, and optimal returns of 500. For evaluation, we use a separate copy of the environment. The reported return is an average of 10 runs with different seeds.

\paragraph{Networks.} We us the same neural network architectures that are used in the \textit{CartPole} experiment of \citet[Appendix D]{Nikishin:2022}. All networks have two hidden layers and \textit{ReLU} activations. Both hidden layers in all networks have dimension 32. In the misspecified setting with the limited model class capacity, we set the hidden layer dimension to 3 for the dynamics and reward networks.

\paragraph{Hyper-parameters.} We perform 200000 environment steps (outer-level steps) and set the number of inner-level iterations to $M=1$ for both OMD and \emph{FuncID}. for MLE, we perform a single update to the state-value function for each update to the model. For training, we use a replay buffer with a batch size of 256, and set the discount factor $\gamma$ to $0.99$. When sampling actions, we use a temperature parameter $\alpha=0.01$ as in \citet{Nikishin:2022}.  The learning rate for outer parameters $\outvar$ is set to $10^{-3}$. For the learning rate of the inner neural network and the moving average coefficient $\tau$, we perform a grid search over $\left\{10^{-4},10^{-3},3\cdot 10^{-3}\right\}$ and $\{5\cdot 10^{-3},10^{-2}\}$ as in \citet{Nikishin:2022}.

\subsection{Additional results}\label{appendixRL_results}
\paragraph{Time comparison.}
\cref{RL_time} shows the average reward on the evaluation environment as a function of training time in seconds. We observe that our model is the fastest to reach best performance both in the well-specified and misspecified settings.

\begin{figure}[H]
    \centering
    \includegraphics[width=0.45\linewidth]{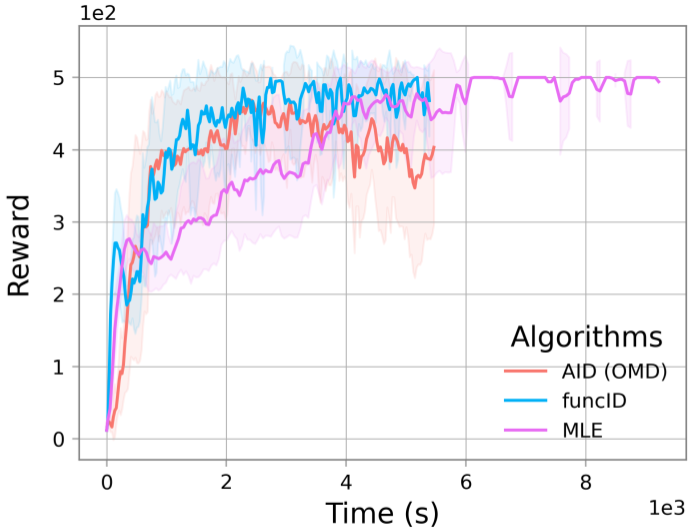}
    \includegraphics[width=0.45\linewidth]{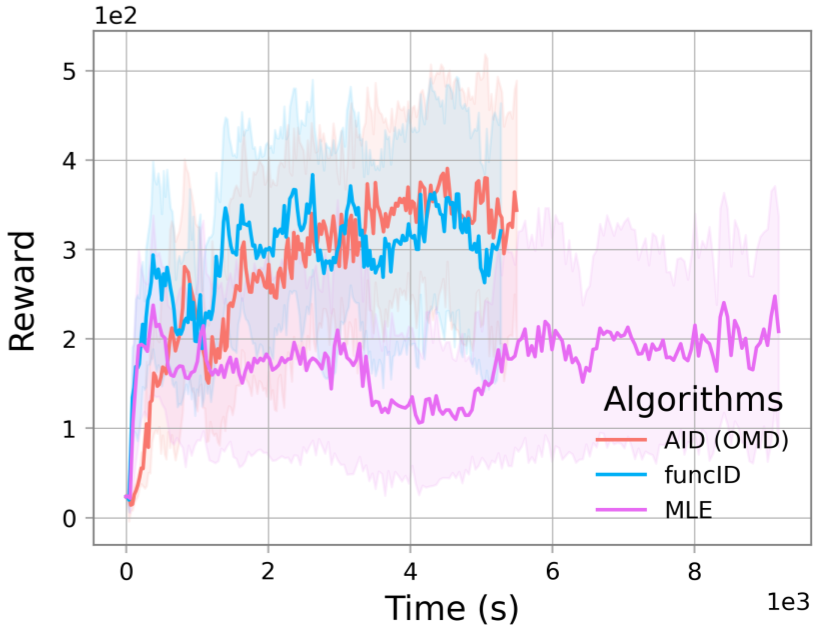}
    \caption{Average Reward on an evaluation environment vs. time in seconds on the \textit{CartPole} task. {(\bf Left)} Well-specified predictive model with 32 hidden units to capture the variability in the states dynamics. {(\bf Right)} misspecified predictive model with only 3 hidden states.}
    \label{RL_time}
\end{figure}

\paragraph{MDP model comparison.}
\cref{RL_pred_err} shows the average prediction error of different methods during training. The differences in average prediction error between the bilevel approaches (OMD, \emph{FuncID}) and MLE reflect their distinct optimization objectives and trade-offs. OMD and \emph{FuncID} focus on maximizing performance in the task environment, while MLE emphasizes accurate representation of all aspects of the environment, which can lead to smaller prediction errors but may not necessarily correlate with superior evaluation performance. We also observe that \emph{FuncID} has a stable prediction error in both settings meanwhile OMD and MLE exhibit some instability.

\begin{figure}[H]
    \centering
    \includegraphics[width=0.45\linewidth]{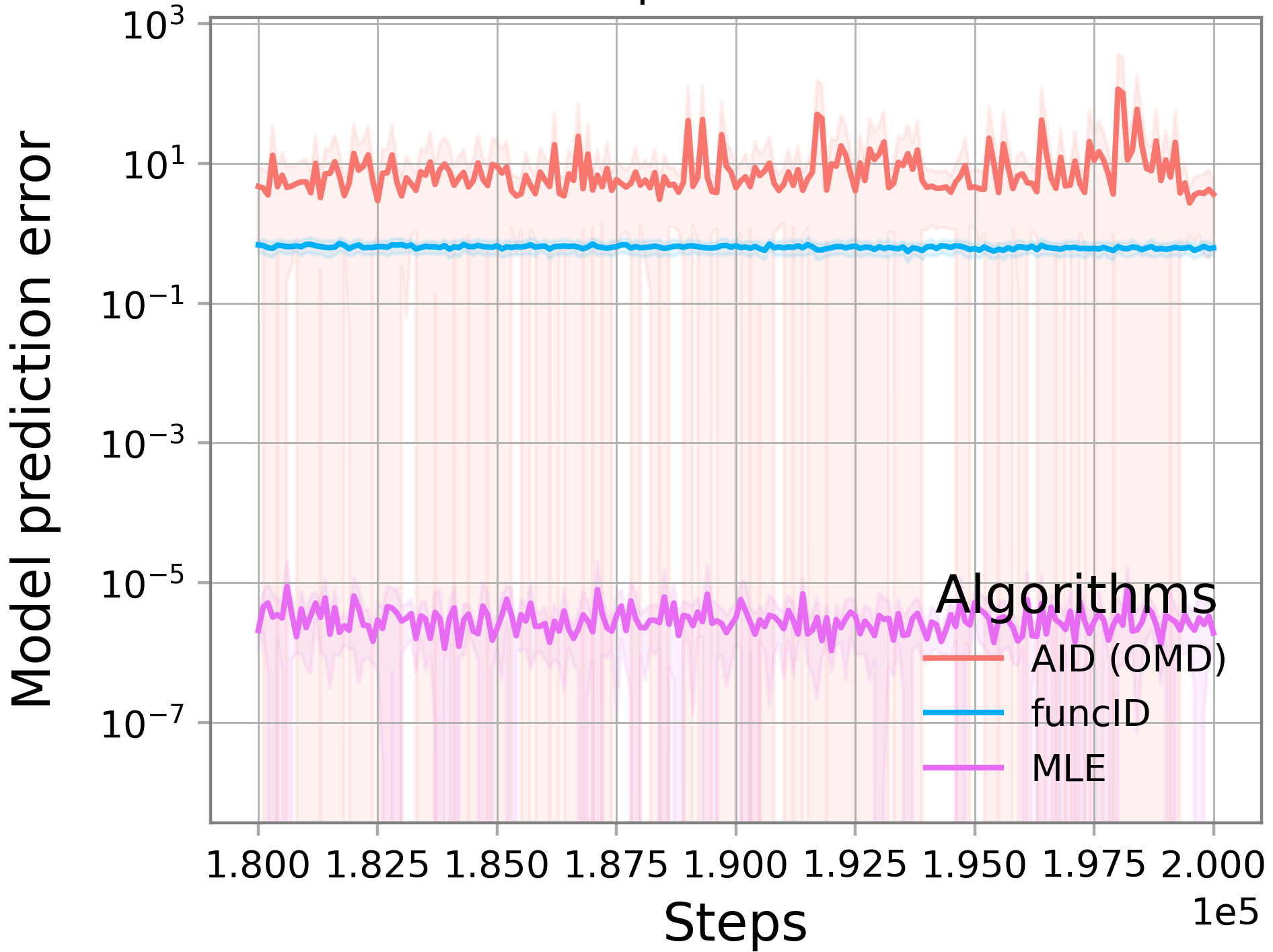}
    \includegraphics[width=0.45\linewidth]{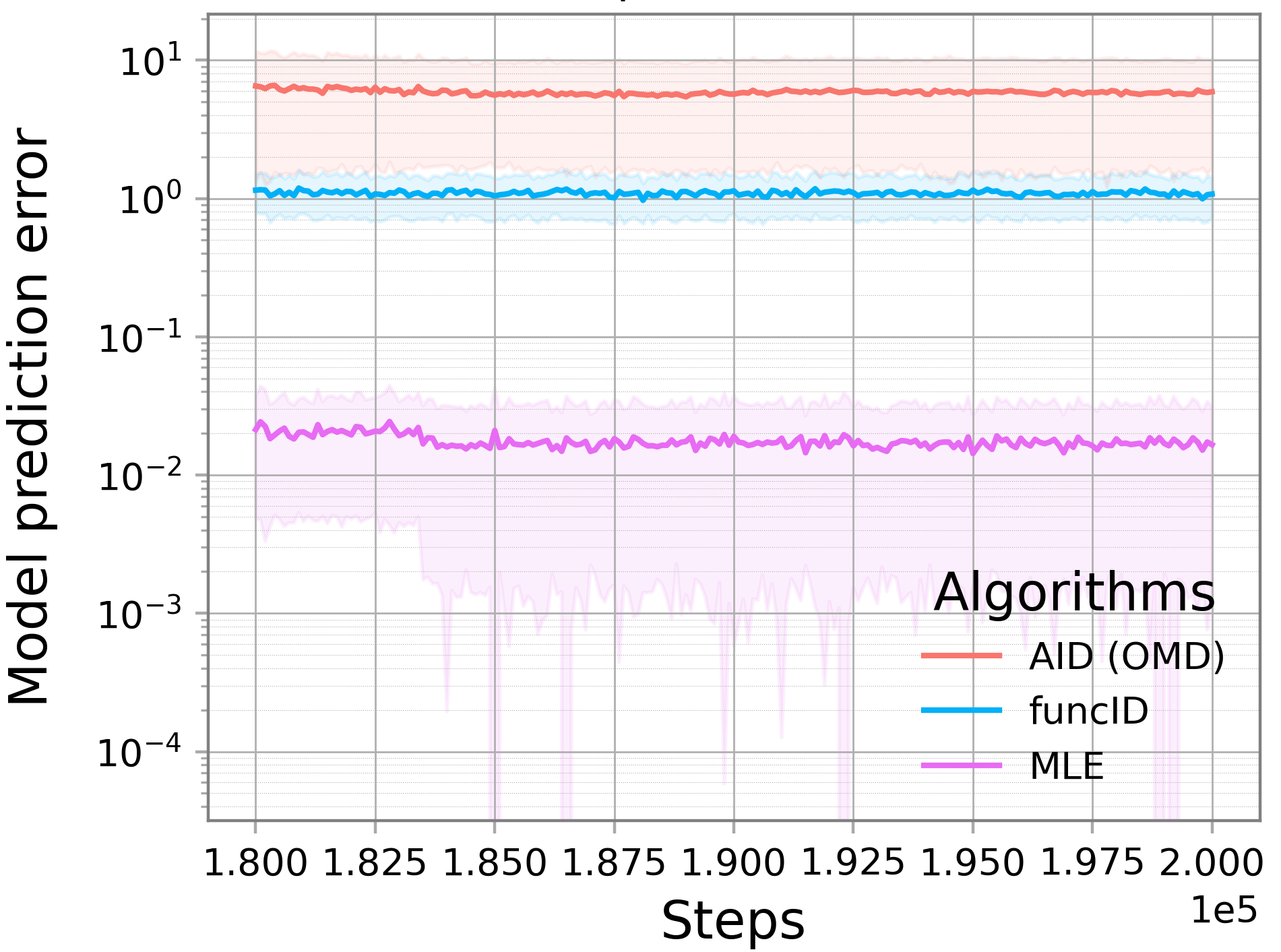}
    \caption{Average MDP model prediction error in the training environment vs. inner optimization steps on the \textit{CartPole} task. {(\bf Left)} Well-specified predictive model with 32 hidden units to capture the variability in the states dynamics. {(\bf Right)} misspecified predictive model with only 3 hidden states.}
    \label{RL_pred_err}
\end{figure}

\newpage
\section*{NeurIPS Paper Checklist}

\begin{enumerate}

\item {\bf Claims}
    \item[] Question: Do the main claims made in the abstract and introduction accurately reflect the paper's contributions and scope?
    \item[] Answer: \answerYes{} %
    \item[] Justification: All the claims made in the paper are either rigorously proven, shown experimentally or describe well-established facts in the literature.
    \item[] Guidelines:
    \begin{itemize}
        \item The answer NA means that the abstract and introduction do not include the claims made in the paper.
        \item The abstract and/or introduction should clearly state the claims made, including the contributions made in the paper and important assumptions and limitations. A No or NA answer to this question will not be perceived well by the reviewers. 
        \item The claims made should match theoretical and experimental results, and reflect how much the results can be expected to generalize to other settings. 
        \item It is fine to include aspirational goals as motivation as long as it is clear that these goals are not attained by the paper. 
    \end{itemize}

\item {\bf Limitations}
    \item[] Question: Does the paper discuss the limitations of the work performed by the authors?
    \item[] Answer: \answerYes{} %
    \item[] Justification: All strong assumptions used in the theoretical results are clearly stated, discussed, and put into perspective with other published work.
    \item[] Guidelines:
    \begin{itemize}
        \item The answer NA means that the paper has no limitation while the answer No means that the paper has limitations, but those are not discussed in the paper. 
        \item The authors are encouraged to create a separate "Limitations" section in their paper.
        \item The paper should point out any strong assumptions and how robust the results are to violations of these assumptions (e.g., independence assumptions, noiseless settings, model well-specification, asymptotic approximations only holding locally). The authors should reflect on how these assumptions might be violated in practice and what the implications would be.
        \item The authors should reflect on the scope of the claims made, e.g., if the approach was only tested on a few datasets or with a few runs. In general, empirical results often depend on implicit assumptions, which should be articulated.
        \item The authors should reflect on the factors that influence the performance of the approach. For example, a facial recognition algorithm may perform poorly when image resolution is low or images are taken in low lighting. Or a speech-to-text system might not be used reliably to provide closed captions for online lectures because it fails to handle technical jargon.
        \item The authors should discuss the computational efficiency of the proposed algorithms and how they scale with dataset size.
        \item If applicable, the authors should discuss possible limitations of their approach to address problems of privacy and fairness.
        \item While the authors might fear that complete honesty about limitations might be used by reviewers as grounds for rejection, a worse outcome might be that reviewers discover limitations that aren't acknowledged in the paper. The authors should use their best judgment and recognize that individual actions in favor of transparency play an important role in developing norms that preserve the integrity of the community. Reviewers will be specifically instructed to not penalize honesty concerning limitations.
    \end{itemize}

\item {\bf Theory Assumptions and Proofs}
    \item[] Question: For each theoretical result, does the paper provide the full set of assumptions and a complete (and correct) proof?
    \item[] Answer: \answerYes{} %
    \item[] Justification: All theoretical claims are rigorously proven. We provide a structured list of all assumptions used before the statement of a result and refer to a specific assumption whenever it is employed in the proof.
    \item[] Guidelines:
    \begin{itemize}
        \item The answer NA means that the paper does not include theoretical results. 
        \item All the theorems, formulas, and proofs in the paper should be numbered and cross-referenced.
        \item All assumptions should be clearly stated or referenced in the statement of any theorems.
        \item The proofs can either appear in the main paper or the supplemental material, but if they appear in the supplemental material, the authors are encouraged to provide a short proof sketch to provide intuition. 
        \item Inversely, any informal proof provided in the core of the paper should be complemented by formal proofs provided in appendix or supplemental material.
        \item Theorems and Lemmas that the proof relies upon should be properly referenced. 
    \end{itemize}

    \item {\bf Experimental Result Reproducibility}
    \item[] Question: Does the paper fully disclose all the information needed to reproduce the main experimental results of the paper to the extent that it affects the main claims and/or conclusions of the paper (regardless of whether the code and data are provided or not)?
    \item[] Answer: \answerYes{} %
    \item[] Justification: We provide the code to reproduce our experiments, which includes a \texttt{README} file with instructions. We also include the information about tuning, datasets and other experimental detail in the appendix.
    \item[] Guidelines:
    \begin{itemize}
        \item The answer NA means that the paper does not include experiments.
        \item If the paper includes experiments, a No answer to this question will not be perceived well by the reviewers: Making the paper reproducible is important, regardless of whether the code and data are provided or not.
        \item If the contribution is a dataset and/or model, the authors should describe the steps taken to make their results reproducible or verifiable. 
        \item Depending on the contribution, reproducibility can be accomplished in various ways. For example, if the contribution is a novel architecture, describing the architecture fully might suffice, or if the contribution is a specific model and empirical evaluation, it may be necessary to either make it possible for others to replicate the model with the same dataset, or provide access to the model. In general. releasing code and data is often one good way to accomplish this, but reproducibility can also be provided via detailed instructions for how to replicate the results, access to a hosted model (e.g., in the case of a large language model), releasing of a model checkpoint, or other means that are appropriate to the research performed.
        \item While NeurIPS does not require releasing code, the conference does require all submissions to provide some reasonable avenue for reproducibility, which may depend on the nature of the contribution. For example
        \begin{enumerate}
            \item If the contribution is primarily a new algorithm, the paper should make it clear how to reproduce that algorithm.
            \item If the contribution is primarily a new model architecture, the paper should describe the architecture clearly and fully.
            \item If the contribution is a new model (e.g., a large language model), then there should either be a way to access this model for reproducing the results or a way to reproduce the model (e.g., with an open-source dataset or instructions for how to construct the dataset).
            \item We recognize that reproducibility may be tricky in some cases, in which case authors are welcome to describe the particular way they provide for reproducibility. In the case of closed-source models, it may be that access to the model is limited in some way (e.g., to registered users), but it should be possible for other researchers to have some path to reproducing or verifying the results.
        \end{enumerate}
    \end{itemize}

\item {\bf Open access to data and code}
    \item[] Question: Does the paper provide open access to the data and code, with sufficient instructions to faithfully reproduce the main experimental results, as described in supplemental material?
    \item[] Answer: \answerYes{} %
    \item[] Justification: We provide an anonymized repository with extensively commented code and instructions on how to reproduce our main experiment.
    \item[] Guidelines:
    \begin{itemize}
        \item The answer NA means that paper does not include experiments requiring code.
        \item Please see the NeurIPS code and data submission guidelines (\url{https://nips.cc/public/guides/CodeSubmissionPolicy}) for more details.
        \item While we encourage the release of code and data, we understand that this might not be possible, so “No” is an acceptable answer. Papers cannot be rejected simply for not including code, unless this is central to the contribution (e.g., for a new open-source benchmark).
        \item The instructions should contain the exact command and environment needed to run to reproduce the results. See the NeurIPS code and data submission guidelines (\url{https://nips.cc/public/guides/CodeSubmissionPolicy}) for more details.
        \item The authors should provide instructions on data access and preparation, including how to access the raw data, preprocessed data, intermediate data, and generated data, etc.
        \item The authors should provide scripts to reproduce all experimental results for the new proposed method and baselines. If only a subset of experiments are reproducible, they should state which ones are omitted from the script and why.
        \item At submission time, to preserve anonymity, the authors should release anonymized versions (if applicable).
        \item Providing as much information as possible in supplemental material (appended to the paper) is recommended, but including URLs to data and code is permitted.
    \end{itemize}

\item {\bf Experimental Setting/Details}
    \item[] Question: Does the paper specify all the training and test details (e.g., data splits, hyperparameters, how they were chosen, type of optimizer, etc.) necessary to understand the results?
    \item[] Answer: \answerYes{} %
    \item[] Justification: Yes, we specify and discuss all experimental details in the appendix.
    \item[] Guidelines:
    \begin{itemize}
        \item The answer NA means that the paper does not include experiments.
        \item The experimental setting should be presented in the core of the paper to a level of detail that is necessary to appreciate the results and make sense of them.
        \item The full details can be provided either with the code, in appendix, or as supplemental material.
    \end{itemize}

\item {\bf Experiment Statistical Significance}
    \item[] Question: Does the paper report error bars suitably and correctly defined or other appropriate information about the statistical significance of the experiments?
    \item[] Answer: \answerYes{} %
    \item[] Justification: In all of our experiments we run multiple times and report the error bars.
    \item[] Guidelines:
    \begin{itemize}
        \item The answer NA means that the paper does not include experiments.
        \item The authors should answer "Yes" if the results are accompanied by error bars, confidence intervals, or statistical significance tests, at least for the experiments that support the main claims of the paper.
        \item The factors of variability that the error bars are capturing should be clearly stated (for example, train/test split, initialization, random drawing of some parameter, or overall run with given experimental conditions).
        \item The method for calculating the error bars should be explained (closed form formula, call to a library function, bootstrap, etc.)
        \item The assumptions made should be given (e.g., Normally distributed errors).
        \item It should be clear whether the error bar is the standard deviation or the standard error of the mean.
        \item It is OK to report 1-sigma error bars, but one should state it. The authors should preferably report a 2-sigma error bar than state that they have a 96\% CI, if the hypothesis of Normality of errors is not verified.
        \item For asymmetric distributions, the authors should be careful not to show in tables or figures symmetric error bars that would yield results that are out of range (e.g. negative error rates).
        \item If error bars are reported in tables or plots, The authors should explain in the text how they were calculated and reference the corresponding figures or tables in the text.
    \end{itemize}

\item {\bf Experiments Compute Resources}
    \item[] Question: For each experiment, does the paper provide sufficient information on the computer resources (type of compute workers, memory, time of execution) needed to reproduce the experiments?
    \item[] Answer: \answerYes{} %
    \item[] Justification: We report which GPUs we use for larger experiments and show how our method compares to similar methods in the section about computational cost.
    \item[] Guidelines:
    \begin{itemize}
        \item The answer NA means that the paper does not include experiments.
        \item The paper should indicate the type of compute workers CPU or GPU, internal cluster, or cloud provider, including relevant memory and storage.
        \item The paper should provide the amount of compute required for each of the individual experimental runs as well as estimate the total compute. 
        \item The paper should disclose whether the full research project required more compute than the experiments reported in the paper (e.g., preliminary or failed experiments that didn't make it into the paper). 
    \end{itemize}
    
\item {\bf Code Of Ethics}
    \item[] Question: Does the research conducted in the paper conform, in every respect, with the NeurIPS Code of Ethics \url{https://neurips.cc/public/EthicsGuidelines}?
    \item[] Answer: \answerYes{} %
    \item[] Justification: We reviewed the NeurIPS Code of Ethics and attest that our work conforms with it.
    \item[] Guidelines:
    \begin{itemize}
        \item The answer NA means that the authors have not reviewed the NeurIPS Code of Ethics.
        \item If the authors answer No, they should explain the special circumstances that require a deviation from the Code of Ethics.
        \item The authors should make sure to preserve anonymity (e.g., if there is a special consideration due to laws or regulations in their jurisdiction).
    \end{itemize}

\item {\bf Broader Impacts}
    \item[] Question: Does the paper discuss both potential positive societal impacts and negative societal impacts of the work performed?
    \item[] Answer: \answerNA{} %
    \item[] Justification: We present a new bilevel optimization method, which does not have any immediate societal impacts.
    \item[] Guidelines:
    \begin{itemize}
        \item The answer NA means that there is no societal impact of the work performed.
        \item If the authors answer NA or No, they should explain why their work has no societal impact or why the paper does not address societal impact.
        \item Examples of negative societal impacts include potential malicious or unintended uses (e.g., disinformation, generating fake profiles, surveillance), fairness considerations (e.g., deployment of technologies that could make decisions that unfairly impact specific groups), privacy considerations, and security considerations.
        \item The conference expects that many papers will be foundational research and not tied to particular applications, let alone deployments. However, if there is a direct path to any negative applications, the authors should point it out. For example, it is legitimate to point out that an improvement in the quality of generative models could be used to generate deepfakes for disinformation. On the other hand, it is not needed to point out that a generic algorithm for optimizing neural networks could enable people to train models that generate Deepfakes faster.
        \item The authors should consider possible harms that could arise when the technology is being used as intended and functioning correctly, harms that could arise when the technology is being used as intended but gives incorrect results, and harms following from (intentional or unintentional) misuse of the technology.
        \item If there are negative societal impacts, the authors could also discuss possible mitigation strategies (e.g., gated release of models, providing defenses in addition to attacks, mechanisms for monitoring misuse, mechanisms to monitor how a system learns from feedback over time, improving the efficiency and accessibility of ML).
    \end{itemize}
    
\item {\bf Safeguards}
    \item[] Question: Does the paper describe safeguards that have been put in place for responsible release of data or models that have a high risk for misuse (e.g., pretrained language models, image generators, or scraped datasets)?
    \item[] Answer: \answerNA{} %
    \item[] Justification: Our new optimization method does not have an identifiable risk for misuse.
    \item[] Guidelines:
    \begin{itemize}
        \item The answer NA means that the paper poses no such risks.
        \item Released models that have a high risk for misuse or dual-use should be released with necessary safeguards to allow for controlled use of the model, for example by requiring that users adhere to usage guidelines or restrictions to access the model or implementing safety filters. 
        \item Datasets that have been scraped from the Internet could pose safety risks. The authors should describe how they avoided releasing unsafe images.
        \item We recognize that providing effective safeguards is challenging, and many papers do not require this, but we encourage authors to take this into account and make a best faith effort.
    \end{itemize}

\item {\bf Licenses for existing assets}
    \item[] Question: Are the creators or original owners of assets (e.g., code, data, models), used in the paper, properly credited and are the license and terms of use explicitly mentioned and properly respected?
    \item[] Answer: \answerYes{} %
    \item[] Justification: Our code includes an open access licence in the \texttt{README} file.
    \item[] Guidelines:
    \begin{itemize}
        \item The answer NA means that the paper does not use existing assets.
        \item The authors should cite the original paper that produced the code package or dataset.
        \item The authors should state which version of the asset is used and, if possible, include a URL.
        \item The name of the license (e.g., CC-BY 4.0) should be included for each asset.
        \item For scraped data from a particular source (e.g., website), the copyright and terms of service of that source should be provided.
        \item If assets are released, the license, copyright information, and terms of use in the package should be provided. For popular datasets, \url{paperswithcode.com/datasets} has curated licenses for some datasets. Their licensing guide can help determine the license of a dataset.
        \item For existing datasets that are re-packaged, both the original license and the license of the derived asset (if it has changed) should be provided.
        \item If this information is not available online, the authors are encouraged to reach out to the asset's creators.
    \end{itemize}

\item {\bf New Assets}
    \item[] Question: Are new assets introduced in the paper well documented and is the documentation provided alongside the assets?
    \item[] Answer: \answerNA{} %
    \item[] Justification: We do not present new assets in this work.
    \item[] Guidelines:
    \begin{itemize}
        \item The answer NA means that the paper does not release new assets.
        \item Researchers should communicate the details of the dataset/code/model as part of their submissions via structured templates. This includes details about training, license, limitations, etc. 
        \item The paper should discuss whether and how consent was obtained from people whose asset is used.
        \item At submission time, remember to anonymize your assets (if applicable). You can either create an anonymized URL or include an anonymized zip file.
    \end{itemize}

\item {\bf Crowdsourcing and Research with Human Subjects}
    \item[] Question: For crowdsourcing experiments and research with human subjects, does the paper include the full text of instructions given to participants and screenshots, if applicable, as well as details about compensation (if any)? 
    \item[] Answer: \answerNA{} %
    \item[] Justification: We do not have crowdsourcing nor human subjects in our work.
    \item[] Guidelines:
    \begin{itemize}
        \item The answer NA means that the paper does not involve crowdsourcing nor research with human subjects.
        \item Including this information in the supplemental material is fine, but if the main contribution of the paper involves human subjects, then as much detail as possible should be included in the main paper. 
        \item According to the NeurIPS Code of Ethics, workers involved in data collection, curation, or other labor should be paid at least the minimum wage in the country of the data collector. 
    \end{itemize}

\item {\bf Institutional Review Board (IRB) Approvals or Equivalent for Research with Human Subjects}
    \item[] Question: Does the paper describe potential risks incurred by study participants, whether such risks were disclosed to the subjects, and whether Institutional Review Board (IRB) approvals (or an equivalent approval/review based on the requirements of your country or institution) were obtained?
    \item[] Answer: \answerNA{} %
    \item[] Justification: We do not have human subjects in our work.
    \item[] Guidelines:
    \begin{itemize}
        \item The answer NA means that the paper does not involve crowdsourcing nor research with human subjects.
        \item Depending on the country in which research is conducted, IRB approval (or equivalent) may be required for any human subjects research. If you obtained IRB approval, you should clearly state this in the paper. 
        \item We recognize that the procedures for this may vary significantly between institutions and locations, and we expect authors to adhere to the NeurIPS Code of Ethics and the guidelines for their institution. 
        \item For initial submissions, do not include any information that would break anonymity (if applicable), such as the institution conducting the review.
    \end{itemize}

\end{enumerate}

\end{document}